\documentclass[dvipsnames]{article}
\PassOptionsToPackage{sort&compress,numbers}{natbib}
\usepackage{iclr2025_preprint,times}

\newcommand\nnfootnote[1]{%
  \renewcommand\thefootnote{}\footnote{#1}%
  \addtocounter{footnote}{-1}%
}

\usepackage{caption}
\usepackage{varwidth}
\usepackage{graphicx}
\usepackage{amssymb}
\usepackage{amsthm}
\usepackage{bm}
\usepackage{soul}
\usepackage{gensymb}
\usepackage{enumerate}
\usepackage{booktabs}
\usepackage{multirow}
\usepackage{graphicx}
\usepackage{wrapfig}
\newcommand{\zeroth}{$0$\textsuperscript{th}}
\newcommand{\first}{$1$\textsuperscript{st}}
\newcommand{\second}{$2$\textsuperscript{nd}}

\newcommand{\nth}{$n$\textsuperscript{th}}

\newcommand{\zerob}{\mathbf{0}} 
\usepackage[]{xcolor} 
\usepackage[colorlinks=true, linkcolor=BrickRed, urlcolor=Blue, citecolor=Blue, anchorcolor=blue, backref=page]{hyperref}
\renewcommand*{\backref}[1]{}
\renewcommand*{\backrefalt}[4]{%
    \ifcase #1%
          \or [Cited on p.~#2.]%
          \else [Cited on p.~#2.]%
    \fi%
    }

\newcommand{\cbb}{\mathbf{c}}

\newcommand{\fb}{\mathbf{f}}
\newcommand{\gb}{\mathbf{g}}
\newcommand{\hb}{\mathbf{h}}

\newcommand{\pb}{\mathbf{p}}
\newcommand{\qb}{\mathbf{q}}

\newcommand{\xb}{\mathbf{x}}

\newcommand{\zb}{\mathbf{z}}

\newcommand{\Ab}{\mathbf{A}}

\newcommand{\Kb}{\mathbf{K}}

\newcommand{\Mb}{\mathbf{M}}

\newcommand{\Qb}{\mathbf{Q}}

\newcommand{\Vb}{\mathbf{V}}
\newcommand{\Wb}{\mathbf{W}}

\newcommand{\Ucal}{\mathcal{U}}

\newcommand{\Xcal}{\mathcal{X}}

\newcommand{\Zcal}{\mathcal{Z}}
\newcommand{\Zsupp}{\mathcal{Z}_\textnormal{supp}}

\newcommand{\hatZsupp}{\hat {\mathcal{Z}}_\textnormal{supp}}
\newcommand{\Xsupp}{\mathcal{X}_\textnormal{supp}}

\newcommand{\EE}{\mathbb{E}} %
\newcommand{\NN}{\mathbb{N}} %
\newcommand{\RR}{\mathbb{R}} %
\newcommand{\ZZ}{\mathbb{Z}} %

\newcommand*{\alphab}{\bm{\alpha}}
\newcommand*{\betab}{\bm{\beta}}

\def\va{{\bm{a}}}

\def\ve{{\bm{e}}}
\def\vf{{\bm{f}}}
\def\vg{{\bm{g}}}
\def\vh{{\bm{h}}}

\def\vm{{\bm{m}}}

\def\vv{{\bm{v}}}
\def\vw{{\bm{w}}}

\def\vz{{\bm{z}}}

\def\mA{{\bm{A}}}

\def\mC{{\bm{C}}}

\def\mM{{\bm{M}}}

\def\mP{{\bm{P}}}

\def\mW{{\bm{W}}}

\def\sR{{\mathbb{R}}}

\def\gA{{\mathcal{A}}}
\def\gB{{\mathcal{B}}}
\def\gC{{\mathcal{C}}}
\def\gD{{\mathcal{D}}}

\def\gX{{\mathcal{X}}}

\def\gZ{{\mathcal{Z}}}

\usepackage{comment}
\usepackage{amsthm,thmtools,thm-restate}
\RequirePackage{amsmath}
\usepackage{cleveref}
\RequirePackage{amssymb}
\RequirePackage{mathtools}
\ifx\proof\undefined 
\RequirePackage{amsthm}
\fi
\crefname{figure}{Fig.}{Figs.}
\crefname{definition}{Defn.}{Defns.}
\crefname{corollary}{Cor.}{Cors.}
\crefname{proposition}{Prop.}{Props.}
\crefname{theorem}{Thm.}{Thms.}
\crefname{remark}{Remark}{Remarks}
\crefname{principle}{Principle}{Principles}
\crefname{lemma}{Lemma}{Lemmata}
\crefname{claim}{Claim}{Claims}
\crefname{table}{Tab.}{Tabs.}
\crefname{section}{\S}{\S\S}
\crefname{subsection}{\S}{\S\S}
\crefname{subsubsection}{\S}{\S\S}
\crefname{assumption}{Asm.}{Asms.}
\crefname{appendix}{Appx.}{Appx.}
\crefname{equation}{Eq.}{Eqs.}
\crefname{example}{Example}{Examples}

\numberwithin{equation}{section}

\ifx\BlackBox\undefined
\newcommand{\BlackBox}{\rule{1.5ex}{1.5ex}}  %
\fi

\ifx\QED\undefined
\def\QED{~\rule[-1pt]{5pt}{5pt}\par\medskip}
\fi

\ifx\proof\undefined
\newenvironment{proof}{\par\noindent{\bf Proof\ }}{\hfill\BlackBox\\[2mm]}
\fi
\ifx\proofsketch\undefined

\fi

\theoremstyle{plain} %
\ifx\theorem\undefined
\newtheorem{theorem}{Theorem}
\numberwithin{theorem}{section}
\fi
\ifx\property\undefined

\fi
\ifx\corollary\undefined

\fi
\ifx\lemma\undefined
\newtheorem{lemma}[theorem]{Lemma}
\fi
\ifx\proposition\undefined
\newtheorem{proposition}[theorem]{Proposition}
\fi
\ifx\assum\undefined
\newtheorem{principle}[theorem]{Principle}
\fi

\theoremstyle{definition} %
\ifx\definition\undefined
\newtheorem{definition}[theorem]{Definition}
\fi
\ifx\assum\undefined
\newtheorem{assumption}[theorem]{Assumption}
\fi

\theoremstyle{remark} %
\ifx\remark\undefined
\newtheorem{remark}[theorem]{Remark}
\fi
\ifx\example\undefined
\newtheorem{example}[theorem]{Example}
\fi
\ifx\lemma\undefined
\newtheorem{lemma}[theorem]{Lemma}
\fi
\ifx\conjecture\undefined

\fi
\ifx\fact\undefined

\fi
\ifx\claim\undefined

\fi
\ifx\assum\undefined

\fi

\renewcommand{\mathbf}{\bm}

\newcommand{\supp}{\operatorname{supp}}

\usepackage{thm-restate}

\usepackage[toc,page,header]{appendix}
\usepackage{minitoc}
\newcommand{\changelinkcolor}[1]{\hypersetup{linkcolor=#1}}  
\AtBeginDocument{%
  
}

\title{Interaction Asymmetry: A General Principle for Learning Composable Abstractions}
\usepackage{authblk}

\hypersetup{linkcolor=black}

\author{
\textbf{Jack Brady}$^{*1,2}$
\quad
\textbf{Julius von K\"ugelgen}$^{3}$
\quad
\textbf{S\'ebastien Lachapelle}$^{4}$
\quad
\\
\hspace{-.35cm}
\textbf{Simon Buchholz}$^{1,2}$
\quad
\textbf{Thomas Kipf}\hspace{.04cm}$^{\dagger 5}$
\quad
\textbf{Wieland Brendel}$^{\dagger 1,2,6}$

$^1$ Max Planck Institute for Intelligent Systems, T\"ubingen 
\quad    $^2$ T\"ubingen AI Center
\quad    $^{3}$ ETH Z\"urich \\
\hspace{-1.43cm}
\quad    $^{4}$ Samsung - SAIT AI Lab, Montreal
\quad    $^{5}$ Google DeepMind
\quad    $^{6}$ ELLIS Institute, T\"ubingen
}

\iclrfinalcopy
\begin{document}
\doparttoc
\faketableofcontents%
\maketitle
\vspace{-0.75em}
\begin{abstract}
\vspace{-0.25em}
\looseness-1 
Learning disentangled representations of concepts and re-composing them in unseen ways is crucial for generalizing to out-of-domain situations. However, the underlying properties of concepts that enable such disentanglement and compositional generalization remain poorly understood. In this work, we propose the principle of \emph{interaction asymmetry} which states: ``Parts of the same concept have more complex interactions than parts of different concepts''. We formalize this via block diagonality conditions on the $(n+1)$\textsuperscript{th} order derivatives of the generator mapping concepts to observed data, where different orders of ``complexity'' correspond to different $n$. Using this formalism, we prove that interaction asymmetry enables \emph{both} disentanglement and compositional generalization. Our results unify recent theoretical results for learning concepts of objects, which we show are recovered as special cases with $n\!=\!0$ or $1$. We provide results for up to $n\!=\!2$, thus extending these prior works to more flexible generator functions, and conjecture that the same proof strategies generalize to larger $n$. Practically, our theory suggests that, to disentangle concepts, an autoencoder should penalize its latent capacity and the interactions between concepts during decoding. We propose an implementation of these criteria using a flexible Transformer-based VAE, with a novel regularizer on the attention weights of the decoder. On synthetic image datasets consisting of objects, we provide evidence that this model can achieve comparable object disentanglement to existing models that use more explicit object-centric priors.%
\end{abstract}
\vspace{-0.25em}

\nnfootnote{$^*$Correspondence to: \texttt{jack.brady@tue.mpg.de}. $^\dagger$Joint senior author.}
\let\thefootnote\relax\footnotetext{\hspace{.078cm} Code available at: \href{https://www.github.com/JackBrady/interaction-asymmetry}{github.com/JackBrady/interaction-asymmetry}}

\hypersetup{linkcolor=BrickRed}
\section{Introduction}
A core feature of human cognition is the ability to use abstract conceptual knowledge to generalize far beyond direct experience~\citep{tenenbaum2011how, Behrens2018WhatIA, mitchell2021abstraction,murphy2004big}.
For example, by applying abstract knowledge of the concept ``chair'', we can easily infer how to use a ``chair on a beach'', even if we have not yet observed this combination of concepts.
This feat is non-trivial and requires solving two key problems. Firstly, one must acquire an abstract, internal model of different concepts in the world. This implies learning a \emph{separate} internal representation of each concept from sensory observations. Secondly, these representations must remain valid when observations consist of novel compositions of concepts, e.g., ``chair'' and ``beach''. In machine learning, these two problems are commonly referred to as learning \emph{disentangled representations}~\citep{bengio2013representation,Higgins2018TowardsAD,scholkopf2021toward} and \emph{compositional generalization}~\citep{FODOR19883, lake2017building, greff2020binding,goyal2022inductive}.

Both problems are known to be challenging due to the issue of \emph{non-identifiability}~\citep{hyvarinen2023nonlinear}. Namely, many models can explain the same data equally well, but only some will learn representations of concepts which are disentangled and generalize compositionally. To guarantee \emph{identifiability} with respect to (w.r.t.) these criteria, it is necessary to incorporate suitable inductive biases into a model~\citep{hyvarinen1999nonlinear,locatello2019challenging,lachapelle2024additive}. These inductive biases, in turn, must reflect some underlying properties of the concepts which give rise to observed data. This raises a fundamental question: What properties of concepts enable learning models which provably achieve disentanglement and compositional generalization?

Many works aim to answer this question by studying properties enabling either disentanglement or compositional generalization \emph{in isolation}. This is insufficient, however, as disentanglement alone does not imply compositional generalization~\citep{montero2021the,schott2021visual,NEURIPS2022_41ca8a0e}, while compositional generalization requires first disentangling the concepts to be composed. Only a few studies investigate properties enabling \emph{both} disentanglement and compositional generalization~\citep{lachapelle2024additive,brady2023provably,wiedemer2024provable}. Yet, the properties proposed in these works are rather restrictive and specific to objects in simple visual scenes. There is growing evidence, however, that the principles humans use to learn conceptual knowledge are not concept-specific, but shared across different concepts (objects, attributes, events, etc.)~\citep{constantinescu2016organizing,Behrens2018WhatIA,Hawkins2018AFF}. This suggests there exist some \emph{general} properties of concepts which enable both disentanglement and compositional generalization.

\begin{figure}[t]
\vspace{-1.25em}
    \centering
    \includegraphics[width=\textwidth]{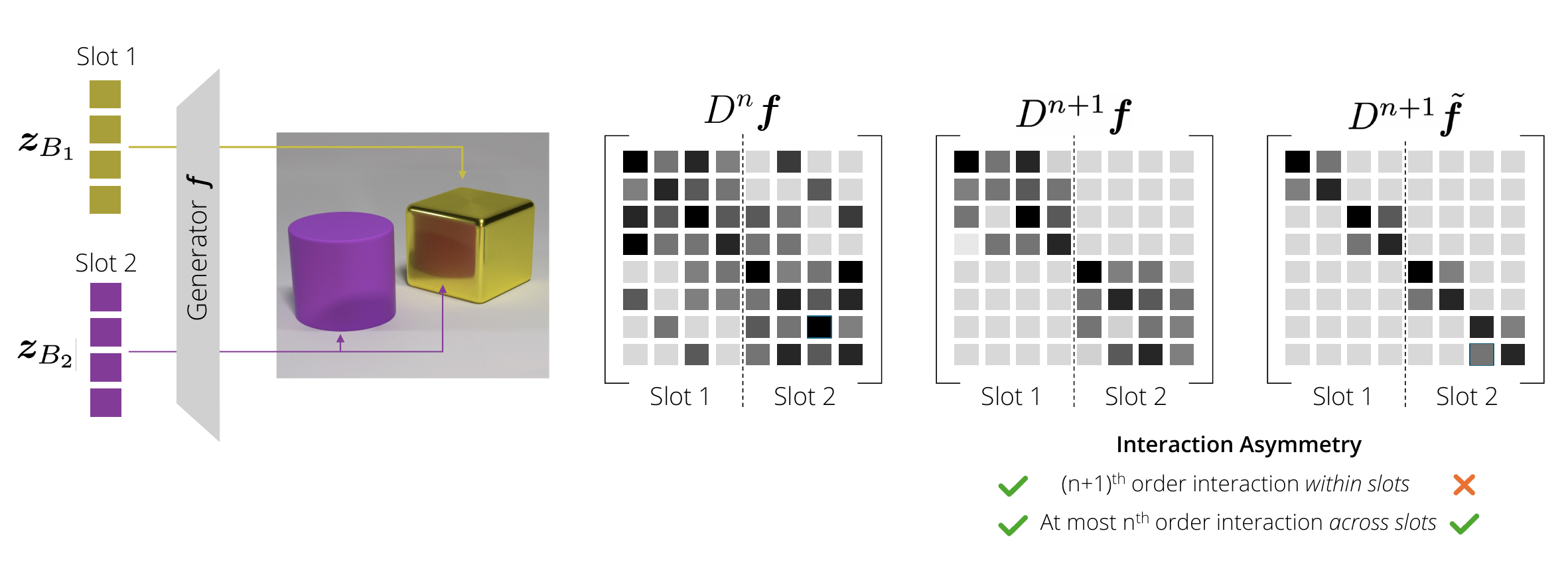}
    \vspace{-2em}
    \caption{\looseness-1 \small \textbf{Illustration of Interaction Asymmetry.} \textit{(Left)} Observations $\xb$ result from a generator~$\fb$ applied to latent slots $\zb_{B_k}$ that represent separate concepts. As indicated by the reflection of the cylinder upon the cube, slots can interact during generation. Our key assumption, interaction asymmetry,
    states that these interactions across 
    slots must be less complex than interactions within the same slot. \textit{(Right)} This is formalized by assuming block-diagonality \emph{across} but not \emph{within} slots for the $(n\!+\!1)$\textsuperscript{th} order derivatives of the generator, i.e., $D^{n+1}\fb$.}
    \label{fig:header}
    \vspace{-.25em}
\end{figure}

In this work, we seek to formulate such a general property for disentangling and composing concepts. We begin by aiming to deduce, from first principles, properties which are fundamental to concepts~(\cref{sec:principle}). From this, we arrive at the guiding principle of \emph{interaction asymmetry}~(\cref{principle:interaction_asymmetry}) stating: ``Parts of the same
concept have more complex interactions than parts of different concepts''. 
As illustrated in~\cref{fig:header}~(left), we define concepts as distinct groups, or \emph{slots}, of latent variables which generate the observed data~(\cref{sec:background_setup}). 
Interaction asymmetry is then formalized as a block-diagonality condition \emph{across} but \emph{not within} slots for the tensor of $(n+1)$\textsuperscript{th} order partial derivatives of the generator function~(\cref{as:interac_asym}), where $n$ determines the complexity of interactions, see~\cref{fig:header}~(right).

\textbf{Theory.} 
\looseness-1 Using this formulation, we prove that interaction asymmetry dually enables \emph{both} disentanglement (\cref{theorem:ident_result}) \textit{and} compositional generalization (\cref{theo:decoder_generalization}). We also show that our formalism provides a unifying framework for prior results of~\citet{brady2023provably} and~\citet{lachapelle2024additive}, by proving that the properties studied in these works for visual objects are special cases of our assumptions for $n\!=\!0$ and $1$, respectively. We provide results for up to $n\!=\!2$, thus extending these prior works to more general function classes, and conjecture that our results generalize to arbitrary $n\!\geq\!0$.

\textbf{Method.} 
Our theory suggests that to disentangle concepts, a model should  (i) enforce invertibility, without using more latent dimensions than necessary, and (ii) penalize interactions across slots during decoding. 
To translate these insights into a practical method, we leverage a VAE loss~\citep{kingma2014autoencoding} for (i), and observe that the Transformer architecture~\citep{vaswani2017attention} offers an approximate means to achieve (ii) since interactions are determined by the attention weights of the model. 
To this end, we introduce an inexpensive interaction regularizer for a cross-attention mechanism, which we incorporate, with the VAE loss, into a flexible Transformer-based model~(\cref{section:method}).

\textbf{Empirical Results.} 
We test this model's ability to disentangle concepts of visual objects on a Sprites dataset~\citep{spriteworld19} and on CLEVR6~\citep{johnson2017clevr}. We find that the model reliably learns disentangled representations of objects, improving performance over an unregularized Transformer~(\cref{sec:experiments}). Furthermore, we provide preliminary evidence that our regularized Transformer can achieve comparable performance to models with more explicit object-centric priors such as Slot Attention~\citep{locatello2020object} and Spatial Broadcast Decoders~\citep{watters2019spatial}.

\textbf{Notation.} 
\looseness-1 We write scalars in lowercase~($z$), vectors in lowercase bold~($\zb$), and matrices in capital bold~($\Mb$). $[K]$ stands for $\{1,2,..., K\}$. $D_i$~and $D^2_{i,j}$ stand for the first- and second-order partial derivatives with respect to (w.r.t.) $z_i$ and $(z_i,z_j)$, respectively. If $B \subseteq [n]$ and $\vz \in \sR^n$, $\zb_B$ denotes the subvector $(z_i)_{i\in B}$ indexed by $B$. A function is $C^n$ if it is $n$-times continuously differentiable.

\section{Background}
\vspace{-0.25em}
\label{sec:background}
\label{sec:background_setup}
We start by formalizing the core ideas of concepts, disentanglement, and compositional generalization, mostly following the setup of~\citet{lachapelle2024additive}. To begin, we assume that the observed data $\xb \in \Xcal \subset \RR^{d_x}$ results from applying a diffeomorphic generator $\fb:\Zcal\to\Xcal$ to latent vectors~$\zb \in \Zcal := \RR^{d_z}$, sampled from some distribution~$p_{\zb}$. Concepts underlying~$\xb$ (objects, attributes, events, etc.) are then modelled as $K$ disjoint groups or \emph{slots} of latents $\zb_{B_{k}}$ such that $\zb=(\zb_{B_1}, ..., \zb_{B_K})$, where $B_k\subseteq[d_z]$. We assume that $p_{\zb}$ is only supported on a subset $\Zsupp \subseteq \Zcal$ which  gives rise to observed data $\Xcal_\textnormal{supp} := \fb(\Zsupp)$.
This generative process can be summarized as:
\begin{equation}
\label{eqn:generative_process}
    \xb=\fb(\zb), \qquad \zb \sim p_{\zb}, \qquad \supp(p_{\zb}) = \Zcal_\textnormal{supp}\,.
\end{equation}
\looseness-1 
Next, consider a model $\hat\fb:\Zcal \to \mathbb{R}^{d_x}$ 
trained to be invertible from $\Xsupp$ to $\hatZsupp:=\hat\fb^{-1}(\Xsupp)$, whose inverse $\hat\fb^{-1}$ maps to a representation $\hat\zb\in \hatZsupp \subseteq \Zcal$. This model is said to learn a \emph{disentangled} representation of $\zb\in\Zsupp$ if each model slot $\hat\zb_{B_j}$ captures exactly one concept~$\zb_{B_k}$. 

\begin{definition}[Disentanglement]\label{def:slot_identifiability}
    Let $\fb:\Zcal \to \Xcal$ be a diffeomorphism and $\bar \Zcal \subseteq \Zcal$. A model $\hat\fb$ \emph{disentangles $\zb$ on $\bar \Zcal$ w.r.t.\ $\fb$} if there exist a permutation $\pi$ of~$[K]$ and slot-wise diffeomorphism $\hb=(\hb_1, \dots, \hb_K)$ with $\hb_k: \sR^{|B_{\pi(k)}|} \to \sR^{|B_{k}|}$ and such that for all $\zb\in\bar \Zcal$: 
    \begin{equation}
        \hat\fb\left(\hb_1\left(\zb_{B_{\pi(1)}}\right), \dots, \hb_K\left(\zb_{B_{\pi(K)}}\right)\right) = \fb(\zb)\,.
    \end{equation}%
\end{definition}%
In other words, a representation is disentangled if the model inverts the generator up to permutation and reparametrization of the slots. For \textit{compositional generalization}, we would like this to hold not only on $\Zsupp$ but also for arbitrary combinations of the slots therein. Namely, also on the set
\begin{equation}\label{eq:CPE_and_projection}
        \Zcal_\mathrm{CPE} := \Zcal_{1} \times \Zcal_{2} \times \dots \times \Zcal_{K}\,,\qquad \text{with} \qquad \Zcal_{k} := \{\zb_{B_{k}} \mid \zb \in \Zsupp \}
    \end{equation}
\looseness-1 where $\Zcal_k$ denote the marginal supports of $p_{\zb}$ and $\Zcal_\mathrm{CPE}$ the \emph{Cartesian-product extension}~\citep{lachapelle2024additive} of~$\Zsupp$.
\looseness-1 
In general, $\Zsupp$ is a subset of $\Zcal_\mathrm{CPE}$.
Thus, to generalize compositionally, a model must also achieve disentanglement ``out-of-domain'' on novel compositions of slots in~$\Zcal_\mathrm{CPE}$.
\begin{definition}[Compositional Generalization]\label{def:comp_gen}
    Let $\fb:\Zcal \to \Xcal$ be a diffeomorphism. A model $\hat\fb$ that disentangles $\zb$ on $\Zsupp$ w.r.t.\ $\fb$ (\cref{def:slot_identifiability}) \emph{generalizes compositionally} if it also disentangles $\zb$ on $\Zcal_\mathrm{CPE}$ w.r.t.\ $\fb$.
\end{definition}
\textbf{On the Necessity of Inductive Biases.} 
It is well known that only a small subset of invertible models achieve disentanglement on $\Zsupp$~\citep{locatello2019challenging,hyvarinen1999nonlinear} or generalize compositionally to $\Zcal_\mathrm{CPE}$~\citep{lachapelle2024additive}.
To provably achieve these goals (without explicit supervision), we thus need to further restrict the space of permissible models, i.e., place additional assumptions on the generative process in~\cref{eqn:generative_process}.
Such assumptions then translate into inductive biases on a model. To this end, 
the core challenge is formulating assumptions on $p_{\zb}$ or~$\fb$ which \textit{faithfully} reflect properties of concepts, while sufficiently restricting the problem. 

\textbf{Assumptions on $p_{\zb}$.}
\looseness-1 
To guarantee disentanglement, several assumptions on~$p_{\zb}$ have been proposed, such as conditional independence given an auxiliary variable~\citep{hyvarinen2019nonlinear,khemakhem2020variational}; particular temporal ~\citep{hyvarinen2016unsupervised,hyvarinen2017nonlinear,halva2020hidden,klindt2021towards}, 
spatial~\citep{halva2021disentangling,halva2024identifiable},
or other latent structures~\citep{kivva2022identifiability,kori2024identifiable}; 
multiple views~\citep{locatello2020weakly,gresele2020incomplete,zimmermann2021contrastive,von2021self,yao2024multi,brehmer2022weakly,ahuja2022weakly}; 
or interventional information~\citep{lippe2022citris,lippe2023icitris,lachapelle2022disentanglement,lachapelle2024nonparametric,buchholz2023learning,von2023nonparametric,wendong2024causal,
varici2024general}. 
While sufficient for disentanglement, such assumptions do not ensure compositional generalization.  
The latter requires the behavior of the generator on $\Zcal_\mathrm{CPE}$ to be determined solely by its behavior on $\Zsupp$~(see~\cref{def:comp_gen}). 
In the most extreme case, where the values of each slot $\zb_{B_{k}}$ are seen only once, $\Zsupp$ will be a one-dimensional manifold embedded in $\Zcal$, while $\Zcal_\mathrm{CPE}$ is always  $d_z$-dimensional. This highlights that generalizing from $\Zsupp$ to $\Zcal_\mathrm{CPE}$ is only possible if the form of the generator $\fb$ is restricted.

\textbf{Assumptions on $\fb$.}
Restrictions on $\fb$ which enable compositional generalization have been proposed by~\citet{dong2022first,wiedemer2024compositional,lippl2024does,ahuja2024provable}. Yet, these results rely on quite limited function classes and do not address disentanglement, assuming it to be solved a priori. Conversely, several works explore restrictions on $D\fb$ such as orthogonality~\citep{gresele2021independent,buchholz2022function,buchholz24a,horan_unsupdisent} or sparsity~\citep{moran2022identifiable,zheng2023generalizing,leemann2023post,lachapelle2023synergies}
which address disentanglement but not compositional generalization. More recently, \citet{brady2023provably} and \citet{lachapelle2024additive} proposed assumptions on~$\fb$ which enable both disentanglement and compositional generalization~\citep{wiedemer2024provable}. Yet, these assumptions are overly restrictive such that $\fb$ can only model limited types of concepts, e.g., non-interacting objects, and not more general concepts. We discuss these two works further in~\cref{sec:unifying_prior_work}.%

\section{The Interaction Asymmetry Principle}\label{sec:principle}
\vspace{-0.25em}
In this section, we attempt to formulate assumptions that enable disentanglement and compositional generalization, while capturing more general properties of concepts. To approach this, we take a step back and try to understand what are the defining properties of concepts. Specifically, we 
consider the question: \textit{Why are some structures in the world recognized as different concepts (e.g., apple vs.\ dog) and others as part of the same concept?} We propose an answer to this 
for concepts grounded in sensory data, such as objects (e.g., ``car"), events (e.g., ``making coffee"), or attributes (e.g., ``color").

Sensory-grounded concepts correspond to reoccurring visual or temporal patterns that follow an abstract template. They tend to be modular, such that independently changing one concept generally leaves the structure of other concepts intact~(\citeauthor{greff2015binding}, \citeyear{greff2015binding}, \S~4.1.1; \citeauthor{peters2017elements}, \citeyear{peters2017elements}).
For example, a car can change position without affecting the structure of the street, buildings, or people around it. Thus, different concepts appear, in some sense, to \textit{not interact}.

On the other hand, parts of the same concept do not seem to possess this modularity. Namely, arbitrarily changing one part of a concept without adjusting other parts is generally not possible without destroying its inherent structure. For example, it is not possible to change the position of the front half of a car, while maintaining something we would still consider a car, without also
changing the back half's position. Thus, parts of the same concept seem to \textit{interact}.  

\looseness-1 This may then lead us to answer our initial question with: Parts of the same concept interact, while different concepts do not. However, this is an oversimplified view, as parts of different concepts can, in fact, interact. For example, in~\cref{fig:header} we see the purple cylinder reflects upon and thus interacts with the golden cube. However, such interactions across concepts appear somehow simpler than interactions within a concept: whereas the latter can alter the concept's structure, the former generally will not. In other words, the complexity of interaction within and across concepts appears to be asymmetric. We formulate this as the following principle (see \cref{sec:app_rel_work_theory} for related principles).

\begin{principle}[Interaction Asymmetry]
\label{principle:interaction_asymmetry}  
Parts of the same concept have more complex interactions than parts of different concepts. 
\end{principle}
\looseness-1
To investigate the implications of~\cref{principle:interaction_asymmetry} for disentanglement and compositional generalization, we must first give it a precise formalization. To this end, we need a mathematical definition of the ``complexity of interaction'' between parts of concepts, i.e., groups of latents from the same or different slots. This can be formalized either through assumptions on the latent distribution $p_{\zb}$ or on the generator $\fb$. Since the latter are essential for compositional generalization, this is our focus.

Let us start by imagining what it would mean if two groups of latents $\zb_{A}$ and $\zb_{B}$ interact with \emph{no complexity}, i.e., have \emph{no interaction} within $\fb$. A natural way to formalize this is that $\zb_{A}$ and $\zb_{B}$ affect distinct output components~$f_l$. Mathematically, this is captured as follows.
\begin{definition} [At most \zeroth\ order/No interaction]\label{def:0th_order_interaction}
Let $\fb:\Zcal\to \Xcal$ be $C^1$, and let  $A, B\subseteq [d_z]$ be non-empty. $\zb_{A}$ and $\zb_{B}$ 
have \emph{no interaction} within $\fb$ if for all $\zb \in \Zcal$, and all $i \in A, j \in B$:
\begin{equation}
\label{eqn:0th-order}
D_{i}\fb
(\zb)\odot D_{j}\fb
(\zb) = \zerob\,.
\end{equation}
\end{definition}
To define the next order of interaction complexity, we assume that $\zb_{A}$ and $\zb_{B}$ \emph{do interact}, i.e., they affect the same output $f_l$
such that $D_{i}f_{l}(\zb)$ and $D_{j}f_{l}(\zb)$ are non-zero for some $i \in A$, $j \in B$. This interaction, however, should have the lowest possible complexity. A natural way to capture this is to say that $z_{i}$ can affect the same output $f_l$ as $z_{j}$ but cannot affect \emph{the way in which $f_l$ depends on~$z_{j}$}. Since the latter is captured by $D_{j}f_{l}(\zb)$, this amounts to a question about the \second\
order derivative $D^{2}_{i,j}f_{l}$. We thus arrive at the following definition for the next order of interaction complexity.

\begin{definition}[At most \first\ order interaction]
\label{def:2nd-order}
Let $\fb:\Zcal\to \Xcal$ be $C^2$, and let  $A,B\subseteq [d_z]$ be non-empty. $\boldsymbol{z}_{A}$ and $\boldsymbol{z}_{B}$ have \emph{at most \first\ order interaction} within $\fb$ if for all $\zb \in \Zcal$, and all $i \in A, j \in B$:
\begin{equation}
\label{eqn:2nd-order}
D^2_{i,j}\fb(\zb) = \zerob\,.
\end{equation}
\end{definition}
Using the same line of reasoning, we can continue to define interactions at increasing orders of complexity. For example, for  \emph{at most \second\ order interaction}, $z_{i}$ can affect the derivative $D_{j}f_{l}$, such that $D^2_{i,j}f_{l}(\zb) \neq 0$, but cannot affect the way in which $D_j f_l$ depends on any other $z_k$, i.e., $D^3_{i, j, k}f_l(\zb) = 0$. This leads to a general definition of interactions with \emph{at most \nth\ order} complexity.%
\begin{definition}[At most $n$\textsuperscript{th} order interaction]
\label{def:at_most_mth_order_interaction}
Let $n\geq 1$ be an integer. 
Let $\fb:\Zcal\to \Xcal$ be $C^{n+1}$. 
Let $A,B \subseteq [d_z]$ be non-empty.
$\zb_{A}$ and $\zb_{B}$ have \emph{at most $n$\textsuperscript{th} order interaction} within $\fb$ if for all $\zb \in \Zcal$, all $i \in A, j \in B$, and all multi-indices\footnote{\label{footnote:multi_index}A \emph{multi-index} is an ordered tuple $\alphab=(\alpha_1, \alpha_2, ...,\alpha_d)$ of non-negative integers $\alpha_i\in\NN_{0}$, with operations $|\alphab|:=\alpha_1+ \alpha_2+... + \alpha_d$, \; 
$\zb^{\alphab}:=z_1^{\alpha_1}z_2^{\alpha_2}...\, z_d^{\alpha_d}$, \; and \; $D^{\alphab}:=\frac{\partial^{\alpha_1}}{\partial z_1^{\alpha_1}}\frac{\partial^{\alpha_2}}{\partial z_2^{\alpha_2}} ... \, \frac{\partial^{\alpha_d}}{\partial z_d^{\alpha_d}}$, see~\cref{app:multi-index} for details.}
$\alphab\in\NN_0^{d_z}$ with $|\alphab|=n+1$ and $1\leq \alpha_i,\alpha_j$:
\begin{equation}
\label{eq:mth_order_interactions}
 D^{\alphab}\fb(\zb) = \zerob \,.
\end{equation}
\end{definition}
In other words, $\zb_A$ and $\zb_B$ have at most \nth\ order interaction within $\fb$ if all higher-than-\nth\ order cross partial derivatives w.r.t.\ at least one component of $\zb_A$ and of $\zb_B$ are zero everywhere. Otherwise, if the statement in~\cref{def:at_most_mth_order_interaction} does not hold for some $\zb\in\Zcal$, $i\in A$, and $j\in B$, we say that $\zb_{A}$ and~$\zb_{B}$ have \emph{\textit{$(n{+}1)$\textsuperscript{th}} order interaction at $\zb$} (and similarly for \first\ order interaction if~\cref{def:0th_order_interaction} does not hold). With these definitions, we can now provide a precise formalization of~\cref{principle:interaction_asymmetry}.
\begin{assumption}[Interaction asymmetry (formal)]\label{as:interac_asym}
\looseness-1
There exists $n\!\in\!\NN_0$ such that (i) any two distinct slots $\zb_{B_{i}}$ and $\zb_{B_{j}}$ have \textit{at most \nth\ order interaction} within $\fb$; and (ii) for all $\zb\in\Zcal$, all slots $\zb_{B_k}$ and all non-empty $A,B$ with $B_{k} = {A} \cup {B}$,
$\boldsymbol{z}_{A}$ and $\boldsymbol{z}_{B}$ have \textit{$(n{+}1)$\textsuperscript{th} order interaction} within~$\fb$ at $\zb$.
\end{assumption}%
\looseness-1 We emphasize that \cref{as:interac_asym}~(ii) does not state that \textit{all} subsets of latents within a slot must have $(n{+}1)$\textsuperscript{th} order interaction, but only that 
a slot cannot be \textit{split} into 
two parts with at most \nth\ order interaction, see~\cref{fig:header}~(right). In the special case of one-dimensional slots, this means that each component $z_{i}$ has $(n{+}1)$\textsuperscript{th} order interaction w.r.t.\ itself.
We also note that condition (ii) must hold \textit{uniformly} over $\gZ$, which resembles the notion of ``uniform statistical dependence'' among latents introduced by~\citet[Defn.~1]{hyvarinen2017nonlinear}. For further discussions of~\cref{as:interac_asym}, 
see \cref{sec:app_discuss_theory}.
\looseness-1  

\section{Theoretical Results}\label{sec:theory}
\looseness-1  
We now explore the theoretical implications of~\cref{as:interac_asym} for disentanglement on $\Zsupp$ and compositional generalization to $\Zcal_\mathrm{CPE}$. We provide our results for up to at most \second\ order interaction across slots. All results---i.e., at most \zeroth\ (no interaction), at most \first, and at most \second\ order interaction---use a unified proof strategy. Thus, we conjecture this strategy can also be used to obtain results for $n\!\geq\!3$. This, however, would require taking $(n+1)\!\geq\!4$ derivatives of compositions of multivariate functions, which becomes very tedious as $n$ grows. General \nth\ order results are thus left for future work.

\subsection{Disentanglement}\label{sec:disent_res}
We start by proving disentanglement on $\Zsupp$ for which we will need two additional assumptions.

\textbf{Basis-Invariant Interactions.}
First, one issue we must address is that our formalization of interaction asymmetry (\cref{as:interac_asym}) is not \emph{basis invariant}. Specifically, it is possible that all splits of a slot $\boldsymbol{z}_{B_{k}}$ have $(n{+}1)$\textsuperscript{th} order interactions while for $\boldsymbol{M}_{k}\boldsymbol{z}_{B_{k}}$, with $\boldsymbol{M}_{k}$ a slot-wise change of basis matrix, they have at most \nth\ order interactions.
Since $\boldsymbol{M}_{k}$ need not affect interactions across slots, interaction asymmetry may no longer hold in the new basis. This makes it ambiguous whether interaction asymmetry is truly satisfied, as $\boldsymbol{z}_{B_{k}}$ and $\boldsymbol{M}_{k}\boldsymbol{z}_{B_{k}}$ contain the same information. To address this, we assume interaction asymmetry holds for all slot-wise basis changes, or \emph{equivalent generators}.
\begin{definition}[Equivalent Generators]\label{def:equiv-gen}
A function ${\boldsymbol{\bar {f}}}: \mathbb{R}^{d_z} \rightarrow \mathbb{R}^{d_x}$ is said to be \textit{equivalent} to a generator $\boldsymbol{f}$ if for all $k \in [K]$ there exists an invertible matrix $\boldsymbol{M}_{k} \in \mathbb{R}^{|B_{k}| \times |B_{k}|}$ such that
\begin{equation}
\forall \boldsymbol{z} \in \mathbb{R}^{d_z}: \qquad {\boldsymbol{\bar {f}}}\left(\boldsymbol{M}_{1}\boldsymbol{z}_{B_{1}}, \dots, \boldsymbol{M}_{K}\boldsymbol{z}_{B_{K}}\right) = \boldsymbol{f}(\boldsymbol{z}_{B_{1}}, \dots, \boldsymbol{z}_{B_{K}}).
\end{equation}%
\end{definition}%
\textbf{Sufficient Independence.}
\looseness-1 
We require one additional assumption on $\fb$ which we call \emph{sufficient independence}. This assumption amounts to a linear independence condition on blocks of higher-order derivatives of $\fb$. Its main purpose is to remove redundancy in the derivatives of $\fb$ across slots, which can be interpreted as further constraining the interaction across slots during generation. In the case of $n\!=\!0$ (i.e., no interaction across slots), sufficient independence reduces to linear independence between slot-wise Jacobians of $\fb$~(\cref{def:0th_order_suff_ind}). This is satisfied automatically since $\fb$ is a diffeomorphism. When $n\!>\!0$, we require an analogous linear independence condition on higher order derivatives of $\fb$. Below, we present this for the case $n\!=\!2$, while for $n\!=\!1$, it is presented in~\cref{def:1st_order_suff_ind}.
\begin{restatable}[Sufficient Independence (\second\ Order)]{definition}{SuffIndepSecond}
\label{def:2nd_order_suff_ind}
A $C^3$ function  $\boldsymbol{f}: \mathbb{R}^{d_z} \rightarrow \mathbb{R}^{d_x}$ with at most \second\ order interactions across slots is said to have \emph{sufficiently independent} derivatives if
$\forall\boldsymbol{z} \in \mathbb{R}^{d_z}$:
\begin{align*}
    &\text{rank}\Biggl(\biggl[\bigl[ D_{i}\boldsymbol{f}(\boldsymbol{z})\bigr]_{i \in B_{k}}\ [D^2_{i, i'}\boldsymbol{f}(\boldsymbol{z})]_{i \in B_k,  i' \in [d_z]}\ [D^3_{i, i', i''}\boldsymbol{f}(\boldsymbol{z})]_{(i, i', i'') \in B^{3}_{k}}\biggr]_{k \in [K]}\Biggr) \\
    =&\ \sum_{k \in [K]} \left [ \text{rank}\left( \left[[D_{i}\boldsymbol{f}(\boldsymbol{z})]_{i \in B_{k}}\ [D^2_{i, i'}\boldsymbol{f}(\boldsymbol{z})]_{i \in B_k,  i' \in [d_z]}\right]\right) + \text{rank}\left( [D^3_{i, i', i''}\boldsymbol{f}(\boldsymbol{z})]_{(i, i', i'') \in B^{3}_{k}} \right ) \right ] \,.
\end{align*}%
\end{restatable}
With \cref{def:equiv-gen,def:2nd_order_suff_ind},
we can now state our theoretical results; 
see \cref{app:identifiability} for complete proofs.
\begin{restatable}[Disentanglement on $\Zsupp$]{theorem}{slotid}
\label{theorem:ident_result}
\looseness-1 
Let $n \in \{0, 1, 2\}$. Let $\fb: \Zcal \rightarrow \Xcal$ be a $C^{n+1}$ diffeomorphism satisfying interaction asymmetry (\cref{as:interac_asym}) for all equivalent generators (\cref{def:equiv-gen}) and sufficient independence (\cref{sec:suff_indep}). Let $\Zsupp$ be regular closed (\cref{def:regularly_closed}), path-connected (\cref{def:path_connected}) and aligned-connected (\cref{def:aligned-connected}). A  model ${\hat \fb}: \Zcal \rightarrow \mathbb{R}^{d_x}$ disentangles $\zb$ on $\Zsupp$ w.r.t.\ ${\fb}$ (\cref{def:slot_identifiability}) if it is \emph{(i)} a $C^{n+1}$ diffeomorphism between $\hatZsupp$ and $\Xsupp$ with \emph{(ii)} at most $\nth$ order interactions across slots (\cref{def:at_most_mth_order_interaction}) on~$\hatZsupp$.
\end{restatable}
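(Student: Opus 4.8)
Set $\vb := \hat\fb^{-1}\circ\fb$, so that $\hat\fb\circ\vb=\fb$ on $\Zsupp$ and $\vb$ is a $C^{n+1}$ diffeomorphism from $\Zsupp$ onto $\hatZsupp$; since $\Zsupp$ is regular closed, it suffices to establish all derivative identities on $\mathrm{int}(\Zsupp)$ and extend them to $\Zsupp$ by continuity. The entire proof reduces to showing that $\vb$ has the block form demanded by \cref{def:slot_identifiability}: that there is a permutation $\pi$ of $[K]$ such that $\vb_{B_k}$ depends only on $\zb_{B_{\pi(k)}}$, each restriction being a diffeomorphism $\hb_k$. Indeed, once this is known, $\hat\fb(\hb_1(\zb_{B_{\pi(1)}}),\dots,\hb_K(\zb_{B_{\pi(K)}}))=\hat\fb(\vb(\zb))=\fb(\zb)$ for all $\zb\in\Zsupp$, which is exactly disentanglement on $\Zsupp$.

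The first step is to differentiate $\hat\fb\circ\vb=\fb$ up to order $n+1$ via the multivariate Faà di Bruno formula. For $i\in B_k$ and $j\in B_{k'}$ with $k\neq k'$, \cref{as:interac_asym}(i) for $\fb$ forces every top-order cross partial $D^{\alphab}\fb$ (with $|\alphab|=n+1$, $\alpha_i,\alpha_j\ge 1$) to vanish, while \cref{def:at_most_mth_order_interaction} applied to $\hat\fb$ forces the analogous top-order cross-slot derivatives of $\hat\fb$ (evaluated at $\vb(\zb)$) to vanish. Substituting these into the chain-rule expansion leaves, for each such pair $(i,j)$, a system of polynomial identities in the derivatives of $\vb$ of orders $1,\dots,n+1$: the leading term is a product $D_{i_1}v_{c_1}\cdots D_{i_{n+1}}v_{c_{n+1}}$ contracted against a within-slot derivative of $\hat\fb$, and every remaining term pairs a strictly lower-order derivative of $\hat\fb$ with strictly higher-order derivatives of $\vb$. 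Sufficient independence (\cref{def:2nd_order_suff_ind} for $n=2$, \cref{def:1st_order_suff_ind} for $n=1$, and automatic for $n=0$) is precisely the linear-independence/rank condition on these blocks of derivatives that lets one decouple the leading term and conclude that the off-block Jacobian entries $D_i v_c$ vanish whenever $i$ and $c$ lie in slots that are not matched to one another. I would carry this out order by order: the $n=0$ case is just $D\fb=(D\hat\fb\circ\vb)\,D\vb$ with a column-support argument (recovering \citet{brady2023provably}), and $n=1,2$ follow the same skeleton with progressively heavier bookkeeping (recovering \citet{lachapelle2024additive} at $n=1$).

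Next I would globalize. The vanishing-Jacobian pattern, established pointwise, defines locally a bipartite relation between input and output slots; path-connectedness of $\Zsupp$ makes this relation constant, and aligned-connectedness (\cref{def:aligned-connected}) upgrades ``all relevant cross partials vanish on $\Zsupp$'' into genuine functional independence, so that $\vb_{B_a}$ is a function of $\zb_{\bigcup_{k\in S_a}B_k}$ alone for a partition $\{S_a\}_{a\in[K]}$ of $[K]$ (each input slot feeds at least one output slot, else $\fb$ would not depend on that slot, contradicting that $\fb$ is a diffeomorphism). Counting then forces $|S_a|=1$ for every $a$ (otherwise some output slot receives no input slot and the corresponding component of $\vb$ is constant, contradicting that $\vb$ is a diffeomorphism), so the assignment is a bijection and $\pi$ is its inverse. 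To exclude the opposite degeneracy --- one input slot $B_{\pi(k)}$ split across two output slots of $\hat\fb$ along $B_{\pi(k)}=A\cup B$ --- I would invoke \cref{as:interac_asym}(ii): such a split, together with the at-most-$n$\textsuperscript{th}-order interaction between those two output slots of $\hat\fb$ and another application of sufficient independence to cancel the lower-order chain-rule terms, would imply $\zb_A$ and $\zb_B$ have at most $n$\textsuperscript{th}-order interaction within $\fb$, contradicting (ii); the hypothesis that interaction asymmetry holds for all equivalent generators (\cref{def:equiv-gen}) is exactly what makes this robust to the linear reparametrizations of each slot implicitly performed by $\vb$. Finally, each $\hb_k:=\vb_{B_k}$ viewed as a function of $\zb_{B_{\pi(k)}}$ has a square invertible Jacobian block, hence is a diffeomorphism onto its image, and the assembly identity above completes the proof.

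\textbf{Main obstacle.} The crux is the middle step: pushing the order-$(n{+}1)$ Faà di Bruno expansion through for $n=1$ and $n=2$ and showing that sufficient independence really does force the off-block Jacobian entries to vanish. Unlike $n=0$, the lower-order chain-rule terms do not vanish individually, so one cannot argue term by term; the rank identity of \cref{def:2nd_order_suff_ind} must be used to show that the only way the full signed sum can vanish identically on an open set is for the leading product of $\vb$-derivatives to vanish. Bookkeeping which partial derivatives of $\hat\fb$ are cross-slot (hence zero) versus within-slot (hence free) across all set-partition terms of the chain rule, and making the aligned-connectedness argument rigorous on the non-product set $\Zsupp$, is where essentially all the work lies --- and presumably why the results stop at $n=2$.
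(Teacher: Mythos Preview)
Your proposal has the right high-level shape (local block structure via derivatives, then globalize with path- and aligned-connectedness), but the core local step has two genuine gaps.

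\textbf{Wrong direction of composition.} You set $\vb=\hat\fb^{-1}\circ\fb$ and differentiate $\fb=\hat\fb\circ\vb$. The Faà di Bruno expansion then expresses $D^{\alphab}\fb$ as a linear combination whose coefficient matrix $\Wb$ is built from derivatives of the \emph{outer} function $\hat\fb$, evaluated at $\vb(\zb)$. But sufficient independence (\cref{def:1st_order_suff_ind}, \cref{def:2nd_order_suff_ind}) is a rank condition on the derivatives of $\fb$, not of $\hat\fb$; nothing is assumed about $\hat\fb$ beyond at most $n$th-order cross-slot interactions. So the decoupling step you invoke (``sufficient independence\dots lets one decouple the leading term'') does not apply in your direction. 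The paper works with $\hb:=\fb^{-1}\circ\hat\fb$ and differentiates $\hat\fb=\fb\circ\hb$ on $\hatZsupp$, so that the $\Wb$ matrix is assembled from derivatives of $\fb$ and sufficient independence can be invoked to split $\Wb^{\fb}\vm^{\hb}=0$ into per-slot pieces $\Wb^{\fb,\mathrm{high}}_k\vm^{\hb,\mathrm{high}}_k=0$.

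\textbf{Role of the equivalent-generator hypothesis.} Even with the correct direction, sufficient independence by itself does \emph{not} yield ``off-block Jacobian entries vanish''. For multi-dimensional slots, the equation $\Wb^{\fb,\mathrm{high}}_k\vm^{\hb,\mathrm{high}}_k=0$ only says a certain polynomial in the entries of $D\hb$ lies in a null space; it does not pin down which entries are zero. The paper's argument is by contradiction: assume $D\hb_{B_k}(\zb^*)$ has nonzero columns in two distinct model-slot blocks, use a linear-algebra lemma to extract $|B_k|$ linearly independent such columns straddling at least two blocks, and assemble them into an invertible $\Ab_k$. The map $\bar\hb(\zb)=\Ab\zb$ (block-diagonal, with $\Ab_k$ in slot $k$) then produces an \emph{equivalent generator} $\bar\fb=\fb\circ\bar\hb$, and one checks---using the previously derived identity $\Wb^{\fb,\mathrm{high}}_k\vm^{\hb,\mathrm{high}}_k=0$ together with the linearity and block structure of $\bar\hb$---that $\bar\fb$ admits a split $B_k=J\cup J'$ with at most $n$th-order interaction at the point $\bar\zb=\Ab^{-1}\hb(\zb^*)$. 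This contradicts \cref{as:interac_asym}(ii) \emph{for the equivalent generator} $\bar\fb$. So the ``for all equivalent generators'' clause is not a robustness remark tacked on at the end; it is the mechanism that converts the null-space information into an honest block-permutation structure for $D\hb$. Your counting/partition argument and separate ``opposite degeneracy'' step do not substitute for this construction.

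The globalization paragraph (constant permutation via path-connectedness, then integrate along aligned paths to go from $D\hb$ block-permutation to $\hb$ slot-wise) is essentially correct and matches the paper.
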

\textbf{Intuition.}
\looseness-1 Assume for a contradiction that $\hb := \hat \fb^{-1} \circ \fb$ \emph{entangles} a ground-truth slot $\zb_{B_k}$, i.e., $D_{B_{k}}\hb(\zb)$ has multiple non-zero blocks. Because $\fb$ and $\hat \fb$ are invertible, $\hb$ must encode all of $\zb_{B_k}$ in $\hat \zb := \hb(\zb)$. Further, because $\fb$ satisfies interaction asymmetry, $\zb_{B_k}$ cannot be split into two parts with less than $(n{+}1)\textsuperscript{th}$ order interaction. Taken together, this implies that if $\hb$ entangles $\zb_{B_k}$, then there exist parts $\zb_{A}$ and $\zb_{B}$ of $\zb_{B_k}$, with $(n{+}1)\textsuperscript{th}$ order interaction, encoded in different model slots. Since the model $\hat\fb$ is constrained to have at most \nth\ order interactions \textit{across} slots, it cannot capture this interaction. Thus, the only way that $\hat \fb$ can satisfy (i) and (ii) without achieving disentanglement is if reparametrizing $\zb$ via $\hb$ removed the interaction between $\zb_{A}$ and $\zb_{B}$. This situation is prevented by assuming sufficient independence and that \cref{as:interac_asym} holds for all equivalent generators.

\textbf{Conditions on $\Zsupp$.} \looseness-1 The regular closed condition on $\Zsupp$ in \cref{theorem:ident_result} ensures that equality between two functions on $\gZ_\text{supp}$ implies equality of their derivatives, while the path-connectedness condition prevents the one-to-one correspondence between the slots of $\zb$ and those of $\hat \zb$ from changing across different
$\zb$~\citep{lachapelle2024additive}. The aligned-connectedness condition is novel and allows one to take integrals to go from \textit{local} to \textit{global} disentanglement (see \cref{sec:local2global} for details).

\subsection{Compositional Generalization}\label{sec:comp_gen_results}
\looseness-1 
We now show how \cref{as:interac_asym} also enables learning a model that generalizes compositionally~(\cref{def:comp_gen}), i.e., that equality of $\fb$ and $\hat\fb \circ \hb$  on $\Zsupp$ also implies their equality on $\Zcal_\mathrm{CPE}$. As discussed in \cref{sec:background}, such generalization is non-trivial and requires specific restrictions on a function class. A key restriction imposed by interaction asymmetry is that interactions across slots are limited to at most \nth\ order. In \cref{theorem:ident_result}, this prevents $\hat\fb \circ \hb$ from modelling interactions between parts of the same ground-truth slot in different model slots. We now aim to show that limiting the interactions across slots serves the dual role of making $\fb$ and $\hat\fb \circ \hb$ sufficiently ``predictable'', such that their behavior on $\Zcal_\mathrm{CPE}$ can be determined from $\Zsupp$. To do this, we will require a characterization of the form of functions with at most \nth\ order interactions across slots, which we prove in~\cref{thm:characterisation_block_n} to be:
    \begin{equation}
    \label{eq:characterisation_block_n_main}
        \fb(\zb)=\sum_{k=1}^K\fb^k\left(\zb_{B_k}\right)
        + \sum_{\alphab: |\alphab|\leq n} 
        \cbb_{\alphab} \zb^{\alphab}\,.
    \end{equation}
where $\cbb_{\alphab} \in\RR^{d_x}$.
In the first sum, slots are processed \emph{separately} by functions $\fb^k$, while in the second, they can interact more explicitly via polynomial functions of components from different slots, with degree determined by the order of interaction, $n$. With this, we can now state our result.
\begin{restatable}[Compositional Generalization]{theorem}{compgen}\label{theo:decoder_generalization}
Let $n \in \{0, 1, 2\}$.
Let $\Zsupp$ be regular closed (\cref{def:regularly_closed}).
Let $\fb: \Zcal \rightarrow \Xcal$ and ${\hat \fb}: {\Zcal} \rightarrow \mathbb{R}^{d_x}$ be $C^{3}$ diffeomorphisms
with at most $\nth$ order interactions across slots on $\Zcal$. 
If $\hat \fb$ disentangles $\zb$ on $\Zsupp$ w.r.t.\ $\fb$ (\cref{def:slot_identifiability}), then it generalizes compositionally (\cref{def:comp_gen}).
\end{restatable}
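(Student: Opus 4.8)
The plan is to turn the hypothesis into an equality of two explicitly structured maps and then propagate that equality from $\Zsupp$ out to $\Zcal_\mathrm{CPE}$ using regular closedness together with the structure. First I would fix the permutation $\pi$ and slot-wise diffeomorphism $\hb=(\hb_1,\dots,\hb_K)$ witnessing that $\hat\fb$ disentangles $\zb$ on $\Zsupp$ (\cref{def:slot_identifiability}); after relabelling ground-truth slots we may take $\pi=\mathrm{id}$, so the goal becomes $\hat\fb\circ\hb=\fb$ on $\Zcal_\mathrm{CPE}$. Using the characterisation \cref{thm:characterisation_block_n}, write $\fb(\zb)=\sum_k\fb^k(\zb_{B_k})+\sum_{|\alphab|\le n}\cbb_\alphab\zb^\alphab$ and likewise for $\hat\fb$; since within-slot monomials and the constant can be absorbed into the separable parts, for $n\le 1$ both $\fb$ and $\hat\fb$ are purely slot-separable, while for $n=2$ the only genuinely cross-slot terms are $\cbb_{ij}z_iz_j$ with $i,j$ in distinct slots.

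Next I would set $\Delta:=\hat\fb\circ\hb-\fb$, which is $C^3$ and vanishes on $\Zsupp$. Because $\hb$ is slot-wise, $\Delta$ decomposes as $\Delta(\zb)=\sum_k\eb^k(\zb_{B_k})+\sum_{k<l}C^{kl}(\zb_{B_k},\zb_{B_l})$ with $\eb^k=\hat\fb^k\circ\hb_k-\fb^k$ and $C^{kl}(\zb_{B_k},\zb_{B_l})=\sum_{i\in B_k,\,j\in B_l}[\hat\cbb_{ij}(\hb_k(\zb_{B_k}))_i(\hb_l(\zb_{B_l}))_j-\cbb_{ij}z_iz_j]$; that is, $\Delta$ is a sum of terms each depending on at most two slots, the two-slot ones having a rigid ``sum of products of slot-wise functions'' form. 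Since $\Zsupp$ is regular closed (\cref{def:regularly_closed}) it equals the closure of its interior, so $\Delta\equiv 0$ on $\Zsupp$ forces $D^{\alphab}\Delta\equiv 0$ on $\Zsupp$ for every $|\alphab|\le 3$ (this uses $C^3$ and $n+1\le 3$), and $\mathrm{int}(\Zsupp)$ contains an open product box $\prod_kU_k$ on which these identities can be freely differentiated.

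I would then treat the two-slot part and the diagonal part in turn. For $C^{kl}$: taking mixed partials $D_iD_j$ (and, for $n=2$, also third-order $D_iD_iD_j$, $D_iD_jD_j$) with $i\in B_k$, $j\in B_l$, $k\ne l$ isolates $C^{kl}$ on the left, while on the right only $D_iD_j(\cbb_{ij}z_iz_j)$ survives; reading the results in matrix form yields $(D\hb_k(\zb_{B_k}))^{\top}\hat\Cb_{kl}\,D\hb_l(\zb_{B_l})=\Cb_{kl}$ on $\Zsupp$ together with conditions annihilating the derivatives of $D\hb_k,D\hb_l$ in the directions where $\hat\Cb_{kl}$ acts nontrivially. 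Using invertibility of the slot Jacobians and that these identities already hold on the open box, one concludes that $C^{kl}$ is the zero function, so $\sum_{k<l}C^{kl}\equiv 0$ on $\Zcal_\mathrm{CPE}$. For the diagonal part, with the $C^{kl}$ eliminated, $D_a\Delta\equiv 0$ on $\Zsupp$ gives $D_a\eb^k\equiv 0$ on the marginal support $\Zcal_k$ for each $a\in B_k$, so each $\eb^k$ is locally constant on $\Zcal_k$; evaluating $\sum_k\eb^k\equiv 0$ on $\prod_kU_k$ pins the constants, and spreading this along $\Zsupp$ gives $\sum_k\eb^k\equiv 0$ on $\Zcal_\mathrm{CPE}$ — this last step is exactly the additive-decoder argument of \citet{lachapelle2024additive} and \citet{brady2023provably}, which already handles the full $n\le 1$ case. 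Combining, $\Delta\equiv 0$ on $\Zcal_\mathrm{CPE}$, i.e.\ $\hat\fb(\hb(\zb))=\fb(\zb)$ there, which is disentanglement on $\Zcal_\mathrm{CPE}$ (\cref{def:slot_identifiability}), hence compositional generalization (\cref{def:comp_gen}).

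The hard part will be the cross-slot step for $n=2$: a general non-affine slot-wise $\hb$ turns the quadratic couplings $\hat z_i\hat z_j$ into products of arbitrary slot-wise nonlinear functions, which a priori exhibit higher-than-second-order cross interactions, so one must show these are forced to cancel and — the genuinely new difficulty — that the derivative identities, known only on $\Zsupp$ (equivalently on its projections $\pi_{kl}(\Zsupp)$), persist on the strictly larger product $\Zcal_k\times\Zcal_l$. This rigidity is precisely what the low polynomial degree $n\le 2$ buys, and it is also why the $(n{+}1)$st-order chain-rule bookkeeping becomes unwieldy beyond $n=2$.
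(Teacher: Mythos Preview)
Your overall architecture is close to the paper's, but the step where you ``conclude that $C^{kl}$ is the zero function'' is wrong, and this breaks the subsequent diagonal argument. A concrete counterexample (which the paper records as a Remark after its key lemma): with one-dimensional slots, $\hb_1(z_1)=z_1+1$, $\hb_2(z_2)=z_2+1$, and $\hat\cbb_{12}=\cbb_{12}=c$, we get $C^{12}(z_1,z_2)=c(z_1+1)(z_2+1)-cz_1z_2=c(z_1+z_2+1)$, which is nonzero. Your mixed second and third derivative identities are all satisfied here (indeed $D_1D_2C^{12}=0$, $D_1D_1D_2C^{12}=0$), yet $C^{12}\not\equiv 0$. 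What those identities actually buy is weaker: from $D_iD_{i'}D_jC^{kl}=(D_iD_{i'}\hb_k)^\top\hat\Cb_{kl}\,D_j\hb_l=0$ and the fact that $\{D_j\hb_l\}_{j\in B_l}$ span, you get $\hat\Cb_{kl}^\top D_iD_{i'}\hb_k=0$, i.e.\ $\zb_{B_k}\mapsto \hb_k(\zb_{B_k})^\top\hat\Cb_{kl}$ is \emph{affine}. Hence $C^{kl}$ is affine in $\zb_{B_k}$ for each fixed $\zb_{B_l}$ (and symmetrically), but not zero. Because the residual additive part of $C^{kl}$ can be absorbed by the $\eb^k$'s, splitting $\Delta$ into $\sum_k\eb^k$ and $\sum_{k<l}C^{kl}$ and killing the pieces separately cannot work; your step ``with the $C^{kl}$ eliminated, $D_a\Delta\equiv 0$ gives $D_a\eb^k\equiv 0$'' fails since $D_a\Delta=D_a\eb^k+\sum_{l\neq k}D_aC^{kl}$ and the second sum need not vanish.

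The paper avoids this by never separating the pieces. It proves two structural facts: (i) $L(\zb):=\sum_k\bigl(\fb^k(\zb_{B_k})-\hat\fb^{\pi(k)}(\hb_k(\zb_{B_k}))\bigr)$ is affine on $\Zsupp$ (equivalently, $\sum_k\eb^k$ is affine, not zero), and (ii) for each pair $k\neq k'$, the map $\zb_{B_k}\mapsto (\hb_k(\zb_{B_k})\otimes v)\hat\Ab_{kk'}$ is affine for every $v$. These come from exactly your third-order derivative computation. Combined with a uniqueness-of-extension lemma for the representation~\eqref{eq:characterisation_block_n_main}, they give that for fixed values of the other slots, $\zb_{B_l}'\mapsto \Delta(e^{\zb}(\zb_{B_l}'))$ is affine on $\Zcal_l$. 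Since $\Delta$ vanishes on an open box inside $\Zsupp$, an affine function vanishing on an open set is identically zero, so one can propagate slot by slot from a product box to all of $\Zcal_{\mathrm{CPE}}$. If you reorganize your argument to conclude ``$\Delta$ is slot-wise affine'' rather than ``$C^{kl}=0$'', the rest of your plan (including the reduction to the additive case for $n\le 1$) goes through essentially as in the paper.
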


\begin{wrapfigure}[9]{r}{0.5\textwidth}
\vspace{-.85cm}
  \begin{center}
    \includegraphics[width=\linewidth]{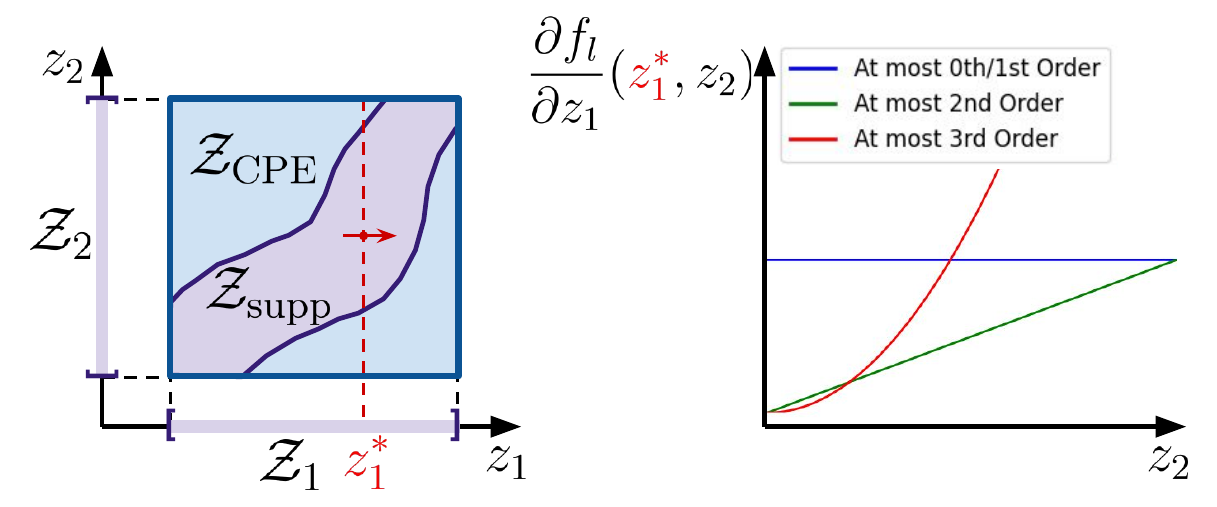}
  \end{center}
  \vspace{-0.25cm}
  \caption{See intuition for Theorem~\ref{theo:decoder_generalization}.}
   \label{fig:comp_gen}
\end{wrapfigure}
\textbf{Intuition.}
\looseness-1 Consider the red dotted line in~\cref{fig:comp_gen} (left) corresponding to $\{ \vz \in \sR^2~|~z_1 = z_1^*\}$. To generalize compositionally, the behavior of the partial derivative $\frac{\partial f_l}{\partial z_1}(z^*_1, z_2)$ on this line must be predictable from the behavior of $\fb$ on $\Zcal_\textnormal{supp}$, and similarly for $\hat\fb\circ\hb$. Because~$\fb$ and, as we show, $\hat\fb\circ\hb$ have at most \nth~order interactions across slots on $\mathbb{R}^{d_z}$, the form of this derivative is constrained to be a fixed-degree polynomial, see~\cref{eq:characterisation_block_n_main} and~\cref{fig:comp_gen}~(right). 
Thus, its global behavior on the dotted line in $\mathbb{R}^{d_z}$ can be determined from its derivative locally in a region in $\Zcal_\textnormal{supp}$. Applying this reasoning to all such line segments intersecting $\Zcal_\textnormal{supp}$, we can show that the behavior of $\fb$ and $\hat\fb\circ\hb$ on $\Zcal_\textnormal{CPE}$ can be determined from~$\Zcal_\textnormal{supp}$.

\subsection{Unifying and Extending Prior Results}
\label{sec:unifying_prior_work}
We now show that our theory also recovers the results of~\citet{brady2023provably} and \citet{lachapelle2024additive} as special cases for $n\!=\!0$ and $n\!=\!1$, and extends them to more flexible generative processes.

\textbf{At most \zeroth\ Order Interaction.}
\Citet{brady2023provably}
proposed two properties on $\fb$, which enable disentanglement and compositional generalization~\citep{wiedemer2024compositional}: \emph{compositionality} (\cref{def:compositionality_brad}) and \emph{irreducibility} (\cref{def:irreducibility}). Compositionality states that different slots affect distinct output components such that $D\fb(\zb)$ has a block-like structure. This is equivalent to $\fb$ having at most \zeroth\ order interaction across slots (\cref{def:0th_order_interaction}). Irreducibility is a rank condition on $D_{B_{k}}\fb(\zb)$ which \citet{brady2023provably} interpreted as parts of the same object sharing information. In \cref{theorem:unif_brady}, we show that irreducibility is equivalent to $\fb$ having $\first$ order interaction within slots for all equivalent generators. Thus, the assumptions in~\citet{brady2023provably} are equivalent to interaction asymmetry for all equivalent generators when $n=0$. Further, we recover their disentanglement result using a novel proof strategy, unified with proofs for at most \first\  / \second\ order interaction across slots (\cref{theorem:no_interaction_ident}).

\textbf{At most \first\ Order Interaction.}
\citet{lachapelle2024additive} also proposed two assumptions on $\fb$ for disentanglement and compositional generalization: \emph{additivity} (\cref{def:additivity}) and \emph{sufficient nonlinearity} (\cref{def:suff_nonlin}). 
Additivity is equivalent to $f_{l}$ having a block-diagonal Hessian for all $l \in [d_x]$~\citep{lachapelle2024additive}. This is the same as $\fb$ having at most \first\ order interaction across slots (\cref{def:at_most_mth_order_interaction}). Sufficient nonlinearity is a linear independence condition on columns of \first\ and \second\ derivatives of $\fb$. In \cref{theorem:unif_lachap}, we show that sufficient nonlinearity implies that $\fb$ satisfies sufficient independence for $n=1$ and has $\second$ order interaction within slots for all equivalent generators. Further, we conjecture that the reverse implication does not hold. Thus, the assumptions of~\citet{lachapelle2024additive} imply, and are conjectured to be stronger than, our assumptions when $n=1$. We also recover their same disentanglement result using a unified proof strategy (\cref{theorem:first_order_interaction_ident}).

\textbf{Allowing More Complex Interactions.}
\looseness-1 Our theory not only unifies but also extends these prior results to more general function classes. This is clear from considering the form of functions with at most \nth\ order interactions across slots in \cref{eq:characterisation_block_n_main}. 
For at most \zeroth~\citep{brady2023provably} or \first\ order interactions~\citep{lachapelle2024additive}, the sum of polynomials on the RHS of~\eqref{eq:characterisation_block_n_main} vanishes. Consequently, $\fb$ reduces to an \emph{additive} function. Such generators can only model concepts with trivial interactions such as non-occluding objects. In contrast, we are able to go beyond additive interactions via the polynomial terms in~\eqref{eq:characterisation_block_n_main}. This formally corroborates the ``generality'' of interaction asymmetry, in that it enables more flexible generative processes where concepts can explicitly interact.

\section{Method: Attention-Regularized Transformer-VAE}\label{section:method}
\vspace{-0.25em}
We now explore how our theoretical results in \cref{sec:theory} can inform the design of a practical estimation method.
Our theory puts forth two key properties that a model should satisfy: (i) invertibility and (ii) limited interactions across slots of at most \nth\ order. To achieve disentanglement on $\Zsupp$, (i) and (ii) must hold only ``in-domain'' on $\hatZsupp$ and $\Xsupp$ (\cref{theorem:ident_result}), while for compositional generalization, they must also hold out-of-domain, on all of $\Zcal$ and $\Xcal$~(\cref{theo:decoder_generalization}). 
We will focus on approaches for achieving (i) and (ii) in-domain. Achieving them out-of-domain 
requires addressing separate practical challenges, which are out of the scope of this work. We discuss this in detail in \cref{sec:ood_criteria}. 

\textbf{On Scalability.} \looseness-1 
Approaches that enforce (i) and (ii) \textit{exactly} will generally only be computationally tractable in low-dimensional settings. 
Such computational issues are typical when translating a disentanglement result into an empirical method, often resulting in methods which directly adhere to theory but cannot scale beyond toy data~\citep[e.g.,][]{gresele2021independent,brady2023provably}.
Our core motivation, however, is learning representations of concepts underlying \textit{high-dimensional}
sensory data, such as images. 
Thus, to formulate a method which scales to such settings, we do not restrict ourselves to approaches which exactly enforce  (i) and (ii) but explore \textit{approximate} approaches. 

\textbf{(i) Invertibility.}
Our theory requires invertibility between $\Xsupp \subseteq \mathbb{R}^{d_x}$ and $\hatZsupp \subseteq \Zcal = \mathbb{R}^{d_z}$. For most settings of interest, 
the observed dimension $d_x$ exceeds the ground-truth latent dimension~$d_z$.
Thus, we generally cannot use models which are invertible by construction such as normalizing flows~\citep{papamakarios2021normalizing}. An alternative is to use an \emph{autoencoder} in which $\hat\fb^{-1}$ and $\hat\fb$ are parameterized separately by an \emph{encoder} $\hat\gb:\RR^{d_x} \to \mathbb{R}^{d_{\hat z}}$ and a \emph{decoder} $\hat\fb:\RR^{d_{\hat z}} \to \mathbb{R}^{d_x}$, which are trained to invert each other (on $\hatZsupp$ and $\Xsupp$) by minimizing a reconstruction loss 
$\mathcal{L}_{\text{rec}}  := \EE
\|\xb - {\hat \fb}({\hat \gb}(\xb))
\|^{2}
$. Minimizing $\mathcal{L}_{\text{rec}}$ alone, however, 
does not suffice unless the inferred latent dimension $d_{\hat z}$ equals the ground-truth $d_{z}$. Yet, in practice $d_z$ is unknown. Moreover, choosing $d_{\hat z} > d_{z}$ is important for scalability~\citep{sajjadi2022object}. A viable alternative is thus to employ a soft constraint where $d_{\hat z} > d_{z}$, but the model is encouraged to encode $\xb$ using minimal latent dimensions. To achieve this, we leverage the well known VAE loss~\citep{kingma2014autoencoding}, which couples $\mathcal{L}_{\text{rec}}$ with a KL-divergence loss $\mathcal{L}_{\textsc{kl}}$ between a factorized posterior $\qb(\hat\zb|\xb)$ and prior distribution $\pb(\hat\zb)$, i.e., $\mathcal{L}_\textsc{kl} := \sum_{i \in [d_{\hat z}]}{D_\textsc{kl}\left(\qb(\hat z_{i}|\xb) \| \pb(\hat z_{i}) \right)}$. This loss encourages each $\hat z_{i}$ to be insensitive to changes in~$\xb$ such that unnecessary dimensions should contain no information about~$\xb$~\citep{rolinek2019variational}.

\textbf{(ii) At Most \nth\ Order Interactions.}
One approach to enforce at most \nth\ order interactions across slots would be to parameterize the decoder ${\hat \fb}$ to match the form of such functions~(see~\cref{thm:characterisation_block_n}) for some fixed~$n$. However, this can result in an overly restrictive inductive bias and limit scalability. Moreover, $n$ is generally unknown. Thus, a more promising approach is to \emph{regularize} interactions to be \emph{minimal}. Doing this naively though using gradient descent would require computing gradients of high-order derivatives, which is intractable beyond toy data. This leads to the question:
Is there a scalable architecture which permits efficient regularization of the interactions across slots?

\textbf{Transformers for Interaction Regularization.}
We make the observation that the \emph{Transformer} architecture~\citep{vaswani2017attention} provides an efficient means to approximately regularize interactions. In a Transformer, slots are only permitted to interact via an \emph{attention mechanism}. We will focus on a \emph{cross-attention} mechanism, which maps a latent $\hat\zb$ to an output $\hat x_{l}$ (e.g., a pixel) via:
\begin{align}
\label{eq:key_value_query_main}
\Kb&=\Wb^{K}[\hat\zb_{B_{1}}\ \cdots\ \hat\zb_{B_{K}}],  &
\Vb&=\Wb^{V}[\hat\zb_{B_{1}}\ \cdots\ \hat\zb_{B_{K}}],  &
\Qb&=\Wb^{Q}[\boldsymbol{o}_{1}\ \cdots\ \boldsymbol{o}_{d_x}],
\\[0.75em]
\label{eq:cross_attn_main}
\Ab_{l, k}&=\frac{\exp\big(\Qb_{:,l}^\top\Kb_{:,k}\big)}{\sum_{i \in [K]}\exp\big(\Qb_{:,l}^\top \Kb_{:,i}\big)}, 
&
\bar\xb_{l}&=\Ab_{l, :}\Vb^{\top}, &
{\hat x_{l}}&=\psi(\bar \xb_{l})\,.
\end{align}
\looseness-1
In~\cref{eq:key_value_query_main}, all slots are assumed to have equal size, and key~$\Kb_{:, k}$ and value~$\Vb_{:, k}$ vectors are computed for each slot $k\!\in\![K]$. Query vectors are computed for output dimensions $l \in [d_x]$ (e.g., pixel coordinates) and each $l$ is assigned a fixed vector $\boldsymbol{o}_{l}$. In~\cref{eq:cross_attn_main}, queries and keys 
are used to compute attention weights $\Ab_{l, k}$. These weights determine the slots pixel~$l$ ``attends'' to when generating pixel token~$\bar \xb_{l}$, which is mapped to a pixel $\hat x_{l}$ by nonlinear function $\psi$; see~\cref{sec:addit_det_trans} for further details.

\looseness-1 
Within cross-attention, interactions across slots occur if the query vector for a pixel $l$ attends to multiple slots, i.e., if $\Ab_{l, k}$ is non-zero for more than one~$k$. Conversely, if $\Ab_{l, k}$ is non-zero for only one~$k$, then, intuitively, no interactions should occur. This intuition can be corroborated formally by computing the Jacobian of cross-attention w.r.t.\ each slot (see~\cref{sec:cross_att_jacobian}). Thus, an approximate means to minimize interactions across slots is to regularize~$\Ab$ towards having only one non-zero entry for each row~$\Ab_{l,:}$. To this end, we propose to minimize the sum of all pairwise products $\Ab_{l, j}\Ab_{l, k}$, where $j\neq k$ (see~\cref{fig:code}). 
This quantity is non-negative and will only be zero when each row of $\Ab$ has exactly one non-zero entry. This resembles the \emph{compositional contrast} of~\citet{brady2023provably}, but computed on $\Ab$, which can be efficiently optimized, as opposed to the Jacobian of $\hat\fb$ which is intractable to optimize. We refer to this regularizer as~$\mathcal{L}_{\text{interact}}$, see~\cref{eq:l_interac_app}. For further discussions of $\mathcal{L}_{\text{interact}}$ and its limitations, see~\cref{sec:interac_reg_app}.

\textbf{Model.} Combining these different objectives leads us to the following weighted three-part-loss:
\begin{equation}\label{eq:loss}
\mathcal{L}_{\text{disent}}({\hat \fb}, {\hat \gb}, {\xb}) = \mathcal{L}_{\text{rec}} + \alpha\mathcal{L}_{\text{interact}} + \beta\mathcal{L}_{\text{KL}}. 
\end{equation}%
\looseness-1 
We apply this loss to a flexible Transformer-based autoencoder, similar to the models of~\citet{jaegle2021perceiver,sajjadi2022scene,jabri2022scalable}. For the encoder $\hat\gb$, we first map data $\xb$ to features using the CNN of~\citet{locatello2020object}. These features are processed by a Transformer, which has both self- and cross-attention at every layer, yielding a representation $\hat \zb$. Our decoder~${\hat \fb}$ then maps $\hat \zb$ to an output $\hat \xb$ using a cross-attention Transformer regularized with $\mathcal{L}_{\text{interact}}$,
 see~\cref{sec:app_exp} for details.

\textbf{Relationship to Models In Object-Centric Learning.}
\looseness-1 Existing models for learning disentangled representations of concepts, particularly for disentangling objects without supervision, typically rely on architectural priors rather than regularization~\citep{greff2019multi,locatello2020object,singh2021illiterate,traub2024loci}. While such priors promote disentanglement, they are often too restrictive.
For example, 
Spatial Broadcast Decoders~\citep{watters2019spatial} decode slots separately and only allow for weak interaction through a softmax function, which
prevents modelling
real-world data where objects exhibit more complex interactions~\citep{singh2021illiterate}. While some works have shown success in disentangling objects using more powerful Transformer decoders~\citep{singh2021illiterate,sajjadi2022object,singh2022simple}, they rely on encoders that use Slot Attention~\citep{locatello2020object} as an architectural component, which differs from current large-scale models, typically based on Transformers~\citep{team2023gemini}.
In contrast, we explore the more flexible
approach of starting with a very general Transformer-based model and regularizing it towards a more constrained model.

\section{Experiments}
\label{sec:experiments}
\begin{figure}[t]
    \vspace{-.4em}
    \centering
    \includegraphics[width=\textwidth, scale=.1]{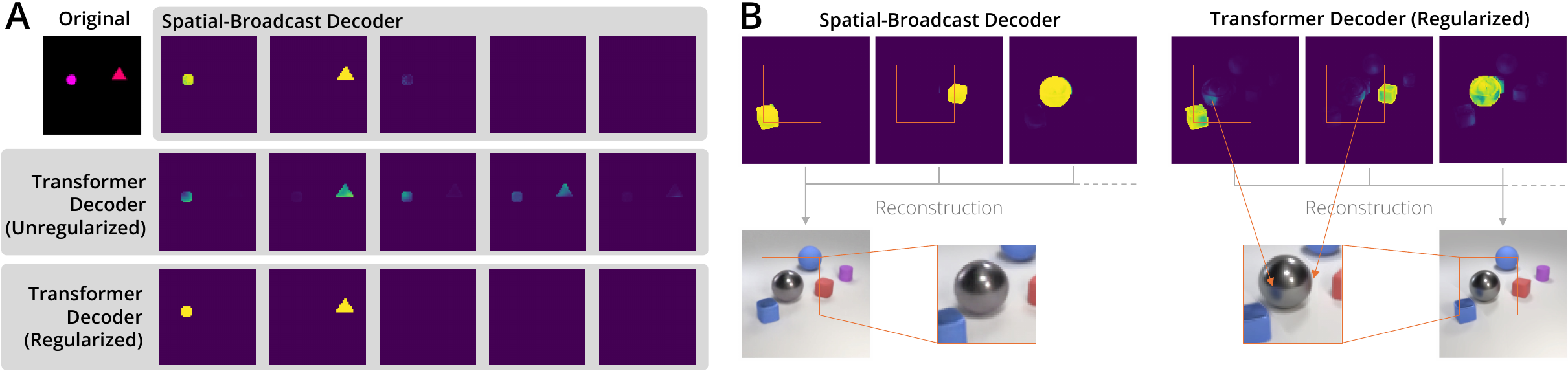}
    \caption{\small \textbf{(A) Sprites} Normalized slot-wise Jacobians 
        for an unregularized ($\alpha=0,\beta=0$) and a regularized ($\alpha>0,\beta>0$) Transformer and a Spatial Broadcast Decoder (SBD). The unregularized model encodes objects across multiple slots, while the regularized model matches the disentanglement of the SBD. \textbf{(B) CLEVR6} Slot-wise Jacobians for a regularized Transformer and a SBD on objects in CLEVR6 which interact via reflections. As can be seen in reconstructions and Jacobians, the regularized Transformer models reflections, while mostly removing unnecessary interactions, while the SBD fails to model reflections due to its restricted architecture.}
    \label{fig:exp1}
    \vspace{-.4em}
\end{figure}
We now apply our attention-regularized Transformer-VAE (\cref{section:method}) for learning representations of concepts. Since this model is designed to enforce the criteria outlined in \cref{theorem:ident_result} for disentanglement on $\Zsupp$, we focus on evaluating disentanglement, as opposed to compositional generalization. To this end, we focus on disentangling objects in visual scenes, and leave an empirical study of a wider range of concepts (e.g., attributes, object-parts, events) for future work (see~\cref{sec:app_exp} for more details).

\textbf{Data.}
We consider two multi-object datasets in our experiments. The first, which we refer to as Sprites~\citep{spriteworld19, brady2023provably,wiedemer2024compositional}, consist of images with~$2$--$4$ objects set against a black background. The second is the CLEVR6 dataset~\citep{johnson2017clevr}, consisting of images with $2$--$6$ objects. In Sprites, objects do not have reflections and rarely occlude such that slots have  essentially have no interaction. In CLEVR6, however, objects can cast shadows and reflect upon each other (see~\cref{fig:header} for an example),  introducing more complex interactions.

\textbf{Metrics.}
A common metric for object disentanglement is the Adjusted-Rand Index~\citep[ARI;][]{Hubert1985ComparingP}. The ARI measures the similarity between the set of pixels encoded by a model slot, and the set of ground-truth pixels for a given object in a scene, yielding an optimal score if each slot corresponds to exactly one object. To assign a pixel to a unique model slot, prior works typically choose the slot with the largest attention score (from, e.g., Slot Attention) for that pixel~\citep{seitzer2023bridging}. However, using attention scores can make model comparisons challenging and is also somewhat unprincipled (see~\cref{section:app_metrics_eval}). We thus consider an alternative and compute the ARI using the Jacobian of a decoder (J-ARI). Specifically, we assign a pixel $l$ to the slot with the largest $L_{1}$~norm for the slot-wise Jacobian~$D_{B_{k}}\hat f_{l}(\hat\zb)$~(see~\cref{fig:exp1,fig:all_jacobians} for a visualization of these Jacobians). 

\looseness-1 
While J-ARI indicates which slots are most responsible for encoding each object, it does not indicate if additional slots affect the same object, i.e., $\|D_{B_{k}}\hat f_{l}(\hat\zb)\|_{1} \neq 0$ for more than one $k$. To measure this, we also introduce the Jacobian Interaction Score (JIS). JIS is computed by taking the maximum of $\|D_{B_{k}}\hat f_{l}(\hat\zb)\|_{1}$ across slots after normalization, averaged over all pixels. If each pixel is affected by only one slot, JIS is $1$. For datasets where objects essentially do not interact like Sprites, JIS should be close to $1$, whereas for CLEVR6, it should be as high as possible while maintaining invertibility.
 
\subsection{Results}
\label{sec:experiments_results}
\textbf{$\mathcal{L}_{\text{disent}}$ Enables Object Disentanglement.} 
In~\cref{tab:your_table_label}, we compare the J-ARI and JIS of our regularized Transformer-based model ($\alpha\!>\!0, \beta\!>\!0$) trained with $\mathcal{L}_{\text{disent}}$ (\cref{eq:loss}) to the same model trained without regularization ($\alpha\!=\!0, \beta\!=\!0$), i.e., with only $\mathcal{L}_{\text{rec}}$. On Sprites,
the regularized model achieves notably higher scores for both J-ARI and JIS. This is corroborated by visualizing the slot-wise Jacobians in~\cref{fig:exp1}A, where we see the regularized model cleanly disentangles objects, whereas the unregularized model often encodes objects across multiple slots. Similarly, on CLEVR6, the regularized model achieves superior disentanglement, as indicated by the higher values for both metrics.

\textbf{Comparison to Existing Object-Centric Autoencoders.} In Tab.~\ref{tab:your_table_label}, we also compare our model to existing models using encoders with Slot Attention and Spatial Broadcast Decoders (SBDs). On Sprites, our model achieves higher J-ARI and JIS than these models, despite using a weaker architectural prior. On CLEVR6, our model outperforms Slot Attention with a Transformer decoder in terms of J-ARI and JIS. Models using a SBD, however, achieve a higher and nearly perfect JIS, i.e., the learned slots essentially never affect the same pixel. In Fig~\ref{fig:exp1}B, we see this comes at the cost of SBDs failing to model reflections between objects, while our model captures this interaction. This highlights that regularizing a flexible architecture with $\mathcal{L}_{\text{disent}}$ can enable a better balance between restricting interactions and model expressivity.
\begin{table}[t]
\caption{\small 
\textbf{Empirical Results.} We show the mean $\pm$ std.\ dev.\ for J-ARI and JIS (in \%) over 3 seeds for different choices of encoders and decoders and weights of the loss terms in \cref{eq:loss} on Sprites and CLEVR6.}
\label{tab:your_table_label}
\centering
\small
\resizebox{\textwidth}{!}{
\begin{tabular}{lllcccc}
\toprule
\multicolumn{2}{c}{\textbf{Model}} & &  \multicolumn{2}{c}{\textbf{Sprites}} & \multicolumn{2}{c}{\textbf{CLEVR6}} \\
\cmidrule(r){1-3}
\cmidrule(r){4-5} \cmidrule(r){6-7}
\textbf{Encoder} & \textbf{Decoder} & \textbf{Loss} & \textbf{J-ARI} ($\uparrow$) & \textbf{JIS} ($\uparrow$) & \textbf{J-ARI} ($\uparrow$) & \textbf{JIS} ($\uparrow$) \\
\midrule
Slot Attention & Spatial-Broadcast & $\alpha=0,\beta=0$ & $89.2 \pm 1.2$ &  $91.4 \pm 0.6$ & $\mathbf{97.0} \pm \mathbf{0.1}$ & $\mathbf{95.3} \pm \mathbf{0.6}$
\\
Slot Attention & Transformer  & $\alpha=0,\beta=0$ & $90.1 \pm 1.2$ & $73.6 \pm 1.2$ & $95.4 \pm 0.8$ &$63.1 \pm 0.8$ 
\\
Transformer & Transformer & $\alpha=0,\beta=0$ & $80.5 \pm 3.4$ & $57.0 \pm 6.5$ & $92.7 \pm 2.7$ &  $54.8 \pm 2.9$ 
\\
Transformer & Transformer & $\alpha>0,\beta=0$ & $82.8 \pm 2.9$  & $73.8 \pm 3.3$ & $79.2 \pm 10.4$ & $51.6 \pm 4.8$    
\\
Transformer & Transformer  & $\alpha=0,\beta>0$ & $92.6 \pm 1.6$ & $92.8 \pm 0.7$ & $96.6 \pm 0.3$ & $80.3 \pm 0.3$ 
\\
Transformer & Transformer & $\alpha>0,\beta>0$ \textbf{(Ours)} & $\mathbf{93.6} \pm \mathbf{0.5}$ &  $\mathbf{95.0} \pm \mathbf{1.7}$ & $96.5 \pm 0.3$ & $83.8 \pm 1.0$ 
\\
\bottomrule
\end{tabular}}
\vspace{-0.25em}
\end{table}

\textbf{Ablation Over Losses.}
\looseness-1 Lastly, in~\cref{tab:your_table_label}, we ablate the impact of the regularizers in $\mathcal{L}_{\text{disent}}$. Training without $\mathcal{L}_{\text{KL}}$ ($\alpha\!>\!0, \beta\!=\!0$) can in some cases give improvements in J-ARI and JIS over an unregularized model ($\alpha\!=\!0, \beta\!=\!0$). However, across datasets this loss yields worse disentanglement than $\mathcal{L}_{\text{disent}}$ ($\alpha\!>\!0, \beta\!>\!0$). This highlights that penalizing latent capacity via $\mathcal{L}_{\text{KL}}$ is important for object disentanglement. Training without $\mathcal{L}_{\text{interac}}$ ($\alpha\!=\!0, \beta\!>\!0$) generally yields a drop across both metrics compared to $\mathcal{L}_{\text{disent}}$, though on CLEVR6 this loss achieves a comparable J-ARI. We found that training with $\mathcal{L}_{\text{KL}}$ can, in some cases, implicitly minimize $\mathcal{L}_{\text{interac}}$, explaining this result (\cref{fig:vae_l_interac}).
\section{Conclusion}
\label{sec:discussion}
\looseness-1 In this work, we proposed interaction asymmetry as a general  principle for learning disentangled and composable representations.
Formalizing this idea led to a constraint on the partial derivatives of the generator function, which unifies assumptions from prior efforts and extends their results to a more flexible class of generators that allow for non-trivial interactions.
These theoretical insights inspired the development of a flexible estimation method based on the Transformer architecture with a novel cross-attention regularizer, which can be efficiently implemented at scale, and which shows promising results on object-centric learning datasets. 
Future work should seek to further extend our theoretical results, address the empirical challenges for achieving compositional generalization, and test our
method on more large-scale data involving not only objects but also other types of concepts.

\subsection*{Acknowledgements}
JB would like to thank Nicol\'o Zottino for providing the initial idea (expressed in~\cref{fig:comp_gen}) that compositional generalization could be extended to functions with at most \nth\ order interactions. The authors thank Luigi Gresele and the Robust Machine Learning Group for helpful discussions.

This work was supported by the German Federal Ministry of Education and Research (BMBF): Tübingen AI Center, FKZ: 01IS18039A, 01IS18039B. WB acknowledges financial support via an Emmy Noether Grant funded by the German Research Foundation (DFG) under grant no. BR 6382/1-1 and via the Open Philantropy Foundation funded by the Good Ventures Foundation. WB is a member of the Machine Learning Cluster of Excellence, EXC number 2064/1 – Project number 390727645. This work utilized compute resources at the Tübingen Machine Learning Cloud, DFG FKZ INST 37/1057-1 FUGG. Part of this work was done while JvK was affiliated with the Max Planck Institute for Intelligent Systems, Tübingen.

\subsection*{Author Contributions}
JB initiated and led the project. JB wrote the manuscript in close collaboration with JvK, and with feedback from all authors. JB conceived of the framework of interaction asymmetry and then formalized it together with SL and JvK. JB proved the disentanglement results in~\cref{app:identifiability} in collaboration with SL, who improved the clarity of several results and simplified some arguments. SL extended these results from local to global disentanglement in~\cref{sec:local2global}. SB proved the characterization of functions with at most \nth\ order interactions in~\cref{app:characterisation_results}, based on an earlier weaker result for analytic functions by JvK. SB proved the compositional generalization results in~\cref{app:comp_gen} after discussions with JB. JB proved the results for unifying prior works in~\cref{app:unify}. SB carried out the Jacobian computation in~\cref{sec:cross_att_jacobian}. JB conceived of the regularizers and method in~\cref{section:method} and implemented and executed all experiments in~\cref{sec:experiments} with advising from TK. WB created~\cref{fig:header} and~\cref{fig:exp1} with insight from JB and feedback from JvK. SL created~\cref{fig:comp_gen} based on an outline provided by JB.%

\clearpage
\bibliographystyle{abbrvnat}
\setlength{\bibsep}{6pt plus 0.3ex}
\bibliography{references}

\clearpage%
\appendix%
\addcontentsline{toc}{section}{Appendices}
\part{Appendices}
\changelinkcolor{black}{}
\parttoc
\changelinkcolor{BrickRed}{}

\clearpage
\section{Disentanglement Proofs}\label{app:identifiability}
\subsection{Additional Definitions and Lemmas}

\begin{definition}[$C^k$-diffeomorphism]\label{def:diffeo}
    Let $A \subseteq \sR^{n}$ and $B \subseteq \sR^{m}$. A map $\vf: A \rightarrow B$ is said to be a $C^k$-diffeomorphism if it is bijective, $C^k$ and has a $C^k$ inverse. 
\end{definition}

\begin{remark}\label{rem:ck_function}
    The property of being differentiable is usually defined only for functions with an open domain of $\sR^n$. Note that, in the definition above, both $A$ and $B$ might not be open sets in their respective topologies. For an arbitrary domain $A \subseteq \sR^n$, we say that a function $\vf$ is $C^k$ if it can be extended to a $C^k$ function defined on an open set $U$ containing $A$. More precisely, $\vf: A \rightarrow B$ is $C^k$ if there exists a function $\vg: U \rightarrow \sR^m$ such that 1) $U$ is an open set containing $A$, 2) $\vg$ is $C^k$, and 3) $\vg(\va) = \vf(\va)$ for all $\va \in A$. See p.199 of \citet{munkres1991analysis} for details about such constructions.
\end{remark}

\begin{definition}[Regular closed sets]\label{def:regularly_closed}
    A set $\Zsupp \subseteq \mathbb{R}^{d_z}$ is regular closed if $\Zsupp = \overline{\Zsupp^\circ}$, i.e. if it is equal to the closure of its interior (in the standard topology of $\mathbb{R}^{n}$).
\end{definition}

\begin{lemma}[\citet{lachapelle2024additive}]\label{lem:homeo_regular_closed}
    Let $A, B \subset \sR^n$ and suppose there exists an homeomorphism $\vf:A \rightarrow B$. If $A$ is regular closed in $\sR^n$, we have that $B \subseteq \overline{B^\circ}$.
\end{lemma}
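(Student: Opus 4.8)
The plan is to transport the regular‑closedness of $A$ across $\vf$ using two ingredients: Brouwer's invariance of domain, and the fact that a continuous map sends closures into closures. Throughout, write $\overline{S}$ for the closure of $S$ in $\sR^n$ and, for $S \subseteq A$, write $\mathrm{cl}_A(S)$ for its closure in the subspace topology on $A$; similarly $A^\circ$ denotes the interior taken in $\sR^n$. Since a regular closed set is in particular closed in $\sR^n$, for every $S \subseteq A$ we have $\mathrm{cl}_A(S) = \overline{S}\cap A = \overline{S}$; in particular the hypothesis $A = \overline{A^\circ}$ reads $A = \mathrm{cl}_A(A^\circ)$.

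The first step is to show $\vf(A^\circ) \subseteq B^\circ$. The set $A^\circ$ is open in $\sR^n$, so $\vf|_{A^\circ}$ is a continuous injection of an open subset of $\sR^n$ into $\sR^n$; by invariance of domain, its image $\vf(A^\circ)$ is open in $\sR^n$. Since $\vf(A^\circ) \subseteq \vf(A) = B$, openness forces $\vf(A^\circ) \subseteq B^\circ$, and hence $\overline{\vf(A^\circ)} \subseteq \overline{B^\circ}$.

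The second step is to push the identity $A = \mathrm{cl}_A(A^\circ)$ through $\vf$. Because $\vf$ is continuous, $\vf(\mathrm{cl}_A(S)) \subseteq \mathrm{cl}_B(\vf(S))$ for every $S \subseteq A$ (the $\vf$‑preimage of the closed set $\mathrm{cl}_B(\vf(S))$ is closed in $A$ and contains $S$, hence contains $\mathrm{cl}_A(S)$). Taking $S = A^\circ$ yields
\[
B \;=\; \vf(A) \;=\; \vf\big(\mathrm{cl}_A(A^\circ)\big) \;\subseteq\; \mathrm{cl}_B\big(\vf(A^\circ)\big) \;=\; \overline{\vf(A^\circ)}\cap B \;\subseteq\; \overline{\vf(A^\circ)} \;\subseteq\; \overline{B^\circ},
\]
where the last inclusion is the first step. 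This is exactly $B \subseteq \overline{B^\circ}$. The only substantive input is invariance of domain, which is what guarantees that the image of the full‑dimensional interior $A^\circ$ remains full‑dimensional and open; everything else is careful bookkeeping of subspace versus ambient closures, made painless by the fact that $A$ is closed. Note the argument uses only that $\vf$ is a continuous injection, not the full strength of being a homeomorphism.
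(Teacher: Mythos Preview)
Your proof is correct. The paper does not actually supply its own proof of this lemma; it is quoted verbatim from \citet{lachapelle2024additive} and simply invoked where needed. Your argument---invariance of domain to get $\vf(A^\circ)\subseteq B^\circ$, then pushing the closure identity $A=\overline{A^\circ}$ through the continuous map $\vf$---is the standard route and is essentially the proof given in that reference. Your closing observation that only continuity and injectivity of $\vf$ are used (not the full homeomorphism) is also accurate.
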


The way we defined $C^k$ functions with arbitrary domain is such that a function can be differentiable without having a uniquely defined derivative everywhere on its domain. This happens when the derivative of two distinct extensions differ.\footnote{A simple example of such a situation is the trivial function $f:\{0\} \rightarrow \{0\}$ which is differentiable at $0$ but does not have a well defined derivative because $g(x) = x$ and $h(x) = -x$ are both differentiable extensions of $f$ but have different derivatives at $x=0$.} The following Lemma states that the derivative of a $C^k$ function is uniquely defined on the closure of the interior of its domain.

\begin{lemma}[\citet{lachapelle2024additive}]\label{lem:equal_derivatives}
    Let $A \subseteq \sR^n$ and $\vf:A \rightarrow \sR^m$ be a $C^k$ function. Then, its $k$ first derivatives are uniquely defined on $\overline{A^\circ}$ in the sense that they do not depend on the specific choice of $C^k$ extension.
\end{lemma}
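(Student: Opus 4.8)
The plan is to fix two arbitrary $C^{k}$ extensions $\vg_{1}\colon U_{1}\to\sR^{m}$ and $\vg_{2}\colon U_{2}\to\sR^{m}$ of $\vf$, where $U_{1},U_{2}\subseteq\sR^{n}$ are open with $A\subseteq U_{1}\cap U_{2}$, and to show that for every multi-index $\alphab$ with $|\alphab|\le k$ the partial derivatives $D^{\alphab}\vg_{1}$ and $D^{\alphab}\vg_{2}$ coincide at every point of $\overline{A^{\circ}}$ lying in both $U_{1}$ and $U_{2}$. Since $A\subseteq U_{1}\cap U_{2}$, this already pins the derivatives down on $A\cap\overline{A^{\circ}}$, hence on all of $\overline{A^{\circ}}$ whenever $A$ is regular closed (\cref{def:regularly_closed}), which is the situation in which the Lemma is used. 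The argument splits into two ingredients: agreement of the derivatives on the open interior $A^{\circ}$, followed by a density-and-continuity step that extends this to the closure.

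First I would prove, by induction on $|\alphab|$, that $D^{\alphab}\vg_{1}=D^{\alphab}\vg_{2}$ on the open set $A^{\circ}$ for all $|\alphab|\le k$. The base case $|\alphab|=0$ is just the hypothesis $\vg_{1}=\vf=\vg_{2}$ on $A$, restricted to $A^{\circ}$. For the inductive step, suppose $D^{\betab}\vg_{1}=D^{\betab}\vg_{2}$ on $A^{\circ}$ with $|\betab|=|\alphab|-1$; then for any $p\in A^{\circ}$ and any coordinate direction $\ve_{i}$, the difference quotients of $D^{\betab}\vg_{1}$ and of $D^{\betab}\vg_{2}$ between $p$ and $p+h\ve_{i}$ coincide for all $h$ small enough that $p+h\ve_{i}\in A^{\circ}$, and these quotients have limits as $h\to0$ because $\vg_{1},\vg_{2}\in C^{k}$ and the resulting derivative has order $|\alphab|\le k$; hence $D_{i}D^{\betab}\vg_{1}(p)=D_{i}D^{\betab}\vg_{2}(p)$, i.e.\ $D^{\alphab}\vg_{1}(p)=D^{\alphab}\vg_{2}(p)$.

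Next, fix $p\in\overline{A^{\circ}}\cap U_{1}\cap U_{2}$ and a multi-index $\alphab$ with $|\alphab|\le k$. Since $\vg_{1},\vg_{2}$ are $C^{k}$ on the open sets $U_{1},U_{2}$, the maps $D^{\alphab}\vg_{1}$ and $D^{\alphab}\vg_{2}$ are continuous in a neighbourhood of $p$. Choosing a sequence $p_{j}\in A^{\circ}$ with $p_{j}\to p$, the previous step gives $D^{\alphab}\vg_{1}(p_{j})=D^{\alphab}\vg_{2}(p_{j})$ for $j$ large, and letting $j\to\infty$ yields $D^{\alphab}\vg_{1}(p)=D^{\alphab}\vg_{2}(p)$. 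As $\vg_{1},\vg_{2}$ were arbitrary extensions, this shows the derivatives of $\vf$ of order at most $k$ are well defined on $\overline{A^{\circ}}$, independent of the chosen extension.

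The one point requiring care is that $\overline{A^{\circ}}$ need not be contained in $A$, so "uniquely defined on $\overline{A^{\circ}}$" must be read as: at any point of $\overline{A^{\circ}}$ lying in the domain of a given $C^{k}$ extension, the derivative value does not depend on which such extension is used; regular-closedness of $A$ is precisely what makes $\overline{A^{\circ}}\subseteq A$ in the intended applications. Beyond this bookkeeping there is no real obstacle — the proof is just the elementary fact that two $C^{k}$ maps agreeing on an open set have all their derivatives of order $\le k$ equal there, combined with the continuity of those derivatives.
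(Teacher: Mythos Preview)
Your argument is correct and is essentially the standard proof of this fact: two $C^{k}$ extensions agree on $A$, hence on the open set $A^{\circ}$, so by the elementary inductive difference-quotient argument all partial derivatives up to order $k$ agree on $A^{\circ}$; continuity of those derivatives then propagates the equality to $\overline{A^{\circ}}$ (at least on the intersection with the domains of the extensions). Your handling of the subtlety that $\overline{A^{\circ}}$ need not lie in $A$ is also appropriate and matches how the lemma is actually invoked in the paper, namely only for regular closed $A$.

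There is nothing to compare against here: the paper does not supply its own proof of this lemma but merely cites it from \citet{lachapelle2024additive}. Your write-up is a faithful reconstruction of the intended argument.
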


\paragraph{Notation}
For a subset $S \subseteq [d_z]$ and a matrix $\boldsymbol{A} \in \mathbb{R}^{m \times n}$,  $\boldsymbol{A}_{S}$ will denote the sub-matrix consisting of the columns in $\boldsymbol{A}$ indexed by $S$ i.e. $\boldsymbol{A}_{S} = 
\begin{bmatrix}
\boldsymbol{A}_{.,i}
\end{bmatrix}_{i \in S}$. Similarly, for a vector $\boldsymbol{z}$, $\boldsymbol{z}_{S}$ will denote the sub-vector of $\boldsymbol{z}$ consisting of components indexed by $S$ i.e. $\boldsymbol{z}_{S} := (z_{i})_{i \in S}$.

\begin{lemma}
\label{lemma:sum_rank_null_factorizes}
Let $\boldsymbol{A} \in \mathbb{R}^{m \times n}$ and let $\mathcal{B}$ be a partition of $[n]$. If
\begin{equation}
\text{rank}(\boldsymbol{A}) = \sum_{S \in \mathcal{B}} \text{rank}(\boldsymbol{A}_S) \label{eq:856458758}
\end{equation}
Then $\forall \boldsymbol{z} \in \mathbb{R}^{n}$ s.t. $\boldsymbol{A}\boldsymbol{z} = {\bf 0}$, $\boldsymbol{A}_{S}\boldsymbol{z}_{S} = \bf{0}$, for any $S \in \mathcal{B}$.
\end{lemma}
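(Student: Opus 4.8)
The plan is to read the rank hypothesis \eqref{eq:856458758} as a direct-sum decomposition of the column space of $\boldsymbol{A}$, and then use uniqueness of representations in a direct sum.

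First I would note that, since $\mathcal{B}$ partitions $[n]$, the set of columns of $\boldsymbol{A}$ is the disjoint union of the sets of columns of the submatrices $\boldsymbol{A}_S$, $S \in \mathcal{B}$. Hence $\operatorname{Col}(\boldsymbol{A}) = \sum_{S \in \mathcal{B}} \operatorname{Col}(\boldsymbol{A}_S)$ as a sum of subspaces of $\mathbb{R}^m$. Taking dimensions and using $\operatorname{rank}(\boldsymbol{A}) = \dim\operatorname{Col}(\boldsymbol{A})$ and $\operatorname{rank}(\boldsymbol{A}_S) = \dim\operatorname{Col}(\boldsymbol{A}_S)$, the hypothesis becomes $\dim\big(\sum_{S} \operatorname{Col}(\boldsymbol{A}_S)\big) = \sum_{S} \dim\operatorname{Col}(\boldsymbol{A}_S)$. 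By the standard fact that $\dim(U_1 + \dots + U_r) \leq \sum_i \dim U_i$, with equality if and only if the sum is direct, this gives $\operatorname{Col}(\boldsymbol{A}) = \bigoplus_{S \in \mathcal{B}} \operatorname{Col}(\boldsymbol{A}_S)$.

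Next, given $\boldsymbol{z}$ with $\boldsymbol{A}\boldsymbol{z} = \boldsymbol{0}$, I would decompose the matrix–vector product block-wise along $\mathcal{B}$, writing $\boldsymbol{0} = \boldsymbol{A}\boldsymbol{z} = \sum_{S \in \mathcal{B}} \boldsymbol{A}_S \boldsymbol{z}_S$, where each summand $\boldsymbol{A}_S \boldsymbol{z}_S$ lies in $\operatorname{Col}(\boldsymbol{A}_S)$. Since the sum is direct, the only way to write $\boldsymbol{0}$ as a sum of elements, one taken from each $\operatorname{Col}(\boldsymbol{A}_S)$, is the all-zero one; therefore $\boldsymbol{A}_S \boldsymbol{z}_S = \boldsymbol{0}$ for every $S \in \mathcal{B}$, which is the claim.

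There is no substantive obstacle: the only care needed is the bookkeeping that a \emph{partition} of $[n]$ is what makes the column blocks disjoint and jointly exhaustive (so that the block decomposition of $\boldsymbol{A}\boldsymbol{z}$ is legitimate), plus invoking the dimension-count characterization of direct sums rather than reproving it. If a fully elementary argument is preferred, one can instead concatenate bases of the $\operatorname{Col}(\boldsymbol{A}_S)$ into a list of $\sum_S \operatorname{rank}(\boldsymbol{A}_S) = \operatorname{rank}(\boldsymbol{A})$ vectors spanning $\operatorname{Col}(\boldsymbol{A})$, hence a basis; expressing $\boldsymbol{0} = \sum_S \boldsymbol{A}_S\boldsymbol{z}_S$ in this basis forces all coordinates to vanish, and in particular each $\boldsymbol{A}_S\boldsymbol{z}_S = \boldsymbol{0}$.
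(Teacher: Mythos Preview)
Your proof is correct and uses essentially the same idea as the paper: the rank hypothesis forces the column spaces $\operatorname{Col}(\boldsymbol{A}_S)$ to be independent, so writing $\boldsymbol{0}=\sum_S \boldsymbol{A}_S\boldsymbol{z}_S$ forces each summand to vanish. The only cosmetic difference is that the paper argues by contradiction and splits into one block $S_1$ versus its complement (showing $\operatorname{range}(\boldsymbol{A}_{S_1})\cap\operatorname{range}(\boldsymbol{A}_{-S_1})=\{0\}$), whereas you establish the full direct-sum decomposition across all blocks at once; your version is slightly cleaner but the underlying linear algebra is identical.
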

\begin{proof}
Assume for a contradiction that there exist a $\boldsymbol{z} \in \mathbb{R}^{n}$, s.t. $\boldsymbol{A}\boldsymbol{z} = {\bf 0}$, and there exist $S_1 \in \mathcal{B}$ s.t. $\boldsymbol{A}_{S_1}\boldsymbol{z}_{S_1} \neq \bf{0}$.

Now construct the matrix, denoted, $\boldsymbol{A}_{-S_1}$ consisting of all columns in $\boldsymbol{A}$ except those indexed by $S_1$, i.e.
\begin{equation}
\boldsymbol{A}_{-S_1} := \begin{bmatrix}\begin{bmatrix}
\boldsymbol{A}_{.,i}
\end{bmatrix}_{i \in S}\end{bmatrix}_{S \in \mathcal{B} \setminus S_1}
\end{equation}

By using \eqref{eq:856458758} and the property that $\text{rank}([\boldsymbol{B}, \boldsymbol{C}]) \leq \text{rank}(\boldsymbol{B}) + \text{rank}(\boldsymbol{C})$, we get
\begin{align}
\text{rank}(\boldsymbol{A})&=\sum_{S \in \mathcal{B} \setminus S_1} \text{rank}(\boldsymbol{A}_S) + \text{rank}(\boldsymbol{A}_{S_1})\\
&\geq \text{rank}(\boldsymbol{A}_{-S_1}) + \text{rank}(\boldsymbol{A}_{S_1}) \\
&\geq \text{rank}(\boldsymbol{A})\,.
\end{align}
Consequently, we have that:
\begin{equation}
\text{rank}(\boldsymbol{A}) = \text{rank}(\boldsymbol{A}_{-S_1}) + \text{rank}(\boldsymbol{A}_{S_1})
\end{equation}
This implies that the column spaces of both matrices denoted $\text{range}(\boldsymbol{A}_{-S_1}), \text{range}(\boldsymbol{A}_{S_1})$ respectively, do not intersect, except at the zero vector.
\newline
\newline
Now we know that at $\boldsymbol{z}$
\begin{align}
{\bf 0} &= \boldsymbol{A}\boldsymbol{z} \\
&= \boldsymbol{A}_{-S_1}\boldsymbol{z}_{-S_1} + \boldsymbol{A}_{S_1}\boldsymbol{z}_{S_1} 
\end{align}
Consequently, 
\begin{align}
\boldsymbol{A}_{-S_1}\boldsymbol{z}_{-S_1} =  -\boldsymbol{A}_{S_1}\boldsymbol{z}_{S_1}
\end{align}
and by our assumed contradiction we know that at $\boldsymbol{z}$:
\begin{align}
\boldsymbol{A}_{S_1}\boldsymbol{z}_{S_1} \neq {\bf 0}
\end{align}
This implies that the column spaces of $\boldsymbol{A}_{-S_1}, \boldsymbol{A}_{S_1}$ must intersect at a point other than the zero vector, which is a contradiction.
\end{proof}

\begin{lemma}
\label{lemma:linearly_ind_partials_diff_slots}
Let $\mA \in \mathbb{R}^{d \times d}$ be an invertible matrix and $\{B_1, \dots, B_K\}$ be a partition of $[d]$. Assume there are $k_1, k_2, k \in [K]$ such that:
\begin{equation}
\mA_{B_k, B_{k_1}} \not= 0 \not= \mA_{B_k, B_{k_2}}
\label{eq:r84843934fhe}
\end{equation}
Then there exists a subset $S \subset [d]$ with cardinality $|B_k|$ that has the following properties:
\begin{enumerate}
    \item The sub-block $\mA_{B_{k}, S}$ is invertible.
    \item $S \not\subseteq B_{k'}$, for any $k' \in [K]$
\end{enumerate}
\end{lemma}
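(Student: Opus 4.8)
We are given an invertible matrix $\mA \in \RR^{d\times d}$, a partition $\{B_1,\dots,B_K\}$ of $[d]$, and indices $k_1,k_2,k$ with $\mA_{B_k,B_{k_1}}\neq 0 \neq \mA_{B_k,B_{k_2}}$. We must produce a column set $S$ of size $|B_k|$ such that $\mA_{B_k,S}$ is invertible and $S$ is not contained in any single block $B_{k'}$. The plan is to build $S$ greedily, one column at a time, maintaining linear independence of the selected columns of the row-block $\mA_{B_k,\cdot}$, and to arrange the selection so that at least two distinct blocks are represented in $S$.

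\textbf{Step 1: The row-block has full row rank.} Since $\mA$ is invertible, its rows are linearly independent; in particular the $|B_k|$ rows indexed by $B_k$ are linearly independent, so the matrix $\mA_{B_k,\cdot}\in\RR^{|B_k|\times d}$ has rank $|B_k|$. Hence there exists \emph{some} set of $|B_k|$ columns of $\mA_{B_k,\cdot}$ that is a basis of $\RR^{|B_k|}$; the only work is to choose such a basis that straddles two blocks.

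\textbf{Step 2: Pick one column from each of two blocks to start.} Because $\mA_{B_k,B_{k_1}}\neq 0$, some column $c_1\in B_{k_1}$ has $\mA_{B_k,c_1}\neq \vzero$. I distinguish two cases. \emph{Case (a): $k_1\neq k_2$.} Then because $\mA_{B_k,B_{k_2}}\neq 0$, pick $c_2\in B_{k_2}$ with $\mA_{B_k,c_2}\neq\vzero$; the two columns $\mA_{B_k,c_1},\mA_{B_k,c_2}$ may or may not be independent, but at least one of $\{c_1,c_2\}$ can begin an independent family, and crucially $c_1$ and $c_2$ lie in different blocks. \emph{Case (b): $k_1=k_2$.} This case cannot actually arise under \eqref{eq:r84843934fhe} as two genuinely distinct conditions — but if one reads \eqref{eq:r84843934fhe} as only asserting a single nonzero block, then $|B_k|\geq 2$ is forced (a single nonzero column cannot span $\RR^{|B_k|}$ if the whole row-block must have rank $|B_k|$ and... ) — I would instead handle this by noting the row-block rank is $|B_k|$, so its support cannot be confined to one block unless that block already has $\geq |B_k|$ columns, in which case I still must reach outside it; the full-rank argument of Step 3 handles this uniformly.

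\textbf{Step 3: Matroid/exchange completion.} Let $M$ be the linear matroid on the columns $\{1,\dots,d\}$ given by the vectors $\mA_{B_k,\cdot}$; its rank is $|B_k|$ by Step 1. Start with the independent set $I_0$ consisting of one nonzero column from $B_{k_1}$ and, in Case (a), also one nonzero column from $B_{k_2}$ provided it is independent from the first (if it is dependent on the first, instead seed $I_0$ with just the $B_{k_1}$ column and postpone). By the augmentation axiom for matroids, extend $I_0$ to a basis $S$ of $M$, i.e. $|S|=|B_k|$ and $\mA_{B_k,S}$ invertible. It remains to ensure $S\not\subseteq B_{k'}$ for any $k'$. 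If the seeding already placed columns in two distinct blocks, we are done. Otherwise $I_0\subseteq B_{k_1}$; during the augmentation, whenever we attempt to add columns, the fact that the matroid has rank $|B_k|$ while $B_{k_1}$ alone can contribute only $\mathrm{rank}(\mA_{B_k,B_{k_1}})$ independent columns means that if $\mathrm{rank}(\mA_{B_k,B_{k_1}})<|B_k|$ we are forced to draw a column from outside $B_{k_1}$, giving the straddle; and if $\mathrm{rank}(\mA_{B_k,B_{k_1}})=|B_k|$ then since also $\mA_{B_k,B_{k_2}}\neq 0$ with $k_2\neq k_1$, I can perform one exchange step — swap a basis element for a nonzero column of $B_{k_2}$ (possible by the basis exchange property, choosing the swapped-out element so independence is preserved) — producing a basis $S$ with a column in $B_{k_2}$ and still at least $|B_k|-1\geq 1$ columns elsewhere, hence spanning two blocks.

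\textbf{Main obstacle.} The delicate point is the exchange in Step 3: when $B_{k_1}$ already contains a full basis of the row-space, I must show a \emph{single} basis-exchange step can bring in a column of $B_{k_2}$ without collapsing rank. This follows from the standard basis-exchange property of matroids (for any basis $B$ and any element $e\notin B$ with $B\cup\{e\}$ dependent, there is $f\in B$ with $(B\setminus\{f\})\cup\{e\}$ a basis), applied to $e$ = a nonzero column of $\mA_{B_k,\cdot}$ from $B_{k_2}$; such $e$ is nonzero hence not in the span of the empty set, so the exchange yields a genuine basis, and since $|B_k|\geq 2$ here (a rank-$|B_k|$ space with $|B_k|\geq 1$; if $|B_k|=1$ then any single nonzero column works and \eqref{eq:r84843934fhe} directly supplies one in $B_{k_1}$ and one in $B_{k_2}$, already straddling), the resulting $S$ retains a column outside $B_{k_2}$. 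I would write this out carefully, since it is the only step where the two hypotheses on $\mA_{B_k,B_{k_1}}$ and $\mA_{B_k,B_{k_2}}$ are both essential.
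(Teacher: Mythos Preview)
Your approach is essentially identical to the paper's: use the full row rank of $\mA_{B_k,:}$ to extract some basis $S$ of size $|B_k|$, and if $S$ happens to lie inside a single block, perform one basis-exchange step to swap in a nonzero column from a different block (the paper does this concretely by writing the new column in the old basis and dropping an element with nonzero coefficient; you invoke the matroid exchange axiom, which is the same move in slightly more abstract dress).

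One minor slip worth flagging: your parenthetical handling of $|B_k|=1$ is wrong --- a singleton $S$ necessarily sits inside one block, so property~2 cannot hold in that case and the lemma as stated simply fails there. The paper's proof has the same blind spot (after the swap, $S^* = \{j^*\} \subseteq B_{k^*}$), and the lemma is only ever invoked in the multi-dimensional case $|B_k|>1$, so this is a cosmetic issue with the statement rather than a defect in your argument.
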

\begin{proof}
We first prove that there must exists an $S$ satisfying point 1. Since $\mA$ is invertible, each subset of rows is linearly independent and thus $\text{rank}(\mA_{B_{k}, :}) = |B_k|$. This implies that there exist a set $S \subset [d]$ with cardinality $|B_k|$ such that $\forall i \in S, \mA_{B_k, i} $ are linearly independent, and thus form a basis of $\mathbb{R}^{|B_k|}$.

If $S \not \subseteq B_{k'}$ for all $k' \in [K]$, we are done. 

We consider the case where there exists a $k'$ such that $S \subseteq B_{k'}$. We will show that we can construct a different $S^*$ from $S$ which satisfies both conditions.

We know by \eqref{eq:r84843934fhe} that there exist a second block $k^{*} \neq k'$ such that for some $j^* \in B_{k^*}$, $\mA_{B_{k}, j^*} \not= 0$. Since $\{\mA_{B_k, i}\}_{i \in S}$ forms a basis of $\mathbb{R}^{|B_k|}$, the vector $\mA_{B_k, j^*}$ can be represented uniquely as 
\begin{align}
    \mA_{B_k, j^*} = \sum_{i \in S} a_i \mA_{B_k, i}\,,
\end{align}
where $a_i \in \mathbb{R}$ for all $i$. Because $\mA_{B_k, j^*} \not= 0$, there exists $j \in S$ such that $\boldsymbol{a}_{j} \not= 0$. Because this representation is unique, we know that $\mA_{B_k, j^*}$ is outside the span of $\{\mA_{B_k, i}\}_{i \in S \setminus \{j\}}$. This means that, by taking $S^* := (S \setminus \{j\}) \cup \{j^*\}$, we have that $\{\mA_{B_k, i}\}_{i \in S^*}$ is a basis for $\mathbb{R}^{|B_k|}$ or, in other words, $\mA_{B_k,S^*}$ is invertible. Also, $S^*$ is not included in a single block since $S \setminus \{j\} \subseteq B_{k'}$ and $j^* \in B_{k^*}$ with $k' \not= k^*$.
\end{proof}

\subsection{Sufficient Independence Assumptions}\label{sec:suff_indep}
\begin{definition}[Sufficient Independence ($\zeroth$ Order)]\label{def:0th_order_suff_ind}
Let $\boldsymbol{f}: \mathbb{R}^{d_z} \rightarrow \mathbb{R}^{d_x}$ be a $C^1$ function with $\zeroth$ order interactions between slots (Def.~\ref{def:0th_order_interaction}). The function $\boldsymbol{f}$ is said to have \emph{sufficiently independent} derivatives if $\forall \boldsymbol{z} \in \mathbb{R}^{d_z}$:
\begin{equation}
    \text{rank}\left(\left [\left [D_{i}\boldsymbol{f}(\boldsymbol{z})\right ]_{i \in B_{k}}\right ]_{k \in [K]}\right) = \sum_{k \in [K]} \text{rank}\left(\left [D_{i}\boldsymbol{f}(\boldsymbol{z})\right ]_{i \in B_{k}}\right)
\end{equation}
\end{definition}

\begin{definition}[Sufficient Independence ($\first$ 
Order)]\label{def:1st_order_suff_ind}
Let $\boldsymbol{f}: \mathbb{R}^{d_z} \rightarrow \mathbb{R}^{d_x}$ be a $C^2$ function with \emph{at most} \first \emph{order interactions} between slots (Def.~\ref{def:2nd-order}). The function $\boldsymbol{f}$ is said to have \emph{sufficiently independent} derivatives if $\forall \boldsymbol{z} \in \mathbb{R}^{d_z}$:
\begin{align*}
    \text{rank}\Biggl(\biggl[\bigl[ D_{i}\boldsymbol{f}(\boldsymbol{z})\bigr]_{i \in B_{k}}  &\bigl[D^{2}_{i, i'}\boldsymbol{f}(\boldsymbol{z}) \bigr]_{(i, i') \in B^{2}_{k}}\biggr]_{k \in [K]}\Biggr) \\
    &= \sum_{k \in [K]} \left[ \text{rank}\left(\left [D_{i}\boldsymbol{f}(\boldsymbol{z})\right ]_{i \in B_{k}}\right) + \text{rank}\left( \left [D^{2}_{i, i'}\boldsymbol{f}(\boldsymbol{z}) \right ]_{(i, i') \in B^{2}_{k}} \right ) \right ]
\end{align*}
\end{definition}

\SuffIndepSecond*

\subsection{From Local to Global Disentanglement}\label{sec:local2global}
This section takes care of technical subtleties when one has to go from local to global disentanglement. The disentanglement guarantee of this work is proven by first showing that $D\vh$, i.e. the Jacobian of $\vh := \vf^{-1} \circ \hat{\vf}$, has a block-permutation structure everywhere, and from there showing that $\vh$ can be written as $\vh(\vz) = (\vh_1(\vz_{B_{\pi(1)}}), \vh_2(\vz_{B_{\pi(2)}}), \dots, \vh_K(\vz_{B_{\pi(K)}}))$ (see Defintion~\ref{def:slot_identifiability}). \citet{lachapelle2024additive} refers to the first condition on $D\vh$ as \textit{local disentanglement} and the second condition on $\vh$ as \textit{global disentanglement}, the latter of which corresponds to the definition of disentanglement employed in the present work. The authors also show that going from local to global disentanglement requires special care when considering very general supports $\Zcal_\textnormal{supp}$, like we do in this work, as opposed to the more common assumption that $\Zcal_\textnormal{supp} := \mathbb{R}^{d_z}$ which makes this step more direct (e.g., see \citet{hyvarinen2019nonlinear}). This section reuses definitions and lemmata taken from \citet{lachapelle2024additive} and introduces a novel sufficient condition on the support of the latent factors, we named \textit{aligned-connectedness}, to guarantee that the jump from local to global disentanglement can be made. 

\begin{definition}[Partition-respecting permutations]\label{def:respecting_perm}
Let $\gB := \{B_1, B_2, \dots, B_K\}$ be a partition of $\{1, ..., d\}$. A permutation $\pi$ over $\{1, ..., d\}$ respects $\gB$ if, for all $B \in \gB,\ \pi(B) \in \gB$.
\end{definition}

\begin{definition}[$\gB$-block permutation matrices] \label{def:block_permuation}
    A matrix $\mA \in \sR^{d \times d}$ is a \textit{$\gB$-block permutation matrix} if it is invertible and can be written as $\mA = \mC\mP_\pi$ where $\mP_\pi$ is the matrix representing the $\gB$-respecting permutation $\pi$ (Definition~\ref{def:respecting_perm}), i.e. $P_{\pi}\ve_i = \ve_{\pi(i)}$, and $\mC \in \sR^{d\times d}$ is such that for all distinct blocks $B, B' \in \gB$, $\mC_{B,B'} = 0$.
\end{definition}

\begin{proposition}\label{prop:inverse_block_perm}
    The inverse of a $\gB$-block permutation matrix is also a $\gB$-block permutation matrix.
\end{proposition}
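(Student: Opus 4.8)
The plan is to exploit the explicit factorization guaranteed by \cref{def:block_permuation}. Write $\mA = \mC\,\mP_\pi$ with $\pi$ a $\gB$-respecting permutation (\cref{def:respecting_perm}) and $\mC$ invertible with $\mC_{B,B'} = 0$ for all distinct $B,B' \in \gB$. Inverting, $\mA^{-1} = \mP_\pi^{-1}\mC^{-1} = \mP_{\pi^{-1}}\mC^{-1}$, and the task is to rewrite this as $\mC'\mP_{\pi'}$ with $\pi'$ $\gB$-respecting and $\mC'$ block-diagonal in the same sense. The obvious candidates are $\pi' := \pi^{-1}$ and $\mC' := \mP_{\pi^{-1}}\mC^{-1}\mP_\pi$, since then $\mC'\mP_{\pi^{-1}} = \mP_{\pi^{-1}}\mC^{-1}\mP_\pi\mP_{\pi^{-1}} = \mP_{\pi^{-1}}\mC^{-1} = \mA^{-1}$. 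So the whole proof reduces to checking that these two objects have the claimed structure, plus that $\mC'$ is invertible (immediate, being a product of invertible matrices).

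First, $\pi^{-1}$ is $\gB$-respecting: since $\pi$ permutes $[d]$ and sends each block into a block, and the blocks are disjoint and partition $[d]$, $\pi$ restricts to a bijection of the finite set $\gB$; hence $\pi^{-1}$ likewise maps each block onto a block. Second, $\mC^{-1}$ is block-diagonal with respect to $\gB$: after reordering coordinates block by block, $\mC$ is literally block-diagonal, so its invertibility forces each diagonal block $\mC_{B,B}$ to be invertible, and the inverse of a block-diagonal matrix is again block-diagonal with the same blocks.

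It then remains to see that $\mC' = \mP_{\pi^{-1}}\mC^{-1}\mP_\pi$ is block-diagonal. Using the elementary identity $(\mP_\sigma\mD\mP_\sigma^{-1})_{ij} = \mD_{\sigma^{-1}(i),\,\sigma^{-1}(j)}$ for a permutation $\sigma$ (which follows from $\mP_\sigma\ve_i = \ve_{\sigma(i)}$), applied with $\sigma := \pi^{-1}$ and $\mD := \mC^{-1}$, one gets $(\mC')_{ij} = (\mC^{-1})_{\pi(i),\pi(j)}$. If $i$ and $j$ lie in different blocks, then, since $\pi$ maps blocks onto blocks, $\pi(i)$ and $\pi(j)$ lie in different blocks, so this entry is $0$ by the previous step; hence every off-diagonal block of $\mC'$ vanishes, and $\mA^{-1} = \mC'\mP_{\pi^{-1}}$ meets every requirement of \cref{def:block_permuation}. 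I do not expect a genuine obstacle here — everything is bookkeeping — the only point to keep in mind being that a $\gB$-respecting $\pi$ may move blocks of equal size among one another, which is harmless since all the arguments above use only that $\pi$ permutes the blocks (necessarily respecting cardinalities). Alternatively one could bypass the computation entirely by noting that the $\gB$-block permutation matrices form a finite set closed under multiplication and containing the identity, hence a group, and therefore closed under inversion.
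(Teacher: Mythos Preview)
Your main argument is correct and essentially identical to the paper's: both invert $\mA=\mC\mP_\pi$ to $\mP_\pi^{-1}\mC^{-1}$, set $\tilde\mC=\mP_{\pi^{-1}}\mC^{-1}\mP_\pi$ (noting $\mP_\pi^{-1}=\mP_\pi^\top=\mP_{\pi^{-1}}$), and verify that $\tilde\mC$ is block-diagonal via the conjugation identity $(\tilde\mC)_{i,j}=(\mC^{-1})_{\pi(i),\pi(j)}$ together with the fact that $\pi$ permutes the blocks of $\gB$.

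One correction: the alternative you sketch at the end is wrong. The set of $\gB$-block permutation matrices is \emph{not} finite---the block-diagonal factor $\mC$ ranges over all invertible matrices with the prescribed block pattern, an uncountable set. The ``finite monoid is a group'' shortcut therefore does not apply, and you cannot bypass the computation this way. (One could instead argue directly that the set is closed under multiplication and inversion, but closure under inversion is exactly the proposition, so this would be circular.)
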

\begin{proof}
    First note that $\mC$ must be invertible, otherwise $\mA$ is not. Also, $\mC^{-1}$ must also be such that $(\mC^{-1})_{B, B'} = 0$ for all distinct blocks $B, B' \in \gB$. This is because, without loss of generality, we can assume the blocks of $\gB$ are contiguous which implies that $\mC$ is a block diagonal matrix so that $\mC^{-1}$ is also block diagonal. Since $\pi$ preserves $\gB$, we have that $\mP_\pi^\top \mC^{-1} \mP_\pi$ is also block diagonal since, for all distinct $B, B' \in \gB$, $(\mP_\pi^\top \mC^{-1} \mP_\pi)_{B, B'} = (\mC^{-1})_{\pi(B), \pi(B')} = 0$, where we used the fact that the blocks $\pi(B)$ and $\pi(B')$ are in $\gB$, because $\pi$ is $\gB$-preserving, and are distinct, because $\pi$ is a bijection. We can thus see that 
    \begin{align*}
        \mA^{-1} &= \mP_\pi^\top \mC^{-1} \\
        &= \mP_\pi^\top \mC^{-1} \mP_\pi \mP_\pi^\top \\
        &= \tilde{\mC} \mP_\pi^\top \\ 
        &= \tilde{\mC} \mP_{\pi^{-1}} \, , 
    \end{align*}
    where $\tilde{\mC} := \mP_\pi^\top \mC^{-1} \mP_\pi$ is block diagonal and $\pi^{-1}$ is block-preserving.
\end{proof}

\begin{definition}[Local disentanglement; \citet{lachapelle2024additive}] \label{def:local_disentanglement}
A learned decoder $\hat\vf: \sR^{d_z} \rightarrow \sR^{d_x}$ is said to be locally disentangled w.r.t. the ground-truth decoder $\vf$ when $\hat\vf \circ \vh (\vz) = \vf(\vz)$ for all $\vz \in \Zcal_\textnormal{supp}$ where the mapping $\vh$ is a diffeomorphism from $\gZ_\text{supp}$ onto its image satisfying the following property: for all $\vz \in \gZ_\textnormal{supp}$, $D\vh(\vz)$ is a block-permutation matrix respecting $\gB := \{B_1, \dots, B_K\}$.
\end{definition}

Note that, in the above definition, the permutation of the blocks might change from one $\vz$ to another (see Example 5 in \citet{lachapelle2024additive}). To prevent this possibility, we will assume that $\Zsupp$ is \textit{path-connected}:
\begin{definition}[Path-connected sets]\label{def:path_connected}
    A set $\Zsupp \subseteq \mathbb{R}^{d_z}$ is path-connected if for all pairs of points $\zb^0, \zb^1 \in \Zsupp$, there exists a continuous map $\bm{{\phi}}:[0,1] \rightarrow \Zsupp$ such that $\bm{\phi}(0)=\zb^0$ and $\bm{\phi}(1) = \zb^1$. Such a map is called a path between $\zb^0$ and $\zb^1$.
\end{definition}

The following Lemma from \citet{lachapelle2024additive} can be used to show that when $\vh$ is a diffeomorphism and $\Zcal_\textnormal{supp}$ is path-connected, the block structure cannot change. This is due to the fact that $D\vh(\vz)$ is invertible everywhere and a continuous function of $\vz$. We restate the Lemma without proof.

\begin{lemma}[\citet{lachapelle2024additive}] \label{lem:same_block_permutation}
    Let $\gC$ be a path-connected\footnote{This lemma also holds if $\gC$ is connected.} topological space and let $\mM: \gC \rightarrow \sR^{d\times d}$ be a continuous function. Suppose that, for all $c \in \gC$, $\mM(c)$ is an invertible $\gB$-block permutation matrix~(Definition~\ref{def:block_permuation}). Then, there exists a $\gB$-respecting permutation $\pi$ such that for all $c \in \gC$ and all distinct $B, B' \in \gB$, $\mM(c)_{\pi(B'), B} = 0$.
\end{lemma}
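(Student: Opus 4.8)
The plan is to exploit the fact that the block-level permutation induced by a $\gB$-block permutation matrix is a \emph{discrete} invariant, which therefore cannot vary continuously over a connected space. For each $c \in \gC$, write $\mM(c) = \mC(c)\mP_{\pi_c}$ as in \cref{def:block_permuation}, and let $\bar\pi_c$ be the permutation of the blocks induced by $\pi_c$. Using that $\mC(c)$ may be taken block-diagonal with invertible diagonal blocks (exactly as in the proof of \cref{prop:inverse_block_perm}) and multiplying by $\mP_{\pi_c}$, one sees that the sub-block $\mM(c)_{B,B'}$ vanishes unless $B = \bar\pi_c(B')$, and that $\mM(c)_{\bar\pi_c(B'),B'}$ is itself invertible; in particular $\bar\pi_c(B')$ is always a block of the same cardinality as $B'$.

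The next step is to package this into continuous scalar functions. For each ordered pair of blocks $(B,B')$ with $|B| = |B'|$, define $g_{B,B'}\colon \gC \to \sR$ by $g_{B,B'}(c) := \det\!\big(\mM(c)_{B,B'}\big)$. Since $\mM$ is continuous and "extract a fixed sub-matrix, then take its determinant" is a continuous (polynomial) operation, each $g_{B,B'}$ is continuous. By the previous paragraph, for every fixed $c$ and every column-block $B'$ there is \emph{exactly one} row-block $B$ with $g_{B,B'}(c)\neq 0$, namely $B=\bar\pi_c(B')$.

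Now run the connectedness argument. Fix a column-block $B'$ and consider the open sets $U_B := \{c \in \gC : g_{B,B'}(c)\neq 0\}$, indexed by row-blocks $B$ with $|B|=|B'|$. By the "exactly one" statement these sets are pairwise disjoint and their union is all of $\gC$. Since $\gC$ is connected (and this is the only way path-connectedness enters, which is why the footnote's weaker hypothesis suffices), precisely one $U_B$ is nonempty and hence equals $\gC$. This selects a row-block $\bar\pi(B') := B$ \emph{independently of} $c$, and by construction $\bar\pi_c = \bar\pi$ for all $c$. As each $\bar\pi_c$ is a bijection of $\gB$ (assuming $\gC\neq\emptyset$, else the claim is vacuous), so is $\bar\pi$.

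Finally, lift $\bar\pi$ to a permutation $\pi$ of $\{1,\dots,d\}$ by choosing, for each block $B$, an arbitrary bijection $B \to \bar\pi(B)$; this $\pi$ is $\gB$-respecting. For distinct $B,B' \in \gB$, $\pi(B') = \bar\pi(B')$ as a set, and injectivity of $\bar\pi$ gives $\bar\pi(B')\neq\bar\pi(B)$, so $\mM(c)_{\pi(B'),B} = \mM(c)_{\bar\pi(B'),B}$ is the zero sub-block for every $c$, as required. The topological core here is short; I expect the only mildly delicate point to be the first-paragraph bookkeeping — carefully verifying from $\mM(c)=\mC(c)\mP_{\pi_c}$ that the off-diagonal sub-blocks vanish and the diagonal one is invertible — which is routine but must be stated cleanly.
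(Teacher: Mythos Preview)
The paper does not provide its own proof of this lemma: it is explicitly cited as a result from \citet{lachapelle2024additive}, and the text immediately preceding the statement reads ``We restate the Lemma without proof.'' There is therefore nothing in the paper to compare your argument against.

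That said, your proof is correct. The decomposition $\mM(c)=\mC(c)\mP_{\pi_c}$ with $\mC(c)$ block-diagonal indeed yields that $\mM(c)_{B,B'}$ vanishes unless $B=\bar\pi_c(B')$ and is invertible when $B=\bar\pi_c(B')$; the determinant functions $g_{B,B'}$ are continuous; and the ``exactly one nonzero'' property makes the open sets $U_B$ a disjoint open cover of $\gC$, so connectedness forces $\bar\pi_c$ to be constant in $c$. The final lift of the block-level bijection $\bar\pi$ to an index-level $\gB$-respecting permutation $\pi$ is handled correctly. One small cosmetic point: in your last paragraph you could simply say that for distinct $B,B'$ we have $\pi(B')=\bar\pi(B')\neq\bar\pi(B)$, and since the only nonzero row-block over column-block $B$ is $\bar\pi(B)$, the sub-block $\mM(c)_{\pi(B'),B}$ vanishes---this is what you wrote, just stated a touch more directly.
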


It turns out that, in general, having that $D\vh$ has a constant block-permutation structure across its support $\gZ_\text{supp}$ is not enough to make the jump to global disentanglement. See Example 7 from \citet{lachapelle2024additive}. We now propose a novel condition on the support $\gZ_\text{supp}$ and will show it is sufficient to guarantee global disentanglement in Lemma~\ref{lem:local2global}.

\begin{definition}[Aligned-connected sets]\label{def:aligned-connected}
A set $\gA \subseteq \mathbb{R}^d$ is said to be \textit{aligned-connected} w.r.t. a partition $\{B_1, B_2, \dots, B_K\}$ if, for all $k \in [K]$ and all $\va' \in \gA$, the set $\{\va \in \gA \mid \va_{B_k} = \va'_{B_k}\}$ is path-connected. 
\end{definition}

\begin{remark}[Relation to path-connectedness]
There exist sets that are path-connected but not aligned-connected and vice-versa. Example 7 from \citet{lachapelle2024additive} presents a ``U-shaped'' support that is path-connected but not aligned-connected. Moreover, the set $\gA := \gA^{(1)} \cup \gA^{(2)}$ where $\gA^{(1)} := \{\va \in \sR^2 \mid a_1 \geq 1, a_2 \geq 1\}$ and $\gA^{(2)} := \{\va \in \sR^2 \mid a_1 \leq -1, a_2 \leq -1\}$ is aligned-connected w.r.t. the partition $\gB = \{\{1\}, \{2\}\}$ but not path-connected.
\end{remark}

We now show how aligned-connectedness combined with path-connectedness is enough to guarantee global disentanglement from local disentanglement.

\begin{lemma}[Local to global disentanglement]\label{lem:local2global}
    Suppose $\vh$ is a diffeomorphism from $\gZ_\textnormal{supp} \subseteq \sR^{d_z}$ to its image and suppose $D\vh(\vz)$ is a $\gB$-block permutation matrix for all $\vz \in \gZ_\textnormal{supp}$ (local disentanglement). If $\gZ_\textnormal{supp}$ is path-connected (\cref{def:path_connected}) and aligned-connected set (\cref{def:aligned-connected}), then $\vh(\vz) = (\vh_1(\vz_{B_{\pi(1)}}), \dots, \vh_1(\vz_{B_{\pi(K)}}))$ for all $\vz \in \gZ_\textnormal{supp}$ where the $\vh_k$ are diffeomorphisms (global disentanglement).  
\end{lemma}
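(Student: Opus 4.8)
The plan is to fix, once and for all, which ground-truth slot each model slot tracks, and then show slot-by-slot that the corresponding block of $\vh$ depends on $\vz$ only through the matching block. First, since $\Zsupp$ is path-connected and $\vz \mapsto D\vh(\vz)$ is continuous with every value an invertible $\gB$-block permutation matrix, \cref{lem:same_block_permutation} supplies a single $\gB$-respecting permutation, equivalently a permutation $\pi$ of $[K]$, such that for every $\vz \in \Zsupp$ the $B_m$-block $\vh_{B_m}$ of $\vh$ satisfies $D_{B_j}\vh_{B_m}(\vz)=\zerob$ for all $j \neq \pi(m)$, while the remaining block $D_{B_{\pi(m)}}\vh_{B_m}(\vz)$ is invertible (because $D\vh(\vz)$ is). It therefore suffices to prove that each $\vh_{B_m}$ is a function of $\vz_{B_{\pi(m)}}$ alone; the induced slot maps $\vh_m$ then have everywhere-invertible Jacobians, and one checks they are diffeomorphisms onto their images.

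Next, fix $m$ and two points $\vz^0,\vz^1 \in \Zsupp$ with $\vz^0_{B_{\pi(m)}} = \vz^1_{B_{\pi(m)}}$; we must show $\vh_{B_m}(\vz^0)=\vh_{B_m}(\vz^1)$. By aligned-connectedness (\cref{def:aligned-connected}) applied to the block $B_{\pi(m)}$, the slice $\{\vz \in \Zsupp \mid \vz_{B_{\pi(m)}} = \vz^0_{B_{\pi(m)}}\}$ is path-connected, so there is a continuous $\bm\phi : [0,1] \to \Zsupp$ with $\bm\phi(0)=\vz^0$, $\bm\phi(1)=\vz^1$ and $\bm\phi(t)_{B_{\pi(m)}}$ constant in $t$. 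The claim reduces to showing $t \mapsto \vh_{B_m}(\bm\phi(t))$ is constant, which I would establish by proving that $T := \{t \in [0,1] \mid \vh_{B_m}(\bm\phi(t)) = \vh_{B_m}(\vz^0)\}$ is both open and closed in $[0,1]$: it is closed by continuity and nonempty, and for openness one shows that near any point $\vz_0 \in \Zsupp$ the map $\vh_{B_m}$, restricted to the aligned slice through $\vz_0$, is locally constant. Locally this follows from the block structure: extending $\vh$ to a $C^1$ map on an open neighbourhood (\cref{rem:ck_function}) and using that $D\vh(\vz_0)$ is invertible, the inverse function theorem lets one introduce the coordinates $\big(\vz_{B_{\pi(m)}},\, (\vh_{B_j}(\vz))_{j\neq m}\big)$ on a neighbourhood of $\vz_0$, in which the vanishing of $D_{B_j}\vh_{B_m}$ on $\Zsupp$ for $j \neq \pi(m)$ forces $\vh_{B_m}$ to be independent of the $\vz_{B_{\pi(m)}}$-coordinate along the slice. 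Connectedness of $[0,1]$ then gives $T = [0,1]$, hence $\vh_{B_m}(\vz^0)=\vh_{B_m}(\vz^1)$.

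Since $\vz^0,\vz^1$ were arbitrary, $\vh_{B_m}$ descends to a well-defined map $\vh_m$ on the projection of $\Zsupp$ onto its $B_{\pi(m)}$-coordinates; assembling these over $m \in [K]$ yields $\vh(\vz) = (\vh_1(\vz_{B_{\pi(1)}}), \dots, \vh_K(\vz_{B_{\pi(K)}}))$ with the $\vh_m$ diffeomorphisms by the first step. The main obstacle is the ``openness'' half of the clopen argument, i.e.\ upgrading the everywhere-fixed block structure of $D\vh$ on a support that may be neither open nor a product into pointwise constancy of $\vh_{B_m}$ along a merely continuous aligned path; this is precisely what makes aligned-connectedness necessary (path-connectedness of $\Zsupp$ alone is insufficient, cf.\ Example~7 of \citet{lachapelle2024additive}), and care is needed because $D\vh$ is controlled only on $\Zsupp$ itself, not on the open neighbourhoods produced by the inverse function theorem.
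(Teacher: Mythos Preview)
Your overall plan is the paper's: fix the permutation $\pi$ once via \cref{lem:same_block_permutation} (path-connectedness), then show each $\vh_{B_m}$ is constant on every aligned slice $\{\vz\in\Zsupp:\vz_{B_{\pi(m)}}=\text{const}\}$ (aligned-connectedness). The difference is entirely in how you execute the second step.

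The paper dispatches it directly with the fundamental theorem of calculus along the aligned path $\phi$:
\[
\vh_{B_m}(\vz^1) - \vh_{B_m}(\vz^0) \;=\; \int_0^1 D\vh_{B_m}(\phi(t))\,\phi'(t)\,dt \;=\; 0,
\]
since $\phi'_{B_{\pi(m)}}\equiv 0$ (that coordinate is frozen along the path) and $D_{B_{\pi(j)}}\vh_{B_m}=0$ on $\Zsupp$ for every $j\neq m$ kills all remaining terms. No clopen argument, no coordinate change.

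Your IFT detour does not actually sidestep this computation. Passing to coordinates $\Psi(\vz)=(\vz_{B_{\pi(m)}},(\vh_{B_j}(\vz))_{j\neq m})$ yields $\partial_\vw(\vh_{B_m}\circ\Psi^{-1})=0$ on $\Psi(\Zsupp)$, but turning a vanishing directional derivative on a possibly thin set into local constancy on the slice still requires integrating along a path in the slice---precisely the FTC step above. So the ``openness'' obstacle you flag is the \emph{same} obstruction (the aligned path is only guaranteed continuous) relocated, and the inverse-function-theorem machinery buys nothing. (Also, you presumably meant $\vh_{B_m}$ \emph{depends only on} the $\vz_{B_{\pi(m)}}$-coordinate, not is \emph{independent of} it.) The paper's proof writes $\phi'$ without comment, so it too tacitly assumes a piecewise-$C^1$ aligned path; if that concerns you, the clean fix is to require $C^1$ paths in the definition of aligned-connectedness, or to argue such paths exist under the regular-closed hypothesis present wherever this lemma is invoked---not to trade the one-line integral for a more elaborate argument carrying the identical gap.
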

\begin{proof}
    Since $\vh$ is a diffeomorphism, $D\vh$ is continuous and $D\vh(\vz)$ is invertible for all $\vz \in \Zsupp$. Since we also have that $\Zsupp$ is path-connected, we can apply \cref{lem:same_block_permutation} to get that there exists a permutation $\pi: [K] \rightarrow [K]$ such that, for all $\vz \in \gZ_\textnormal{supp}$ and all distinct $k, k' \in [K]$, we have $D\vh(\vz)_{B_k, B_{\pi(k')}} = 0$. In other words, $D_{B_{\pi(k')}}\vh_{B_k}(\vz) = 0$.
    We must now show that $\vh_{B_k}(\vz)$ depends solely on $\vz_{B_{\pi(k)}}$. Consider another point $\vz' \in \gZ_\textnormal{supp}$ such that $\vz_{B_{\pi(k)}} = \vz'_{B_{\pi(k)}}$. We will now show that $\vh_{B_k}(\vz) = \vh_{B_k}(\vz')$, i.e. changing $\vz_{B^c_{\pi(k)}}$ does not influence $\vh_{B_k}(\vz)$. 
    
    Because $\gZ_\textnormal{supp}$ is aligned-connected, there exists a continuous path $\phi: [0, 1] \rightarrow \gZ_\textnormal{supp}$ such that $\phi(0) = \vz'$, $\phi(1) = \vz$ and $\phi_{B_{\pi(k)}}(t) = \vz_{B_{\pi(k)}} = \vz'_{B_{\pi(k)}}$ for all $t \in [0,1]$. By the fundamental theorem of calculus, we have that 
    \begin{align*}
        \vh_{B_k}(\vz) - \vh_{B_k}(\vz') &= \int_0^1 (\vh_{B_k} \circ \phi)'(t) dt \\
        &= \int_0^1 D\vh_{B_k}(\phi(t))\phi'(t) dt \\
        &= \int_0^1 \left( D_{B_{\pi(k)}}\vh_{B_{k}} (\phi(t))\phi'_{B_{\pi(k)}}(t) + \sum_{k' \not= k} D_{B_{\pi(k')}}\vh_{B_{k}} (\phi(t))\phi'_{B_{\pi(k')}}(t) \right) dt \\
        &= \int_0^1 \left( D_{B_{\pi(k)}}\vh_{B_{k}} (\phi(t))\bm{0} + \sum_{k' \not= k} \bm{0}\phi'_{B_{\pi(k')}}(t) \right) dt \\
        &= \bm{0} \,,
    \end{align*}
    where we used the fact that $\phi_{B_{\pi(k)}}(t)$ is a constant function of $t$ and  $D_{B_{\pi(k')}}\vh_{B_k}(\vz) = 0$ for distinct $k, k'$.
    
    We conclude that, for all $k$, we can write $\vh_{B_k}(\vz) = \vh_{B_k}(\vz_{B_{\pi(k)}})$, which is the desired result.

    Additionally, the functions $\vh_{B_k}(\vz_{B_{\pi(k)}})$ are diffeomorphisms because their Jacobians must be invertible otherwise the Jacobian of $\vh$ (which is block diagonal) would not be invertible (which would violate the fact that it is a diffeomorphism). 
\end{proof}

\textbf{Contrasting with \citet{lachapelle2024additive}.} Instead of assuming aligned-connectedness, \citet{lachapelle2024additive} assumed that the block-specific decoders, which would correspond to the $\vf^k(\vz_{B_k})$ in \eqref{eq:characterisation_block_n_main}, are injective which, when combined with path-connectedness, is also enough to go from local to global disentanglement in the context of additive decoders ($n=1$). Whether a similar strategy could be adapted for more general decoders with at most $n$th order interactions is left as future work.

\subsection{Disentanglement (At Most \zeroth\ Order/No Interaction)}\label{app:ident_0th}
\begin{lemma}\label{lem:no_interaction_w_m}
    Let $\Zsupp \subseteq \Zcal$ be a regular closed set (\cref{def:regularly_closed}). Let $\fb: \Zcal \rightarrow \Xcal$ be $C^1$ and $\hb: \hatZsupp \rightarrow \Zsupp$ be a diffeomorphism. Let $\hat \fb  := \fb \circ \hb$. If $\fb$ has no interaction (Definition~\ref{def:at_most_mth_order_interaction} with $n=0$), then, for all $j, j' \in [d_z]$ and $\zb \in \hatZsupp$, we have
    \begin{align}
        D_j\hat \fb(\zb) \odot D_{j'}\hat \fb(\zb) = \Wb^{\fb}(\hb(\zb))\vm^{\hb}(\zb, (j,j'))\,,
    \end{align}
    where
    \begin{align*}
    \Wb^{\fb}(\zb) &:= [\Wb^{\fb}_k(\zb)]_{k\in[K]} \\
    \mW^{\fb}_{k}(\zb) &:= [D_{i_1}\fb(\zb) \odot D_{i_2}\fb(\zb)]_{(i_1, i_2) \in B^2_{k}}\\
    \vm^{\vh}(\zb, (j, j')) &:= [\vm^{\hb}_k(\zb, (j, j'))]_{k\in[K]} \\
\vm^{\hb}_{k}(\zb, (j, j')) &:= 
[
D_{j'}\hb_{i_1}(\zb)D_{j}\hb_{i_2}(\zb)
]_{(i_1,i_2) \in B^2_{k}}\,.
\end{align*}
\end{lemma}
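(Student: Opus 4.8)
The plan is a one-shot chain-rule computation. The only point requiring care is ensuring that the derivatives written below are well defined on the (possibly non-open) set $\hatZsupp$; after that, the ``no interaction'' hypothesis on $\fb$ is used to cancel every cross-slot term, and the result is read off directly from the definitions of $\Wb^{\fb}$ and $\vm^{\hb}$. I do not anticipate a genuine obstacle: past the regularity step, the argument is pure index bookkeeping.

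First I would dispose of the regularity issue. Since $\hb:\hatZsupp\to\Zsupp$ is a diffeomorphism, $\hb^{-1}:\Zsupp\to\hatZsupp$ is a homeomorphism, so applying \cref{lem:homeo_regular_closed} with $A=\Zsupp$ (regular closed by hypothesis) gives $\hatZsupp\subseteq\overline{\hatZsupp^{\circ}}$. By \cref{lem:equal_derivatives} the first derivatives of $\hb$, and hence of $\hat\fb=\fb\circ\hb$, are therefore uniquely defined at every point of $\hatZsupp$, independently of the chosen $C^1$ extension; all identities below are understood pointwise on $\hatZsupp$. (This is the sole use of regular-closedness.)

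Next, fix $\zb\in\hatZsupp$, set $\zb':=\hb(\zb)\in\Zsupp\subseteq\Zcal$, and fix an output index $l\in[d_x]$. The chain rule gives $D_j\hat f_l(\zb)=\sum_{i\in[d_z]}D_if_l(\zb')\,D_j\hb_i(\zb)$ for each $j$, so, using that the $l$-th coordinate of a Hadamard product is the product of $l$-th coordinates,
\[
\bigl(D_j\hat\fb(\zb)\odot D_{j'}\hat\fb(\zb)\bigr)_l=\sum_{i_1\in[d_z]}\sum_{i_2\in[d_z]}D_{i_1}f_l(\zb')\,D_{i_2}f_l(\zb')\,D_j\hb_{i_1}(\zb)\,D_{j'}\hb_{i_2}(\zb).
\]
I would then split this double sum into the pairs with $i_1,i_2$ in a common slot $B_k$ and the pairs with $i_1\in B_k$, $i_2\in B_{k'}$ for $k\neq k'$. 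For the latter, the hypothesis that distinct slots of $\fb$ have no interaction (\cref{def:0th_order_interaction}, valid on all of $\Zcal$ and hence at $\zb'$) gives $D_{i_1}\fb(\zb')\odot D_{i_2}\fb(\zb')=\zerob$, so $D_{i_1}f_l(\zb')\,D_{i_2}f_l(\zb')=0$ and those terms drop out.

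Finally, in each surviving inner sum over $(i_1,i_2)\in B_k^2$ I would relabel $i_1\leftrightarrow i_2$; this is legitimate because $B_k^2$ is symmetric, and it replaces $D_j\hb_{i_1}(\zb)\,D_{j'}\hb_{i_2}(\zb)$ by $D_{j'}\hb_{i_1}(\zb)\,D_j\hb_{i_2}(\zb)$ while leaving $D_{i_1}f_l(\zb')\,D_{i_2}f_l(\zb')$ untouched. The right-hand side then reads
\[
\sum_{k\in[K]}\sum_{(i_1,i_2)\in B_k^2}\bigl(D_{i_1}\fb(\zb')\odot D_{i_2}\fb(\zb')\bigr)_l\;D_{j'}\hb_{i_1}(\zb)\,D_j\hb_{i_2}(\zb),
\]
which is exactly the $l$-th entry of $\Wb^{\fb}(\zb')\,\vm^{\hb}(\zb,(j,j'))$, since the column of $\Wb^{\fb}(\zb')$ indexed by $(i_1,i_2)\in B_k^2$ is $D_{i_1}\fb(\zb')\odot D_{i_2}\fb(\zb')$ and the corresponding entry of $\vm^{\hb}(\zb,(j,j'))$ is $D_{j'}\hb_{i_1}(\zb)\,D_j\hb_{i_2}(\zb)$. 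As $l$ and $\zb$ were arbitrary, this is the claimed identity. The essential work is done by the no-interaction assumption, which prunes the full double sum down to within-slot pairs; everything else is deliberate bookkeeping.
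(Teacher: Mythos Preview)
Your proposal is correct and follows essentially the same approach as the paper: both arguments establish well-definedness of derivatives on $\hatZsupp$ via \cref{lem:homeo_regular_closed} and \cref{lem:equal_derivatives}, expand $D_j\hat\fb\odot D_{j'}\hat\fb$ by the chain rule, apply the no-interaction hypothesis to kill cross-slot terms, and recognize the surviving within-slot sum as $\Wb^{\fb}\vm^{\hb}$. Your explicit $i_1\leftrightarrow i_2$ relabeling to match the ordering in $\vm^{\hb}_k$ is a nice touch that the paper leaves implicit (relying on the symmetry of $D_{i_1}\fb\odot D_{i_2}\fb$), but this is a purely presentational difference.
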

\begin{proof}
We have that 
\begin{align*}
    \hat \fb(\vz)  = \fb \circ \hb(\vz),\ \forall \vz \in \hatZsupp\, .
\end{align*}
Following the same line of argument as \citet{lachapelle2024additive}, we can use Lemma~\ref{lem:equal_derivatives} to say that the function $\hat \fb(\vz)  = \fb \circ \hb(\vz)$ has well-defined derivatives on $\overline{(\hatZsupp)^\circ}$. Since $\vh^{-1}$ is a diffeomorphism from $\Zsupp$ (which is regular closed) to $\hatZsupp$, Lemma~\ref{lem:homeo_regular_closed} implies that $\hatZsupp \subseteq \overline{(\hatZsupp)^\circ}$. This means that the function $\hat \fb(\vz)  = \fb \circ \hb(\vz)$ has well-defined derivatives for all $\vz \in \hatZsupp$.

By taking the derivative w.r.t. $\zb_j$ on both sides of $\hat\fb(\zb) = \fb \circ \hb(\zb)$, we get
\begin{equation}\label{eqn:deriv_equiv}
D_{j}\hat \fb(\zb) = \sum_{k \in [K]}\sum_{i \in B_k}D_i\fb(\hb(\zb)) D_{j}\hb_i(\zb)
\end{equation}
We thus have that
\begin{align*}
    D_{j}\hat \fb(\zb)D_{j'}\hat \fb(\zb) &= \left( \sum_{k_1 \in [K]} \sum_{i_1 \in B_{k_1}} D_{i_1}\fb(\hb(\zb)) D_{j}\hb_{i_1}(\zb)\right) \odot \left( \sum_{k_2 \in [K]} \sum_{{i_2} \in B_{k_2}} D_{i_2}\fb(\hb(\zb)) D_{j'}\hb_{i_2}(\hat \zb)\right) \\
    &= \sum_{k_1 \in [K]} \sum_{i_1 \in B_{k_1}} \sum_{k_2 \in [K]} \sum_{i_2 \in B_{k_2}}  D_{i_1}\fb(\hb(\zb))\odot D_{i_2}\fb(\hb(\zb)) D_{j}\hb_{i_1}(\zb) D_{j'}\hb_{i_2}(\zb) \\
    &= \sum_{k_1 \in [K]} \sum_{i_1 \in B_{k_1}} \sum_{i_2 \in B_{k_1}}  D_{i_1}\fb(\hb(\zb))\odot D_{i_2}\fb(\hb(\zb)) D_{j}\hb_{i_1}(\zb) D_{j'}\hb_{i_2}(\zb) \, ,
\end{align*}
where the last equality used the fact that $\fb$ has no interaction (Definition~\ref{def:0th_order_interaction}). We conclude by noticing
\begin{align*}
    D_{j}\hat \fb(\zb)D_{j'}\hat \fb(\zb) &= \sum_{k_1 \in [K]} \sum_{(i_1, i_2) \in B^2_{k_1}}  D_{i_1}\fb(\hb(\zb))D_{i_2}\fb(\hb(\zb)) D_{j}\hb_{i_1}(\zb) D_{j'}\hb_{i_2}(\zb) \\
    &= \Wb^{\fb}(\hb(\zb))\vm^{\hb}(\zb, (j,j')) \,.
\end{align*}
\end{proof}

\begin{theorem}
\label{theorem:no_interaction_ident}
Let $\fb: \Zcal \rightarrow \Xcal$ be a $C^{1}$ diffeomorphism satisfying interaction asymmetry (\cref{as:interac_asym}) for all equivalent generators (\cref{def:equiv-gen}) for $n=0$. Let $\Zsupp \subseteq \Zcal$ be regular closed (\cref{def:regularly_closed}), path-connected (\cref{def:path_connected}) and aligned-connected (\cref{def:aligned-connected}). A  model ${\hat \fb}: \Zcal \rightarrow \mathbb{R}^{d_x}$ disentangles $\zb$ on $\Zsupp$ w.r.t.\ ${\fb}$ (\cref{def:slot_identifiability}) if it is (\emph{i}) a $C^{1}$ diffeomorphism between $\hatZsupp$ and $\Xsupp$ with (\emph{ii}) at most $\zeroth$ order interactions across slots (\cref{def:at_most_mth_order_interaction}) on $\hatZsupp$.
\end{theorem}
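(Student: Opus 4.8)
The plan is to first establish \emph{local disentanglement} --- that the Jacobian of the reparametrization between the two latent spaces is a block--permutation matrix at every point of the support --- and then upgrade this to \emph{global disentanglement} via the connectedness hypotheses on $\Zsupp$. Concretely, I would set $\hb := \fb^{-1}\circ\hat\fb$, which is a $C^1$ diffeomorphism from $\hatZsupp$ onto $\Zsupp$ with $\hat\fb = \fb\circ\hb$ there, and $\vh := \hb^{-1} = \hat\fb^{-1}\circ\fb : \Zsupp\to\hatZsupp$; as in the proof of \cref{lem:no_interaction_w_m}, regular closedness of $\Zsupp$ together with \cref{lem:homeo_regular_closed,lem:equal_derivatives} ensures $D\hb$, $D\vh$ and $D\hat\fb$ are well defined on their (closed) domains. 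Once I show that $D\hb(\zb)$ is a $\gB$-block permutation matrix (\cref{def:block_permuation}) for every $\zb\in\hatZsupp$, \cref{prop:inverse_block_perm} gives the same for $D\vh(\zb') = \big(D\hb(\vh(\zb'))\big)^{-1}$ at every $\zb'\in\Zsupp$; since $\hat\fb\circ\vh = \fb$ on $\Zsupp$ and $\Zsupp$ is path-connected and aligned-connected, \cref{lem:local2global} then yields a fixed permutation $\pi$ and slot-wise diffeomorphisms $\vh_k$ with $\vh(\zb) = (\vh_1(\zb_{B_{\pi(1)}}),\dots,\vh_K(\zb_{B_{\pi(K)}}))$, which is exactly \cref{def:slot_identifiability}. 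So the whole argument reduces to the pointwise block-permutation claim, a linear-algebra statement.

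For the pointwise claim, fix $\zb\in\hatZsupp$ and write $\vy := \hb(\zb)$. First I would use that $\fb$ has at most zeroth-order interactions across slots (\cref{as:interac_asym}(i) with $n=0$, i.e.\ \cref{def:0th_order_interaction}) to partition the output indices into disjoint sets $L_1(\vy),\dots,L_K(\vy)$, where $l\in L_k(\vy)$ iff $D_if_l(\vy)\neq 0$ for some $i\in B_k$ (disjointness is precisely the no-interaction condition), plus indices with vanishing gradient. Writing $\vv_k^j(\zb):=\sum_{i\in B_k}D_j\hb_i(\zb)\,D_i\fb(\vy)$ for the contribution of ground-truth slot $k$ to column $j$ of $D\hat\fb(\zb)$, the chain rule gives $D_j\hat\fb(\zb)=\sum_k\vv_k^j(\zb)$ with $\vv_k^j(\zb)$ supported in $L_k(\vy)$; hence the vectors $\vv_k^j(\zb)\odot\vv_k^{j'}(\zb)$ have pairwise disjoint supports and sum to $D_j\hat\fb(\zb)\odot D_{j'}\hat\fb(\zb)$ (this is the automatically-satisfied $n=0$ case of sufficient independence, \cref{def:0th_order_suff_ind}; alternatively apply \cref{lem:no_interaction_w_m} and \cref{lemma:sum_rank_null_factorizes}). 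Taking $j\in B_a$, $j'\in B_b$ with $a\neq b$, hypothesis (ii) forces $\vv_k^j(\zb)\odot\vv_k^{j'}(\zb)=\vzero$ for every $k$. Next I would fix $k$ and set $U_a:=\operatorname{span}\{D_j\hb_{B_k}(\zb):j\in B_a\}\subseteq\RR^{|B_k|}$ and $W_a := [D_i\fb(\vy)]_{i\in B_k}\,U_a$. Since $\fb$ is an immersion, $[D_i\fb(\vy)]_{i\in B_k}$ has full column rank $|B_k|$, so $\dim W_a=\dim U_a$; invertibility of $D\hb(\zb)$ gives $\sum_a U_a=\RR^{|B_k|}$; and bilinearity of $\odot$ with the previous display makes the $W_a$ pairwise support-disjoint, hence independent, so $\RR^{|B_k|}=\bigoplus_a U_a$ internally.

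The hard part is then ruling out that two of the $U_a$ are nonzero. Suppose $U_a\neq\{\vzero\}$ and $U_b\neq\{\vzero\}$ with $a\neq b$, so $|B_k|\ge 2$; put $U:=U_a$, $U':=\bigoplus_{a'\neq a}U_{a'}\supseteq U_b$, so $\RR^{|B_k|}=U\oplus U'$ with both summands nonzero and, since $W_a$ is support-disjoint from each $W_{a'}$ ($a'\neq a$), the images $[D_i\fb(\vy)]_{i\in B_k}U$ and $[D_i\fb(\vy)]_{i\in B_k}U'$ support-disjoint. I would pick an invertible $\boldsymbol{M}_k$ whose inverse has its first $\dim U$ columns a basis of $U$ and its remaining columns a basis of $U'$, set $\boldsymbol{M}_m:=\boldsymbol{I}$ for $m\neq k$, and let $\bar\fb$ be the corresponding equivalent generator (\cref{def:equiv-gen}) and $\tilde\vy$ the transformed point. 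Differentiating the defining identity gives $[D_i\bar\fb(\tilde\vy)]_{i\in B_k}=[D_i\fb(\vy)]_{i\in B_k}\boldsymbol{M}_k^{-1}$; so, splitting $B_k=A'\sqcup B'$ into the first $\dim U$ and the remaining indices, the $A'$-columns span $[D_i\fb(\vy)]_{i\in B_k}U$ and the $B'$-columns span $[D_i\fb(\vy)]_{i\in B_k}U'$, which are support-disjoint, whence $D_i\bar\fb(\tilde\vy)\odot D_j\bar\fb(\tilde\vy)=\vzero$ for all $i\in A'$, $j\in B'$ --- i.e.\ $\bar\zb_{A'}$ and $\bar\zb_{B'}$ fail to have first-order interaction at $\tilde\vy$ within $\bar\fb$, contradicting \cref{as:interac_asym}(ii) for the equivalent generator $\bar\fb$. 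Hence each ground-truth row-block of $D\hb(\zb)$ is supported in a single column-block $B_{\tau(k)}$, with full rank there. A short count using invertibility of $D\hb(\zb)$ --- the rows in $\bigcup_{k\in\tau^{-1}(a)}B_k$ lie in the coordinate subspace spanned by columns $B_a$ and, symmetrically, the columns $B_a$ lie in the coordinate subspace spanned by rows $\bigcup_{k\in\tau^{-1}(a)}B_k$, forcing $\sum_{k\in\tau^{-1}(a)}|B_k|=|B_a|$ for all $a$, which for positive block sizes makes $\tau$ a bijection --- then makes every block $D\hb(\zb)_{B_k,B_{\tau(k)}}$ square and invertible, so $D\hb(\zb)$ is a $\gB$-block permutation matrix. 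This is the pointwise claim, completing the plan.

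To recap where the difficulty lies: the contradiction step is the delicate one --- I must choose the within-slot change of basis $\boldsymbol{M}_k$ precisely enough that a single point $\tilde\vy$ witnesses a splitting of the slot $B_k$ with at most zeroth-order interaction for the \emph{equivalent} generator $\bar\fb$, and this is exactly where both the ``for all equivalent generators'' strengthening of interaction asymmetry and the (here automatic) zeroth-order sufficient-independence bookkeeping, encoded in the disjoint output sets $L_k(\vy)$, are indispensable. Everything else --- differentiability on non-open domains, the immersion-rank facts, the combinatorial bijectivity argument, and the local-to-global passage via \cref{lem:local2global} --- should be routine.
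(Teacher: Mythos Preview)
Your proof is correct and follows the paper's overall strategy: show that $D\hb(\zb)$ is a $\gB$-block permutation matrix at every $\zb\in\hatZsupp$ by constructing, whenever a row-block $D\hb_{B_k}$ is spread across two column-blocks, an equivalent generator that violates \cref{as:interac_asym}(ii) at a single point, and then pass to global disentanglement via \cref{prop:inverse_block_perm} and \cref{lem:local2global}. The difference is in how the contradiction step is packaged. The paper writes the key identity through \cref{lem:no_interaction_w_m} in the $\Wb^{\fb},\vm^{\hb}$ formalism, then uses \cref{lemma:linearly_ind_partials_diff_slots} to extract a concrete index set $S$ (straddling two blocks) whose columns $D_i\hb_{B_k}(\zb^*)$ are independent, builds the slot-wise change of basis $\Ab_k$ from those columns, and treats $|B_k|=1$ and $|B_k|>1$ separately. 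You instead work directly with the subspaces $U_a=\operatorname{span}\{D_j\hb_{B_k}(\zb):j\in B_a\}$ and their images $W_a$ under $[D_i\fb]_{i\in B_k}$, deduce $\RR^{|B_k|}=\bigoplus_a U_a$ from support-disjointness of the $W_a$, and build $\Mb_k^{-1}$ from bases of a complementary pair $U=U_a$, $U'=\bigoplus_{a'\neq a}U_{a'}$. Your route is case-free, bypasses \cref{lemma:linearly_ind_partials_diff_slots}, and makes the ``$\tau$ is a bijection'' count explicit; the price is that it leans on output-support disjointness, which is special to $n=0$. The paper's $\Wb^{\fb},\vm^{\hb}$ packaging is heavier here but is precisely what carries over to $n=1,2$ (\cref{lem:1st_order_w_m,lem:2nd_order_w_m}), where no such disjointness holds and the block-by-block separation must instead come from the sufficient-independence rank condition via \cref{lemma:sum_rank_null_factorizes}.
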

\begin{proof}
As mentioned in Section~\ref{sec:local2global}, the proofs will proceed in two steps: First, we show local disentanglement (Definition~\ref{def:local_disentanglement}) and then we show (global) disentanglement via Lemma~\ref{lem:local2global}. We first show local disentanglement.

\textbf{Remark:} We will use the following notation below:
\begin{equation}\label{eqn:outer_prod_not}
D^{1}_{i, j}\fb(\zb) := D_{j}\fb(\zb) \odot D_{i}\fb(\zb) \in \mathbb{R}^{m}
\end{equation}

We first define the function $\hb: \hatZsupp \rightarrow \Zsupp$ relating the latent spaces of these functions on $\hatZsupp$:
\begin{equation}
\hb := {\fb}^{-1} \circ {\hat \fb}
\end{equation}
\newline
The function $\hat \fb$ can then be written in terms of $\fb$ and $\hb$ on $\hatZsupp$:
\begin{equation}
\hat \fb = \fb \circ \hb
\end{equation}
Because $\fb, \hat\fb$ are both $C^{1}$ diffeomorphism between $\Zsupp, \Xsupp$ and $\hatZsupp, \Xsupp$, respectively, we have that $\hb$ is a $C^{1}$ diffeomorphism.

By Lemma~\ref{lem:no_interaction_w_m}, for all $\zb \in \hatZsupp, \ j,j' \in [d_z]$, we have:
\begin{equation}
D_j\hat\fb(\zb) \odot D_{j'}\hat \fb(\zb) = \Wb^{\fb}(\hb(\zb))\vm^{\hb}(\zb, (j,j'))\,
\end{equation}
where $\vw^{\fb}$ and $\vm^{\hb}$ are defined in Lemma~\ref{lem:no_interaction_w_m}. 

Define the sets
\begin{equation}
\gD := \bigcup_{k \in [K]} B_{k}^{2}, \qquad \gD^{c} := \{1, \dots, d_{z}\}^{2} \setminus \gD
\end{equation} 
Because $\boldsymbol{\hat f}$ has no interaction (Definition~\ref{def:0th_order_interaction}), we have that, for all $(j,j') \in \gD^c$
\begin{align*}
    0 &= \mW^\vf(\vh(\vz))\vm^\vh(\vz, (j,j'))\\
      &= \sum_{k \in [K]} \mW_k^\vf(\vh(\vz))\vm_k^\vh(\vz, (j,j')) \,.
\end{align*}
Because $\boldsymbol{f}$ has no interaction, each row $\mW^\vf_{k}(\vh(\boldsymbol{z}))_{n, \cdot}$ is non-zero for at most one $k \in [K]$ (although this $k$ can change for different values of $n$ and $\vz$). This implies that for all $\boldsymbol{z} \in \hatZsupp, (j, j') \in \gD^{c}$, $k \in [K]$: \begin{equation}\label{eq:zero_k_block}
0 = \mW^\vf_{k}(\vh(\boldsymbol{z}))\boldsymbol{m}^\vh_{k}(\boldsymbol{z}, (j, j'))
\end{equation}
\textbf{Case 1:} $|B_{k}| = 1$ (One-Dimensional Slots)

When $|B_{k}| = 1$, for all $k \in [d_z]$, the matrix $\Wb^{\fb}_{k}(\hb(\zb))$ can be written as:
\begin{equation}
    \mW^{\fb}_{k}(\hb(\zb)) = [D_{k}\fb(\zb) \odot D_{k}\fb(\zb)]
\end{equation} 
This matrix has a single column, which must be non-zero since $\fb$ is a $C^{1}$ diffeomorphism. Thus, $\mW^{\fb}_{k}(\hb(\zb))$ has full column rank and thus has a null space equal to $0$. Using \cref{eq:zero_k_block}, we conclude that for all $ (j, j') \in \gD^{c}$, $k \in [d_z]$:
\begin{equation}
0 = \boldsymbol{m}^\vh_{k}(\boldsymbol{z}, (j, j'))
\end{equation}
Applying the definition of $\boldsymbol{m}^\vh_{k}(\boldsymbol{z}, (j, j'))$, this implies that for all $ (j, j') \in \gD^{c}$, $k \in [d_z]$:
\begin{equation}
0 = D_{j'}\boldsymbol{h}_{k}(\boldsymbol{ z})D_{j}\boldsymbol{h}_{k}(\boldsymbol{ z})
\end{equation}

This means each row of the Jacobian matrix $D\boldsymbol{h}(\boldsymbol{z})$ cannot have more than one nonzero value. Since the Jacobian is invertible, these nonzero values must all be different for different rows, otherwise a whole column would be zero. Hence $D\boldsymbol{h}(\boldsymbol{z})$ is a permutation-scaling matrix, i.e. we have local disentanglement.

\textbf{Case 2:} $|B_{k}| > 1 $ (Multi-Dimensional Slots)

Assume for a contradiction that $\boldsymbol{\hat f}$ is not locally disentangled on $\Zsupp$ w.r.t $\boldsymbol{f}$. This implies that there exist a $\boldsymbol{z}^{*} \in \hatZsupp$, $k, k', k'' \in [K]$ for $k' \not= k''$, such that:
\begin{equation}\label{eq:two_non_zero_slot_partials}
D_{B_{k'}}\boldsymbol{h}_{B_k}(\boldsymbol{z}^{*}) \neq \bf{0}, \quad D_{B_{k''}}\boldsymbol{h}_{B_k}(\boldsymbol{z}^{*}) \neq \bf{0}
\end{equation}
Because $\boldsymbol{f}, \boldsymbol{\hat f}$ are $C^1$ diffeomorphisms, we know that $\boldsymbol{h}$ is also a $C^1$ diffeomorphism. Coupling this with \cref{eq:two_non_zero_slot_partials}, Lemma \ref{lemma:linearly_ind_partials_diff_slots} tells us that there exist an $S \subset [d_z]$ with cardinality $|B_k|$ such that:
\begin{equation}\label{eq:s_properties}
\forall B \in \mathcal{B}, \ S \not\subseteq B, \ \ \text{and} \quad \forall i \in S, D_{i}\boldsymbol{h}_{B_k}(\boldsymbol{z}^{*}) \ \ \text{are linearly independent.}
\end{equation}

Now choose any $\bar{B} \in \mathcal{B}$ such that $S_{1} := S \cap \bar{B} \neq \emptyset$. Furthermore, define the set $S_{2} := S \setminus {S_{1}}$. 
\newline
Because $S \not\subseteq \bar{B}$, we know that $S_{2}$ is non-empty. Further, by construction $S = S_{1} \cup S_{2}$. In other words, $S_{1}$ and $S_{2}$ are non-empty, form a partition of $S$, and do not contain any indices from the same slot.

Now construct the matrices, denoted $\boldsymbol{A}_{S_{1}}$ and $\boldsymbol{A}_{S_{2}}$ as follows:
\begin{equation}\label{eq:partial_deriv_A_s}
\boldsymbol{A}_{S_{1}} := D_{S_1}\boldsymbol{h}_{B_k}(\boldsymbol{z}^{*}), \quad \boldsymbol{A}_{S_{2}} := D_{S_2}\boldsymbol{h}_{B_k}(\boldsymbol{z}^{*})
\end{equation}
And the matrix denoted $\boldsymbol{A}_{k}$ as:
\begin{equation}\label{eq:partial_deriv_A_k_0th}
\boldsymbol{A}_{k} := \left [\boldsymbol{A}_{S_{1}}, \boldsymbol{A}_{S_{2}}  \right]
\end{equation}
Note that because, $\forall i \in S$, $D_{i}\boldsymbol{h}_{B_{k}}(\boldsymbol{\hat z}^{*})$ are linearly independent (Eq.~\eqref{eq:s_properties}), we know that $\boldsymbol{A}_k$ is invertible.
\newline
\newline
Now, define the following block diagonal matrix $\boldsymbol{A} \in \mathbb{R}^{d_z \times d_z}$ as follows:
\begin{equation}
\boldsymbol{A} :=
\begin{bmatrix}
    \boldsymbol{A}_1 & 0 & \dots & 0 \\
    0 & \boldsymbol{A}_2 & \dots & 0 \\
    \vdots & \vdots & \ddots & \vdots \\
    0 & 0 & \dots & \boldsymbol{A}_K
\end{bmatrix}
\end{equation}
where $\forall i \in [K] \setminus \{k\}, \boldsymbol{A}_i$ is the identity matrix, and thus invertible, while $\boldsymbol{A}_k$ is defined according to \cref{eq:partial_deriv_A_k_0th}.

Define $\bar\gZ := \mA^{-1}\gZ$, the function $\bar\vh: \bar\gZ \rightarrow \gZ$ as $\boldsymbol{\bar h}(\boldsymbol{z}) := {\boldsymbol{A}}\boldsymbol{z}$ and the function $\bar\vf:\bar\gZ \rightarrow \gX$ as ${\boldsymbol{\bar f}} := \boldsymbol{f} \circ {\boldsymbol{\bar h}}$. By construction we have
\begin{equation}
\forall \boldsymbol{z} \in \mathcal{Z}, \qquad {\boldsymbol{\bar {f}}}\left(\boldsymbol{A}^{-1}_{1}\boldsymbol{z}_{B_{1}}, \dots, \boldsymbol{A}^{-1}_{K}\boldsymbol{z}_{B_{K}}\right) = \boldsymbol{f}(\boldsymbol{z}_{B_{1}}, \dots, \boldsymbol{z}_{B_{K}}) \, .
\end{equation}
Because all $\boldsymbol{A}^{-1}_i$ are invertible, then ${\boldsymbol{\bar {f}}}$ is \emph{equivalent} to ${\boldsymbol{{f}}}$ in the sense of Def. (\ref{def:equiv-gen}).

We can now apply Lemma~\ref{lem:no_interaction_w_m} to $\bar \vf = \vf \circ \bar\vh$ to obtain, for all $j,j' \in [d_z]$:
\begin{align}
    D_j\boldsymbol{\bar f}(\vz) \odot D_{j'}\boldsymbol{\bar f}(\vz) = \mW^\vf(\bar\vh(\vz))\vm^{\bar\vh}(\vz, (j,j'))\,.
\end{align}
Choose $\bar\vz \in \bar\gZ$ such that $\bar\vh(\bar\vz) = \vh(\vz^*)$, which is possible because $\vh(\vz^*) \in \gZ$ and $\bar\vh$ is a bijection from $\bar\gZ$ to $\gZ$). We can then write
\begin{align}
    D_j\boldsymbol{\bar f}(\bar\vz) \odot D_{j'}\boldsymbol{\bar f}(\bar\vz) = \mW^\vf(\vh(\vz^*))\vm^{\bar\vh}(\bar\vz, (j,j'))\,. \label{eq:948548jdjd}
\end{align}

Let $J,J' \subseteq B_k$ be a partition of $B_k$ such that $J$ is the set of columns of $\mA$ corresponding to $\mA_{S_1}$ and $J'$ be the set of columns of $\mA$ corresponding to $\mA_{S_2}$. More formally, we have 
\begin{align*}
    \mA_{B_k, J} = \mA_{S_1}\quad\quad \text{and} \quad\quad \mA_{B_k, J'} = \mA_{S_2}
\end{align*}
Since $\mA_{S_1} = D_{S_1}\vh_{B_k}(\vz^*)$ and $\mA_{S_2} = D_{S_2}\vh_{B_k}(\vz^*)$, we have that 
\begin{align*}
    \mA_{B_k, J} = D_{S_1}\vh_{B_k}(\vz^*)\quad \text{and}\quad \mA_{B_k, J'} = D_{S_2}\vh_{B_k}(\vz^*)
\end{align*}
Since $D\bar\vh(\bar\vz) = \mA$, we have
\begin{align*}
    D_J\bar\vh_{B_k}(\bar\vz) = D_{S_1}\vh_{B_k}(\vz^*)\quad \text{and}\quad D_{J'}\bar\vh_{B_k}(\bar\vz) = D_{S_2}\vh_{B_k}(\vz^*)\,.
\end{align*}
Choose some $(j,j') \in J\times J'$. We know there must exist $(s, s') \in S_1 \times S_2$ such that
\begin{align*}
    D_j\bar\vh_{B_k}(\bar\vz) = D_{s}\vh_{B_k}(\vz^*)\quad \text{and}\quad D_{j'}\bar\vh_{B_k}(\bar\vz) = D_{s'}\vh_{B_k}(\vz^*)\,.
\end{align*}
which implies
\begin{align}
    \vm_k^{\bar\vh}(\bar\vz, (j,j')) = \vm_k^{\vh}(\vz^*, (s,s')) \,.\label{eq:948584ej}
\end{align}

Moreover, since the Jacobian of $\bar \vh$ is block diagonal, we have that $\vm^{\bar\vh}_{k'}(\vz, (j,j')) = 0$ for all $k' \not= k$ (recall that $j,j' \in B_k$). This means we can rewrite \eqref{eq:948548jdjd} as
\begin{align}
    D_j\boldsymbol{\bar f}(\bar\vz) \odot D_{j'}\boldsymbol{\bar f}(\bar\vz) = \mW_k^\vf(\vh(\vz^*))\vm_k^{\bar\vh}(\bar\vz, (j,j'))\,.
\end{align}
Plugging \eqref{eq:948584ej} into the above equation yields
\begin{align}
    D_j\boldsymbol{\bar f}(\bar\vz) \odot D_{j'}\boldsymbol{\bar f}(\bar\vz) = \mW_k^\vf(\vh(\vz^*))\vm_k^{\vh}(\vz^*, (s,s'))\,.
\end{align}
Since $(s,s') \in S_1 \times S_2 \subseteq \gD^c$, we can apply \eqref{eq:zero_k_block} to get
\begin{align}
    D_j\boldsymbol{\bar f}(\bar\vz) \odot D_{j'}\boldsymbol{\bar f}(\bar\vz) = \mW_k^\vf(\vh(\vz^*))\vm_k^{\vh}(\vz^*, (s,s')) = 0\,.
\end{align}
In other words, we found a partition $J, J'$ of the block $B_k$ such that $D_j\boldsymbol{\bar f}(\bar\vz) \odot D_{j'}\boldsymbol{\bar f}(\bar\vz) = 0$ for all $(j,j') \in J\times J'$. This means that the blocks $J$ and $J'$ have \textit{no interaction} in $\bar\vf$ at $\bar\vz$. This is a contradiction with Assm.~\ref{as:interac_asym}. Hence, we have local disentanglement.

\textbf{Local to global disentanglement.} We showed that $D\vh(\vz)$ is a block-permutation matrix for all $\vz \in \hat\gZ_\text{supp}$, i.e. local disentanglement. Consider the inverse $\vh$, $\vv := \vh^{-1}$. The Jacobian of $\vv$ is given by $D\vv(\vz) = D\vh^{-1}(\vz) = D\vh(\vv(\vz))^{-1}$, by the inverse function theorem. By Proposition~\ref{prop:inverse_block_perm}, this means $D\vv(\vz)$ is also a block permutation matrix for all $\vz \in \gZ_\text{supp}$. Since $\gZ_\textnormal{supp}$ is aligned-connected (Definition~\ref{def:aligned-connected}), Lemma~\ref{lem:local2global} guarantees that we can write $\vv(\vz) = (\vv_1(\vz_{B_{\pi(1)}}), \dots, \vv_K(\vz_{B_{\pi(K)}}))$ for all $\vz \in \gZ_\text{supp}$ where the $\vv_k$ are diffeomorphisms. This implies that $\vh(\vz) = (\vv_1^{-1}(\vz_{B_{\pi^{-1}(1)}}), \dots, \vv_K^{-1}(\vz_{B_{\pi^{-1}(K)}}))$ for all $\vz \in \hat\gZ_\textnormal{supp}$, which concludes the proof.
\end{proof}
\subsection{Disentanglement (At Most \first\ Order Interaction)}
\begin{lemma}\label{lem:1st_order_w_m}
    Let $\Zsupp \subseteq \Zcal$ be a regular closed set (\cref{def:regularly_closed}). Let $\fb: \Zcal \rightarrow \Xcal$ be $C^1$ and $\hb: \hatZsupp \rightarrow \Zsupp$ be a diffeomorphism. Let $\hat \fb  := \fb \circ \hb$. If $\fb$ has at most \first\ order interaction (Definition~\ref{def:at_most_mth_order_interaction} with $n=1$), then, for all $j, j' \in [d_z]$ and $\zb \in \hatZsupp$, we have
    \begin{align}
        D^2_{j,j'}\boldsymbol{\hat f}(\vz) = \mW^\vf(\vh(\vz))\vm^\vh(\vz, (j,j'))\,,
    \end{align}
    where 
\begin{align*}
\mW^\vf(\boldsymbol{z}) &:= [\mW^\vf_k(\boldsymbol{z}))]_{k\in[K]} \\
    \mW^\vf_{k}(\boldsymbol{z}) &:= \Bigl[ [D_{i_1}\boldsymbol{f}(\boldsymbol{z})]_{i_1 \in B_{k}}, [D^2_{i_1, i_2}\boldsymbol{f}(\boldsymbol{z})]_{(i_1, i_2) \in B^{2}_{k}}\Bigr]\\
    \boldsymbol{m}^\vh(\boldsymbol{z}, (j, j')) &:= [\boldsymbol{m}^\vh_k(\boldsymbol{z}, (j, j'))]_{k\in[K]} \\
\boldsymbol{m}^\vh_{k}(\boldsymbol{z}, (j, j')) &:= 
\Bigl[
[
D^{2}_{j, j'}\boldsymbol{h}_{i_1}(\boldsymbol{ z}) 
]_{i_1 \in B_{k}}, [
D_{j'}\boldsymbol{h}_{i_2}(\boldsymbol{ z})D_{j}\boldsymbol{h}_{i_1}(\boldsymbol{ z})
]_{(i_1, i_2) \in B^2_k}\Bigr]\,.
\end{align*}
\end{lemma}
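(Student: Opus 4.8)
The plan is to mirror the structure of the proof of \cref{lem:no_interaction_w_m}, simply replacing its first-order chain-rule computation with a second-order one. First I would dispatch the technical point that the first and second derivatives of $\hat\fb = \fb\circ\hb$ are well defined \emph{pointwise} on all of $\hatZsupp$, not merely on its interior: since $\hb^{-1}$ is a diffeomorphism from the regular closed set $\Zsupp$ onto $\hatZsupp$, \cref{lem:homeo_regular_closed} gives $\hatZsupp \subseteq \overline{(\hatZsupp)^\circ}$, and then \cref{lem:equal_derivatives} ensures that the derivatives of $\fb\circ\hb$ (up to the order appearing in the statement) do not depend on the chosen $C^{\,\cdot}$-extension. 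This is verbatim the argument already used for \cref{lem:no_interaction_w_m}.

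The core step is then a direct computation. Differentiating $\hat\fb(\zb) = \fb(\hb(\zb))$ once gives $D_j\hat\fb(\zb) = \sum_{i\in[d_z]} D_i\fb(\hb(\zb))\,D_j\hb_i(\zb)$, exactly as in \cref{eqn:deriv_equiv}. Differentiating once more w.r.t.\ $z_{j'}$ and combining the product rule with the chain rule yields
\begin{equation*}
D^2_{j,j'}\hat\fb(\zb) \;=\; \sum_{i\in[d_z]} D_i\fb(\hb(\zb))\,D^2_{j,j'}\hb_i(\zb) \;+\; \sum_{i,i'\in[d_z]} D^2_{i,i'}\fb(\hb(\zb))\,D_j\hb_i(\zb)\,D_{j'}\hb_{i'}(\zb)\,.
\end{equation*}
Because $\fb$ has at most \first\ order interaction across slots (\cref{def:at_most_mth_order_interaction} with $n=1$, equivalently \cref{def:2nd-order}), the Hessian block $D^2_{i,i'}\fb$ vanishes whenever $i$ and $i'$ lie in different slots. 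Hence the double sum collapses to $\sum_{k\in[K]}\sum_{(i,i')\in B^2_k}$, while the single sum is regrouped as $\sum_{k\in[K]}\sum_{i\in B_k}$, giving a slot-wise decomposition of $D^2_{j,j'}\hat\fb(\zb)$.

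Finally I would identify this decomposition with the claimed block product. For each $k$, its contribution is $\sum_{i\in B_k} D_i\fb(\hb(\zb))\,D^2_{j,j'}\hb_i(\zb) + \sum_{(i,i')\in B^2_k} D^2_{i,i'}\fb(\hb(\zb))\,D_j\hb_i(\zb)\,D_{j'}\hb_{i'}(\zb)$, which is exactly $\Wb^\fb_k(\hb(\zb))\,\vm^\hb_k(\zb,(j,j'))$ once one reads off that the columns of $\Wb^\fb_k$ indexed by $B_k$ (resp.\ $B^2_k$) pair with the entries of $\vm^\hb_k$ indexed by $B_k$ (resp.\ $B^2_k$); here one uses the symmetry $D^2_{i,i'}\fb = D^2_{i',i}\fb$ so the ordering within each pair $(i,i')$ is immaterial. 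Summing over $k$ and invoking the horizontal/vertical concatenation conventions defining $\Wb^\fb$ and $\vm^\hb$ gives $D^2_{j,j'}\hat\fb(\zb) = \Wb^\fb(\hb(\zb))\,\vm^\hb(\zb,(j,j'))$ for all $\zb\in\hatZsupp$, as desired. I do not anticipate a genuine obstacle: the only real care needed is bookkeeping of the block index sets and of second-derivative symmetry; the substantive input is just the vanishing of cross-slot Hessian blocks supplied by the formalization of interaction asymmetry.
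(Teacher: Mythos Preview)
Your proposal is correct and essentially identical to the paper's proof: both handle the well-definedness of derivatives on $\hatZsupp$ via \cref{lem:homeo_regular_closed} and \cref{lem:equal_derivatives}, then apply the second-order chain rule to $\hat\fb=\fb\circ\hb$, use the vanishing of cross-slot Hessian blocks from \cref{def:2nd-order} to collapse the double sum, and regroup slot-wise into the claimed $\Wb^\fb\vm^\hb$ form. The only additional remark you make (second-derivative symmetry) is harmless bookkeeping.
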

\begin{proof}
The exact same argument as the one presented in Lemma~\ref{lem:no_interaction_w_m} (based on \citet{lachapelle2024additive}) guarantees that, $\hat\vf$ and $\vf \circ \vh$ have equal derivatives on $\hatZsupp$. We leverage this fact next.

By taking the derivative w.r.t. $\vz_j$ on both sides of $\hat\vf(\vz) = \vf \circ \vh(\vz)$, we get
\begin{equation}\label{eqn:deriv_equiv_first}
D_{j}\boldsymbol{\hat f}(\boldsymbol{z}) = \sum_{k \in [K]}\sum_{i \in B_k}D_i\boldsymbol{f}(\vh(\boldsymbol{z})) D_{j}\boldsymbol{h}_i(\boldsymbol{z})
\end{equation}
Now take another derivative w.r.t. $\vz_{j'}$ for some $j'\in [d_z]$ to get
\begin{equation*} 
D^{2}_{j, j'}\boldsymbol{\hat f}(\boldsymbol{z}) =  \sum_{k_1 \in [K]} \sum_{i_1 \in B_{k_1}} \left [
D_{i_1}\boldsymbol{f}({\boldsymbol{h}}(\boldsymbol{z}))D^{2}_{j, j'}\boldsymbol{h}_{i_1}(\boldsymbol{z})
+ \sum_{k_2 \in [K]}\sum_{i_2 \in B_{k_2}}D^{2}_{i_1, i_2}\boldsymbol{f}({\boldsymbol{h}}(\boldsymbol{z})) D_{j'}\boldsymbol{h}_{i_2}(\boldsymbol{z})D_{j}\boldsymbol{h}_{i_1}(\boldsymbol{z}) \right ]
\end{equation*}
Because we have \emph{at most first order interactions} (Def.~\ref{def:at_most_mth_order_interaction} with $n=1$), the second sum over $[K]$ drops, and we are left with:
\begin{align*}
D^{2}_{j, j'}\boldsymbol{\hat f}(\boldsymbol{z}) &=  \sum_{k_1 \in [K]} \sum_{i_1 \in B_{k_1}} \left [
D_{i_1}\boldsymbol{f}({\boldsymbol{h}}(\boldsymbol{z}))D^{2}_{j, j'}\boldsymbol{h}_{i_1}(\boldsymbol{z})
+ \sum_{i_2 \in B_{k_1}}D^{2}_{i_1, i_2}\boldsymbol{f}({\boldsymbol{h}}(\boldsymbol{ z})) D_{j'}\boldsymbol{h}_{i_2}(\boldsymbol{ z})D_{j}\boldsymbol{h}_{i_1}(\boldsymbol{ z}) \right ] \\
&= \sum_{k_1 \in [K]} \left [ \sum_{i_1 \in B_{k_1}} D_{i_1}\boldsymbol{f}({\boldsymbol{h}}(\boldsymbol{z}))D^{2}_{j, j'}\boldsymbol{h}_{i_1}(\boldsymbol{z})
+ \sum_{(i_1, i_2) \in B^2_{k_1}}D^{2}_{i_1, i_2}\boldsymbol{f}({\boldsymbol{h}}(\boldsymbol{ z})) D_{j'}\boldsymbol{h}_{i_2}(\boldsymbol{ z})D_{j}\boldsymbol{h}_{i_1}(\boldsymbol{ z}) \right ] \\
&= \mW^\vf(\vh(\vz))\vm^\vh(\vz, (j,j')) \,,
\end{align*}
which concludes the proof.
\end{proof}

\begin{theorem}
\label{theorem:first_order_interaction_ident}
Let $\fb: \Zcal \rightarrow \Xcal$ be a $C^{2}$ diffeomorphism satisfying interaction asymmetry (\cref{as:interac_asym}) for all equivalent generators (\cref{def:equiv-gen}) for $n=1$ and sufficient independence (\cref{def:1st_order_suff_ind}). Let $\Zsupp \subseteq \Zcal$ be regular closed (\cref{def:regularly_closed}), path-connected (\cref{def:path_connected}) and aligned-connected (\cref{def:aligned-connected}). A  model ${\hat \fb}: \Zcal \rightarrow \mathbb{R}^{d_x}$ disentangles $\zb$ on $\Zsupp$ w.r.t.\ ${\fb}$ (\cref{def:slot_identifiability}) if it is (\emph{i}) a $C^{2}$ diffeomorphism between $\hatZsupp$ and $\Xsupp$ with (\emph{ii}) at most $\first$ order interactions across slots (\cref{def:at_most_mth_order_interaction}) on $\hatZsupp$.
\end{theorem}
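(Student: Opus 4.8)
The plan is to follow closely the two-stage strategy of \cref{theorem:no_interaction_ident}: first establish \emph{local disentanglement}, i.e.\ that $D\hb(\zb)$ is a $\gB$-block permutation matrix (\cref{def:block_permuation}) for every $\zb \in \hatZsupp$, where $\hb := \fb^{-1} \circ \hat\fb$; and then upgrade to (global) disentanglement in the sense of \cref{def:slot_identifiability} via \cref{lem:local2global}. Since $\fb$ and $\hat\fb$ are $C^2$ diffeomorphisms onto $\Xsupp$, $\hb$ is a $C^2$ diffeomorphism from $\hatZsupp$ onto $\Zsupp$, and (using \cref{lem:homeo_regular_closed,lem:equal_derivatives} exactly as in \cref{lem:no_interaction_w_m}) the identity $\hat\fb = \fb \circ \hb$ has well-defined derivatives on all of $\hatZsupp$. \cref{lem:1st_order_w_m} then supplies the central factorized identity $D^2_{j,j'}\hat\fb(\zb) = \Wb^\fb(\hb(\zb))\,\vm^\hb(\zb,(j,j'))$ for all $\zb \in \hatZsupp$ and $j,j' \in [d_z]$.

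\textbf{Factorizing over slots.} Because the model $\hat\fb$ has at most \first\ order interactions across slots, $D^2_{j,j'}\hat\fb(\zb) = \zerob$ for every $(j,j') \in \gD^c$, where $\gD := \bigcup_{k} B_k^2$ and $\gD^c := [d_z]^2 \setminus \gD$; hence $\sum_k \Wb^\fb_k(\hb(\zb))\,\vm^\hb_k(\zb,(j,j')) = \zerob$. Sufficient independence (\cref{def:1st_order_suff_ind}) states precisely that $\mathrm{rank}(\Wb^\fb(\hb(\zb))) = \sum_k \mathrm{rank}(\Wb^\fb_k(\hb(\zb)))$, so \cref{lemma:sum_rank_null_factorizes}, applied to the partition of the columns of $\Wb^\fb$ into the slot-blocks $\Wb^\fb_k$, gives the slot-wise vanishing $\Wb^\fb_k(\hb(\zb))\,\vm^\hb_k(\zb,(j,j')) = \zerob$ for every $k \in [K]$, $(j,j') \in \gD^c$, and $\zb \in \hatZsupp$. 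This plays the role of the ``each row of $\Wb^\fb_k$ is nonzero for at most one $k$'' step in the $n=0$ proof, and it is here that sufficient independence — which was automatic for $n=0$ — is genuinely used.

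\textbf{Local disentanglement.} I would then mirror the two cases of \cref{theorem:no_interaction_ident}. If $B_k = \{i\}$ is one-dimensional, the only admissible split is $A = B = \{i\}$, so \cref{as:interac_asym}~(ii) forces $D^2_{i,i}\fb(\zb) \neq \zerob$ everywhere; combined with sufficient independence one argues the relevant kernel is trivial and concludes $D_{j'}\hb_i(\zb)\,D_j\hb_i(\zb) = 0$ for all $(j,j') \in \gD^c$, so the nonzero entries of that row of $D\hb$ lie in a single slot. If $B_k$ is multi-dimensional, argue by contradiction: if $D_{B_{k'}}\hb_{B_k}(\zb^*) \neq \zerob \neq D_{B_{k''}}\hb_{B_k}(\zb^*)$ for some $k' \neq k''$ and $\zb^* \in \hatZsupp$, then \cref{lemma:linearly_ind_partials_diff_slots} yields $S \subseteq [d_z]$ with $|S| = |B_k|$, $D_S\hb_{B_k}(\zb^*)$ invertible, and a partition $S = S_1 \sqcup S_2$ into nonempty parts lying in distinct slots (so $S_1 \times S_2 \subseteq \gD^c$). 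Form the slot-wise block-diagonal change of basis $\boldsymbol{A}$ equal to the identity on every slot except the $k$-th, whose $B_k$-block is $[\, D_{S_1}\hb_{B_k}(\zb^*)\ \ D_{S_2}\hb_{B_k}(\zb^*)\,]$; then $\bar\fb := \fb \circ (\zb \mapsto \boldsymbol{A}\zb)$ is an equivalent generator (\cref{def:equiv-gen}) and hence also satisfies \cref{as:interac_asym} and \cref{def:1st_order_suff_ind}. Applying \cref{lem:1st_order_w_m} to $\bar\fb$ at a point $\bar\zb$ with $\boldsymbol{A}\bar\zb = \hb(\zb^*)$, substituting the slot-wise vanishing above, and tracking how $\boldsymbol{A}$ transports the entries of $\vm^\hb$, one exhibits a partition $J,J'$ of $B_k$ with $D^2_{j,j'}\bar\fb(\bar\zb) = \zerob$ for all $(j,j') \in J \times J'$ — i.e.\ $\zb_J$ and $\zb_{J'}$ have at most \first\ order interaction in $\bar\fb$ at $\bar\zb$, contradicting \cref{as:interac_asym}~(ii) for $\bar\fb$. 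Once $D\hb$ is known to be a $\gB$-block permutation matrix on all of $\hatZsupp$, \cref{prop:inverse_block_perm} gives the same for $D\hb^{-1}$, and path-connectedness plus aligned-connectedness of $\Zsupp$ let \cref{lem:local2global} convert this into the slot-wise decomposition of $\hb^{-1}$, hence of $\hb$, which is exactly \cref{def:slot_identifiability}.

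\textbf{Main obstacle.} The delicate point, absent when $n=0$, is that the slot-blocks $\Wb^\fb_k$ no longer have trivial kernel: they carry both the Jacobian columns $[D_i\fb]_{i \in B_k}$ and the Hessian columns $[D^2_{i,i'}\fb]_{(i,i') \in B_k^2}$, and the latter are already linearly dependent through $D^2_{i,i'} = D^2_{i',i}$. Hence one cannot read off $\vm^\hb_k = \zerob$ directly from $\Wb^\fb_k\,\vm^\hb_k = \zerob$, and the second-order-of-$\hb$ terms $D^2_{j,j'}\hb_{i}$ entering $\vm^\hb_k$ must be shown not to obstruct the contradiction — concretely, that after the slot-wise linear reparametrization the surviving contribution to $D^2_{j,j'}\bar\fb(\bar\zb)$ is exactly zero. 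Carrying this out is where the full strength of sufficient independence is needed, and where the one-dimensional slots require their separate treatment; the remainder is the same bookkeeping as in \cref{theorem:no_interaction_ident}, now with $(n+1)=2$ derivatives of a composition of multivariate functions.
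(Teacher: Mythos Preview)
Your overall plan mirrors the paper's proof closely, and most of the scaffolding (the role of \cref{lem:1st_order_w_m}, the $S=S_1\sqcup S_2$ construction via \cref{lemma:linearly_ind_partials_diff_slots}, the equivalent-generator $\bar\fb$, and the local-to-global step via \cref{lem:local2global}) is exactly right.

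There is, however, one concrete gap that is also the resolution of the ``main obstacle'' you flag. You write that sufficient independence ``states precisely that $\mathrm{rank}(\Wb^\fb)=\sum_k\mathrm{rank}(\Wb^\fb_k)$'' and then apply \cref{lemma:sum_rank_null_factorizes} with the slot-wise partition to get $\Wb^\fb_k\,\vm^\hb_k=\zerob$. That is a strictly weaker statement than \cref{def:1st_order_suff_ind}, which further splits each $\Wb^\fb_k=[\Wb^{\fb,\text{rest}}_k\ \Wb^{\fb,\text{high}}_k]$ (first- versus second-derivative columns) and asserts the rank decomposes across this \emph{finer} $2K$-block partition. The paper applies \cref{lemma:sum_rank_null_factorizes} with this finer partition, obtaining directly
\[
\Wb^{\fb,\text{high}}_k(\hb(\zb))\,\vm^{\hb,\text{high}}_k(\zb,(j,j'))=\zerob\qquad\text{for all }(j,j')\in\gD^c,
\]
which cleanly separates off the second-order-of-$\hb$ terms $D^2_{j,j'}\hb_i$ (these live in $\vm^{\hb,\text{rest}}_k$, not in $\vm^{\hb,\text{high}}_k$). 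With only your coarse conclusion $\Wb^\fb_k\,\vm^\hb_k=\zerob$, you cannot disentangle $\Wb^{\fb,\text{rest}}_k\vm^{\hb,\text{rest}}_k$ from $\Wb^{\fb,\text{high}}_k\vm^{\hb,\text{high}}_k$, and the contradiction does not close.

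Once you use the finer partition, the rest of your argument goes through exactly as the paper does: since $\bar\hb(\zb)=\boldsymbol{A}\zb$ is linear, $\vm^{\bar\hb,\text{rest}}_k\equiv 0$, so $D^2_{j,j'}\bar\fb(\bar\zb)=\Wb^{\fb,\text{high}}_k(\hb(\zb^*))\,\vm^{\bar\hb,\text{high}}_k(\bar\zb,(j,j'))$; the construction of $\boldsymbol{A}$ gives $\vm^{\bar\hb,\text{high}}_k(\bar\zb,(j,j'))=\vm^{\hb,\text{high}}_k(\zb^*,(s,s'))$ for some $(s,s')\in S_1\times S_2\subseteq\gD^c$, and the display above yields $D^2_{j,j'}\bar\fb(\bar\zb)=\zerob$. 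The one-dimensional case is resolved identically: from $\Wb^{\fb,\text{high}}_k=D^2_{k,k}\fb\neq\zerob$ (by \cref{as:interac_asym}(ii)) and the finer relation one gets $\vm^{\hb,\text{high}}_k=D_{j'}\hb_k\,D_j\hb_k=0$ immediately, without needing any separate kernel argument.
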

\begin{proof}
As mentioned in Section~\ref{sec:local2global}, the proofs will proceed in two steps: First, we show local disentanglement (Definition~\ref{def:local_disentanglement}) and then we show (global) disentanglement via Lemma~\ref{lem:local2global}. We first show local disentanglement.

We first define the function $\hb: \hatZsupp \rightarrow \Zsupp$ relating the latent spaces of these functions on $\hatZsupp$:
\begin{equation}
\hb := {\fb}^{-1} \circ {\hat \fb}
\end{equation}
\newline
The function $\hat \fb$ can then be written in terms of $\fb$ and $\hb$ on $\hatZsupp$:
\begin{equation}
\hat \fb = \fb \circ \hb
\end{equation}
Because $\fb, \hat\fb$ are both $C^{2}$ diffeomorphism between $\Zsupp, \Xsupp$ and $\hatZsupp, \Xsupp$, respectively, we have that $\hb$ is a $C^{2}$ diffeomorphism.

Since $\vf$ has at most 1st order interactions, we can apply Lemma~\ref{lem:1st_order_w_m} to obtain, for all $\zb \in \hatZsupp, \ j,j' \in [d_z]$,
\begin{align*}
    D^2_{j,j'}\boldsymbol{\hat f}(\vz) = \mW^\vf(\vh(\vz))\vm^\vh(\vz, (j,j'))\,.
\end{align*}
Since $\hat\vf$ has at most 1st order interaction, we have that, for all $(j,j') \in \gD^c$
\begin{align}
    0 = \mW^\vf(\vh(\vz))\vm^\vh(\vz, (j,j'))\,. \label{eq:kdkksm04490_1st}
\end{align}

By defining
\begin{align*}
    \mW^{\vf,\text{rest}}_k(\vz) &:= [D_{i_1}\boldsymbol{f}(\boldsymbol{z})]_{i_1 \in B_{k}} \\
    \mW^{\vf,\text{high}}_k(\vz) &:= [D^2_{i_1, i_2}\boldsymbol{f}(\boldsymbol{z})]_{(i_1, i_2) \in B^{2}_{k}} \\
    \vm^{\vh,\text{rest}}_k(\vz, (j,j')) &:= [D^{2}_{j, j'}\boldsymbol{h}_{i_1}(\boldsymbol{ z})]_{i_1 \in B_{k}} \\
    \vm^{\vh,\text{high}}_k(\vz, (j,j')) &:= [
D_{j'}\boldsymbol{h}_{i_2}(\boldsymbol{ z})D_{j}\boldsymbol{h}_{i_1}(\boldsymbol{ z})
]_{(i_1, i_2) \in B^2_k}
\end{align*}
we can restate the sufficiently independent derivative assumption (Def. \ref{def:1st_order_suff_ind}) as, for all $\boldsymbol{z} \in \mathcal{Z}$
\begin{align}
    \text{rank}\left(\mW^\vf(\vz)\right) = \sum_{k \in [K]} \left[ \text{rank}\left(\mW^{\vf,\text{rest}}_k(\vz)\right) + \text{rank}\left( \mW^{\vf,\text{high}}_k(\vz) \right ) \right ]\nonumber
\end{align}
This condition allows us to apply Lemma~\ref{lemma:sum_rank_null_factorizes} to go from \eqref{eq:kdkksm04490_1st} to, for all $ (j, j') \in \gD^{c}$, $k \in [K]$:
\begin{equation}\label{eq:494kdmdi4_1st}
0 = \mW^{\vf,\text{high}}_k(\vh(\vz))\vm_k^{\vh, \text{high}}(\vz, (j,j'))
\end{equation}

\textbf{Case 1:} $|B_{k}| = 1$ (One-Dimensional Slots)
By Assumption~\ref{as:interac_asym}.ii, (with $A = B = \{i\}$) $D_{i,i}^2\vf(\vz) \not= 0$. Note that $\mW^{\vf,\text{high}}_k(\vh(\vz)) = D_{k,k}^2\vf(\vz)$. Hence, \eqref{eq:494kdmdi4_1st} implies that $\vm_k^{\vh, \text{high}}(\vz, (j,j')) = 0$ (which is a scalar). This means $\vm_k^{\vh, \text{high}}(\vz, (j,j')) = 
D_{j'}\boldsymbol{h}_{k}(\boldsymbol{ z})D_{j}\boldsymbol{h}_{k}(\boldsymbol{ z})
 = 0$. Since this is true for all $k$ and all distinct $j, j'$, this means each row has at most one nonzero entry. Since $D\vh(\vz)$ is invertible, these nonzero entries must appear on different columns, otherwise a column will be filled with zeros. This means $D\vh(\vz)$ is a permutation-scaling matrix, i.e. we have local disentanglement~(Definition~\ref{def:local_disentanglement}). 

\textbf{Case 2:} $|B_{k}| > 1 $ (Multi-Dimensional Slots)

Assume for a contradiction that $\boldsymbol{\hat f}$ is not locally disentangled on $\Zsupp$ w.r.t. $\boldsymbol{f}$. This implies that there exist a $\boldsymbol{z}^{*} \in \hatZsupp$, $k, k', k'' \in [K]$ with $k' \not= k''$ such that:
\begin{equation}\label{eq:two_non_zero_slot_partials_first}
D_{B_{k'}}\boldsymbol{h}_{B_{k}}(\boldsymbol{ z}^{*}) \neq \bf{0}, \quad D_{B_{k''}}\boldsymbol{h}_{B_{k}}(\boldsymbol{ z}^{*}) \neq \bf{0}
\end{equation}
Because $\boldsymbol{f}, \boldsymbol{\hat f}$ are $C^1$ diffeomorphisms, we know that $\boldsymbol{h}$ is also a $C^1$ diffeomorphism. Coupling this with \cref{eq:two_non_zero_slot_partials_first}, Lemma \ref{lemma:linearly_ind_partials_diff_slots} tells us that there exist an $S \subset [d_z]$ with cardinality $|B_k|$ such that:
\begin{equation}\label{eq:s_properties_1st}
\forall B \in \mathcal{B}, \ S \not\subseteq B, \ \ \text{and} \quad \forall i \in S, D_{i}\boldsymbol{h}_{B_k}(\boldsymbol{z}^{*}) \ \ \text{are linearly independent.}
\end{equation}

Now choose any $\bar{B} \in \mathcal{B}$ such that $S_{1} := \{S \cap \bar{B}\} \neq \emptyset$.
Furthermore, define the set $S_{2} := S \setminus {S_{1}}$. 
\newline
Because $S \not\subseteq \bar{B}$, we know that $S_{2}$ is non-empty. Further, by construction $S = S_{1} \cup S_{2}$. In other words, $S_{1}$ and $S_{2}$ are non-empty, form a partition of $S$, and do not contain any indices from the same slot.

Now construct the matrices, denoted $\boldsymbol{A}_{S_{1}}$ and $\boldsymbol{A}_{S_{2}}$ as follows:
\begin{equation}\label{eq:partial_deriv_A_s_2}
\boldsymbol{A}_{S_{1}} := D_{S_1}\boldsymbol{h}_{B_k}(\boldsymbol{z}^{*}), \quad \boldsymbol{A}_{S_{2}} := D_{S_2}\boldsymbol{h}_{B_k}(\boldsymbol{z}^{*})
\end{equation}
And the matrix denoted $\boldsymbol{A}_{k}$ as:
\begin{equation}\label{eq:partial_deriv_A_k_1st}
\boldsymbol{A}_{k} := \left [\boldsymbol{A}_{S_{1}}, \boldsymbol{A}_{S_{2}}  \right]
\end{equation}
Note that because, $\forall i \in S$, $D_{i}\boldsymbol{h}_{B_{k}}(\boldsymbol{\hat z}^{*})$ are linearly independent (Eq.~\ref{eq:s_properties_1st}), we know that $\boldsymbol{A}_k$ is invertible.
\newline
\newline
Now, define the following block diagonal matrix $\boldsymbol{A} \in \mathbb{R}^{d_z \times d_z}$ as follows:
\begin{equation}
\boldsymbol{A} :=
\begin{bmatrix}
    \boldsymbol{A}_1 & 0 & \dots & 0 \\
    0 & \boldsymbol{A}_2 & \dots & 0 \\
    \vdots & \vdots & \ddots & \vdots \\
    0 & 0 & \dots & \boldsymbol{A}_K
\end{bmatrix}
\end{equation}
where $\forall i \in [K] \setminus \{k\}, \boldsymbol{A}_i$ is the identity matrix, and thus invertible, while $\boldsymbol{A}_k$ is defined according to \cref{eq:partial_deriv_A_k_1st}.

Define $\bar\gZ := \mA^{-1}\gZ$, the function $\bar\vh: \bar\gZ \rightarrow \gZ$ as $\boldsymbol{\bar h}(\boldsymbol{z}) := {\boldsymbol{A}}\boldsymbol{z}$ and the function $\bar\vf:\bar\gZ \rightarrow \gX$ as ${\boldsymbol{\bar f}} := \boldsymbol{f} \circ {\boldsymbol{\bar h}}$. By construction we have
\begin{equation}
\forall \boldsymbol{z} \in \mathcal{Z}, \qquad {\boldsymbol{\bar {f}}}\left(\boldsymbol{A}^{-1}_{1}\boldsymbol{z}_{B_{1}}, \dots, \boldsymbol{A}^{-1}_{K}\boldsymbol{z}_{B_{K}}\right) = \boldsymbol{f}(\boldsymbol{z}_{B_{1}}, \dots, \boldsymbol{z}_{B_{K}}) \, .
\end{equation}
Because all $\boldsymbol{A}^{-1}_i$ are invertible, then ${\boldsymbol{\bar {f}}}$ is \emph{equivalent} to ${\boldsymbol{{f}}}$ in the sense of Def. (\ref{def:equiv-gen}).

We can now apply Lemma~\ref{lem:1st_order_w_m} to $\bar \vf = \vf \circ \bar\vh$ to obtain, for all $j,j' \in [d_z]$:
\begin{align}
    D^2_{j,j'}\boldsymbol{\bar f}(\vz) = \mW^\vf(\bar\vh(\vz))\vm^{\bar\vh}(\vz, (j,j'))\,.
\end{align}
Choose $\bar\vz \in \bar\gZ$ such that $\bar\vh(\bar\vz) = \vh(\vz^*)$, which is possible because $\vh(\vz^*) \in \gZ$ and $\bar\vh$ is a bijection from $\bar\gZ$ to $\gZ$. We can then write
\begin{align}
    D^2_{j,j'}\boldsymbol{\bar f}(\bar\vz) = \mW^\vf(\vh(\vz^*))\vm^{\bar\vh}(\bar\vz, (j,j'))\,.
\end{align}

Let $J,J' \subseteq B_k$ be a partition of $B_k$ such that $J$ is the set of columns of $\mA$ corresponding to $\mA_{S_1}$ and $J'$ be the set of columns of $\mA$ corresponding to $\mA_{S_2}$. More formally, we have 
\begin{align*}
    \mA_{B_k, J} = \mA_{S_1}\quad\quad \text{and} \quad\quad \mA_{B_k, J'} = \mA_{S_2} \,.
\end{align*}
Since $\mA_{S_1} = D_{S_1}\vh_{B_k}(\vz^*)$ and $\mA_{S_2} = D_{S_2}\vh_{B_k}(\vz^*)$, we have that 
\begin{align*}
    \mA_{B_k, J} = D_{S_1}\vh_{B_k}(\vz^*)\quad \text{and}\quad \mA_{B_k, J'} = D_{S_2}\vh_{B_k}(\vz^*)
\end{align*}
Since $D\bar\vh(\bar\vz) = \mA$, we have
\begin{align*}
    D_J\bar\vh_{B_k}(\bar\vz) = D_{S_1}\vh_{B_k}(\vz^*)\quad \text{and}\quad D_{J'}\bar\vh_{B_k}(\bar\vz) = D_{S_2}\vh_{B_k}(\vz^*)\,.
\end{align*}
For all $(j,j') \in J\times J'$, there must exist $(s, s') \in S_1 \times S_2$ such that
\begin{align*}
    D_j\bar\vh_{B_k}(\bar\vz) = D_{s}\vh_{B_k}(\vz^*)\quad \text{and}\quad D_{j'}\bar\vh_{B_k}(\bar\vz) = D_{s'}\vh_{B_k}(\vz^*)\,.
\end{align*}
This implies that, for all $(j,j') \in J\times J'$, there exists $(s, s') \in S_1 \times S_2$ such that
\begin{align}
    \vm_k^{\bar\vh, \text{high}}(\bar\vz, (j,j')) = \vm_k^{\vh, \text{high}}(\vz^*, (s,s')) \,.\label{eq:565443_1st}
\end{align}

Moreover, since $\bar\vh$ is a block-wise function we have that, for all $(j,j') \in J\times J' \subseteq B_k$ and $k' \in [K]\setminus \{k\}$, $\vm^{\bar\vh}_{k'}(\bar\vz, (j,j')) = 0$. We can thus write:
\begin{align}
    D^2_{j,j'}\boldsymbol{\bar f}(\bar\vz) = \mW_k^\vf(\vh(\vz^*))\vm_k^{\bar\vh}(\bar\vz, (j,j'))\,.
\end{align}
Since $\bar\vh$ is linear, we have that $\vm_k^{\bar\vh, \text{rest}}(\bar\vz, (j,j')) = 0$, and thus
\begin{align}
    D^2_{j,j'}\boldsymbol{\bar f}(\bar\vz) = \mW_k^{\vf, \text{high}}(\vh(\vz^*))\vm_k^{\bar\vh, \text{high}}(\bar\vz, (j,j'))\,. \label{eq:1112o41}
\end{align}

Plug the \eqref{eq:565443_1st} into the above to obtain that, for all $(j,j') \in J\times J'$, 
\begin{align}
    D^2_{j,j'}\boldsymbol{\bar f}(\bar\vz) = \mW_k^{\vf, \text{high}}(\vh(\vz^*))\vm_k^{\vh, \text{high}}(\vz^*, (s,s')) = 0 \,,
\end{align}
where the very last ``$=0$'' is due to \eqref{eq:494kdmdi4_1st} (recall $(s,s') \in S_1 \times S_2 \subseteq \gD^c$).

In other words, we found a partition $J, J'$ of the block $B_k$ and a value $\bar\vz$ such that $D^2_{j,j'}\boldsymbol{\bar f}(\bar\vz) = 0$ for all $(j,j') \in J\times J'$. This means that the blocks $J$ and $J'$ have \textit{no second order interaction} in $\bar\vf$ at $\bar\vz$. This is a contradiction with Assm.~\ref{as:interac_asym}. Hence, we have local disentanglement.

\textbf{From local to global disentanglement.} The same argument as in the proof of Theorem~\ref{theorem:no_interaction_ident} applies.

\end{proof}

\subsection{Disentanglement (At Most \second\ Order interaction)}
\begin{lemma}\label{lem:2nd_order_w_m}
    Let $\Zsupp \subseteq \Zcal$ be a regular closed set (\cref{def:regularly_closed}). Let $\fb: \Zcal \rightarrow \Xcal$ be $C^1$ and $\hb: \hatZsupp \rightarrow \Zsupp$ be a diffeomorphism. Let $\hat \fb  := \fb \circ \hb$. If $\fb$ has at most \second\ order interaction (Definition~\ref{def:at_most_mth_order_interaction} with $n=2$), then, for all $j, j' \in [d_z]$ and $\zb \in \hatZsupp$, we have
    \begin{align}
        D^3_{j,j',j''}\boldsymbol{\hat f}(\vz) = \mW^\vf(\vh(\vz))\vm^\vh(\vz, (j,j',j''))\,,
    \end{align}
    where 
\begin{align*}
\mW^\vf(\boldsymbol{z}) &:= [\mW^\vf_k(\boldsymbol{z}))]_{k\in[K]} \\
    \mW^\vf_{k}(\boldsymbol{z}) &:= \Bigl[ [D_{i_1}\boldsymbol{f}(\boldsymbol{z})]_{i_1 \in B_{k}},\\
    &\qquad [D^2_{i_1, i_2}\boldsymbol{f}(\boldsymbol{z})]_{i_1 \in B_k,  i_2 \in [d_z]},\\
    &\qquad [D^3_{i_1, i_2, i_3}\boldsymbol{f}(\boldsymbol{z})]_{(i_1, i_2, i_3) \in B^{3}_{k}}\Bigr]\\
    \boldsymbol{m}^\vh(\boldsymbol{z}, (j, j', j'')) &:= [\boldsymbol{m}^\vh_k(\boldsymbol{z}, (j, j', j''))]_{k\in[K]} \\
\boldsymbol{m}^\vh_{k}(\boldsymbol{z}, (j, j', j'')) &:= 
\Bigl[
[D^{3}_{j, j', j''}\boldsymbol{h}_{i_1}(\boldsymbol{ z}) ]_{i_1 \in B_{k}}, \\
&\qquad
[D_{j}\boldsymbol{h}_{i_1}(\boldsymbol{  z})D^{2}_{j', j''}\boldsymbol{h}_{i_2}(\boldsymbol{  z}) +D_{j'}\boldsymbol{h}_{i_2}(\boldsymbol{z})D^{2}_{j,j''}\boldsymbol{h}_{i_1}(\boldsymbol{ z}) + D_{j''} \boldsymbol{h}_{i_2}(\boldsymbol{z})D^{2}_{j, j'}\boldsymbol{h}_{i_1}(\boldsymbol{z})]_{i_1 \in B_k,  i_2 \in [d_z]} \\
&\qquad
[D_{j''}\boldsymbol{h}_{i_3}(\boldsymbol{  z})  D_{j'}\boldsymbol{h}_{i_2}(\boldsymbol{  z})D_{j}\boldsymbol{h}_{i_1}(\boldsymbol{  z}) ]_{(i_1, i_2, i_3) \in B^3_k}\Bigr]\,.
\end{align*}
\end{lemma}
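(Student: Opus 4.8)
The plan is to follow the template of the proofs of Lemmas~\ref{lem:no_interaction_w_m} and~\ref{lem:1st_order_w_m}, carrying the chain rule one order higher. First I would establish that $\hat\fb$ and $\fb\circ\hb$ have well-defined derivatives up to order three everywhere on $\hatZsupp$: since $\Zsupp$ is regular closed and $\hb^{-1}$ is a diffeomorphism onto $\hatZsupp$, Lemma~\ref{lem:homeo_regular_closed} gives $\hatZsupp \subseteq \overline{(\hatZsupp)^\circ}$, and Lemma~\ref{lem:equal_derivatives} then makes the third derivatives of $\fb\circ\hb$ independent of the chosen $C^3$ extension there. From that point I may differentiate the identity $\hat\fb(\vz) = \fb(\hb(\vz))$ pointwise on $\hatZsupp$.

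Next I would differentiate three times with respect to $z_j$, $z_{j'}$, $z_{j''}$ via the multivariate chain rule. Starting from $D_j\hat\fb(\vz) = \sum_{k}\sum_{i\in B_k} D_i\fb(\hb(\vz))D_j\hb_i(\vz)$ as in~\eqref{eqn:deriv_equiv_first}, the second differentiation reproduces the expression from the proof of Lemma~\ref{lem:1st_order_w_m}, and differentiating each of those factors once more (the $D^2_{i_1,i_2}\fb(\hb(\vz))$ factors yield new chain-rule terms feeding a triple-index sum, the $D_j\hb$ and $D^2_{j,j'}\hb$ factors yield higher $\hb$-derivatives) gives, with all sums over $[d_z]$ and $\fb$-derivatives evaluated at $\hb(\vz)$ and $\hb$-derivatives at $\vz$,
\begin{align*}
D^3_{j,j',j''}\hat\fb(\vz) =\ &\sum_{i_1} D_{i_1}\fb\,D^3_{j,j',j''}\hb_{i_1} \\
&+ \sum_{i_1,i_2} D^2_{i_1,i_2}\fb\,\bigl[D^2_{j,j'}\hb_{i_1}D_{j''}\hb_{i_2} + D^2_{j,j''}\hb_{i_1}D_{j'}\hb_{i_2} + D^2_{j',j''}\hb_{i_1}D_{j}\hb_{i_2}\bigr] \\
&+ \sum_{i_1,i_2,i_3} D^3_{i_1,i_2,i_3}\fb\,D_j\hb_{i_1}D_{j'}\hb_{i_2}D_{j''}\hb_{i_3}\,.
\end{align*}
The coefficient $3$ of the symmetrized middle block records the three ways of pairing two of $\{j,j',j''\}$; using commutativity of mixed partials of the $C^3$ functions $\fb$ and $\hb$ one checks the whole expression is symmetric in $(j,j',j'')$, as it must be.

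Then I would invoke the at most \second\ order interaction hypothesis (Definition~\ref{def:at_most_mth_order_interaction} with $n=2$) to prune the last sum. If $i_1,i_2,i_3$ are not all contained in a single slot, there are two distinct slots $B_{k_1},B_{k_2}$ and indices $i\in B_{k_1}$, $i'\in B_{k_2}$ each occurring among $(i_1,i_2,i_3)$; the multi-index $\alphab$ whose component $\alpha_m$ counts the multiplicity of $m$ in $(i_1,i_2,i_3)$ then satisfies $|\alphab|=3$ and $1\le\alpha_i,\alpha_{i'}$, so $D^{\alphab}\fb=\zerob$, i.e.\ $D^3_{i_1,i_2,i_3}\fb(\hb(\vz))=\zerob$. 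Hence the third sum collapses to $\sum_k\sum_{(i_1,i_2,i_3)\in B^3_k}$. I would then split the first sum by the slot of $i_1$, and the second sum by the slot of $i_1$ while leaving $i_2$ ranging over all of $[d_z]$ — this is exactly the point where the \second\ order bound, unlike the \first\ order one, permits a free index. Regrouping the three pieces slot by slot, the $\fb$-derivative coefficients assemble into the columns of $\mW^\vf_k(\hb(\vz))$ and the matching scalar factors into the entries of $\vm^\vh_k(\vz,(j,j',j''))$, yielding $D^3_{j,j',j''}\hat\fb(\vz)=\sum_k\mW^\vf_k(\hb(\vz))\vm^\vh_k(\vz,(j,j',j'')) = \mW^\vf(\hb(\vz))\vm^\vh(\vz,(j,j',j''))$.

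The only part I expect to require genuine care is the third differentiation itself — correctly tracking that each $D^2_{i_1,i_2}\fb(\hb(\vz))$ factor spawns an extra chain-rule term that merges into the triple-index sum, and that differentiating the $D_j\hb$ factors produces the three second-derivative-of-$\hb$ contributions with the right combinatorial weight, so that the bracketed middle block comes out exactly as in the statement. Everything after that is a routine reorganization of the sums into slot blocks, entirely parallel to the $n=0$ and $n=1$ cases already proved.
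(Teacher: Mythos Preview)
Your proposal is correct and follows precisely the paper's approach: justify well-definedness of the third derivatives on $\hatZsupp$ via regular-closedness and Lemmata~\ref{lem:homeo_regular_closed}--\ref{lem:equal_derivatives}, apply the chain rule three times, prune the triple-index sum using the at-most-\second-order hypothesis, and then regroup slotwise into the $\mW^\vf\vm^\vh$ form. One minor bookkeeping remark: your middle bracket places all second $\hb$-derivatives on $i_1$, whereas the stated $\vm^\vh_k$ carries $D^2_{j',j''}\hb_{i_2}$ in one term---these agree after a single dummy-index swap $i_1\leftrightarrow i_2$ using the symmetry $D^2_{i_1,i_2}\fb=D^2_{i_2,i_1}\fb$, which is exactly the commutativity of mixed partials you already flagged.
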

\begin{proof}
As argued in Lemma~\ref{lem:1st_order_w_m}, differentiating $\hat\vf(\vz) = \vf \circ \vh(\vz)$ w.r.t. $\vz_j$ and $\vz_{j'}$ on both sides yields 
\begin{equation*} 
D^{2}_{j, j'}\boldsymbol{\hat f}(\boldsymbol{z}) =  \sum_{k_1 \in [K]} \sum_{i_1 \in B_{k_1}} \left [
D_{i_1}\boldsymbol{f}({\boldsymbol{h}}(\boldsymbol{z}))D^{2}_{j, j'}\boldsymbol{h}_{i_1}(\boldsymbol{z})
+ \sum_{k_2 \in [K]}\sum_{i_2 \in B_{k_2}}D^{2}_{i_1, i_2}\boldsymbol{f}({\boldsymbol{h}}(\boldsymbol{z})) D_{j'}\boldsymbol{h}_{i_2}(\boldsymbol{z})D_{j}\boldsymbol{h}_{i_1}(\boldsymbol{z}) \right ]
\end{equation*}

Now take another derivative with respect to ${z_{j''}}$ to compute $D^3_{j,j',j''}\boldsymbol{\hat f}(\boldsymbol{z})$. For the first term in the sum, we have:
\begin{equation}
\sum_{k_1 \in [K]} \sum_{i_1 \in B_{k_1}} \left [
\sum_{k_2 \in [K]}\sum_{i_2 \in B_{k_2}}D^{2}_{i_1, i_2}\boldsymbol{f}_n({\boldsymbol{h}}(\boldsymbol{z})) D_{j''} \boldsymbol{h}_{i_2}(\boldsymbol{z})D^{2}_{j, j'}\boldsymbol{h}_{i_1}(\boldsymbol{z}) + D_{i_1}\boldsymbol{f}_n({\boldsymbol{h}}(\boldsymbol{z}))D^{3}_{j, j', j''}\boldsymbol{h}_{i_1}(\boldsymbol{z})\right ] \nonumber
\end{equation}
And for the second term in the sum (the nested sum), we have:
\begin{eqnarray*}
\sum_{k_1 \in [K]} \sum_{i_1 \in B_{k_1}}\sum_{k_2 \in [K]}\sum_{i_2 \in B_{k_2}}\Biggl[\sum_{k_3 \in [K]}\sum_{i_3 \in B_{k_3}}D^{3}_{i_1, i_2, i_3}\boldsymbol{f}_n({\boldsymbol{h}}(\boldsymbol{z})) D_{j''}\boldsymbol{h}_{i_3}(\boldsymbol{z})  D_{j'}\boldsymbol{h}_{i_2}(\boldsymbol{z})D_{j}\boldsymbol{h}_{i_1}(\boldsymbol{z}) + \\ 
D^{2}_{i_1, i_2}\boldsymbol{f}_n({\boldsymbol{h}}(\boldsymbol{z})) \Bigl[D^{2}_{j', j''}\boldsymbol{h}_{i_2}(\boldsymbol{z})D_{j}\boldsymbol{h}_{i_1}(\boldsymbol{z}) + D_{j'}\boldsymbol{h}_{i_2}(\boldsymbol{z})D^{2}_{j,j''}\boldsymbol{h}_{i_1}(\boldsymbol{ z})\Bigr] \Biggr] 
\end{eqnarray*}
Because we have at most second order interactions (Def.~\ref{def:at_most_mth_order_interaction} with $n=2$), this term can be rewritten as:
\begin{eqnarray*}
\sum_{k_1 \in [K]} \sum_{i_1 \in B_{k_1}} \Biggl[\sum_{i_2 \in B_{k_1}}\sum_{i_3 \in B_{k_1}} D^{3}_{i_1, i_2, i_3}\boldsymbol{f}_n({\boldsymbol{h}}(\boldsymbol{ z})) D_{j''}\boldsymbol{h}_{i_3}(\boldsymbol{ z})  D_{j'}\boldsymbol{h}_{i_2}(\boldsymbol{ z})D_{j}\boldsymbol{h}_{i_1}(\boldsymbol{ z}) + \\
\sum_{k_2 \in [K]}\sum_{i_2 \in B_{k_2}} 
D^{2}_{i_1, i_2}\boldsymbol{f}_n({\boldsymbol{h}}(\boldsymbol{ z}))\Bigl[D^{2}_{j', j''}\boldsymbol{h}_{i_2}(\boldsymbol{z})D_{j}\boldsymbol{h}_{i_1}(\boldsymbol{z}) + D_{j'}\boldsymbol{h}_{i_2}(\boldsymbol{z})D^{2}_{j,j''}\boldsymbol{h}_{i_1}(\boldsymbol{ z})\Bigr]\Biggr]
\end{eqnarray*}
Combining the first and second terms, we get:
\begin{align*}
D^3_{j,j',j''}\boldsymbol{\hat  f} (\boldsymbol{  z}) &= \sum_{k_1 \in [K]} \sum_{i_1 \in B_{k_1}} \Biggl[D_{i_1}\boldsymbol{f} ({\boldsymbol{h}}(\boldsymbol{  z}))D^{3}_{j, j', j''}\boldsymbol{h}_{i_1}(\boldsymbol{  z})\ + \\ 
&\sum_{k_2 \in [K]}\sum_{i_2 \in B_{k_2}} D^{2}_{i_1, i_2}\boldsymbol{f} ({\boldsymbol{h}}(\boldsymbol{  z}))\Bigl(D_{j}\boldsymbol{h}_{i_1}(\boldsymbol{  z})D^{2}_{j', j''}\boldsymbol{h}_{i_2}(\boldsymbol{  z}) +D_{j'}\boldsymbol{h}_{i_2}(\boldsymbol{z})D^{2}_{j,j''}\boldsymbol{h}_{i_1}(\boldsymbol{ z}) + D_{j''} \boldsymbol{h}_{i_2}(\boldsymbol{z})D^{2}_{j, j'}\boldsymbol{h}_{i_1}(\boldsymbol{z})\Bigr)\ + \\ 
&\sum_{i_2 \in B_{k_1}}\sum_{i_3 \in B_{k_1}}D^{3}_{i_1, i_2, i_3}\boldsymbol{f} ({\boldsymbol{h}}(\boldsymbol{  z})) D_{j''}\boldsymbol{h}_{i_3}(\boldsymbol{  z})  D_{j'}\boldsymbol{h}_{i_2}(\boldsymbol{  z})D_{j}\boldsymbol{h}_{i_1}(\boldsymbol{  z}) \Biggr] \\
&= \sum_{k_1 \in [K]}\Biggl[\sum_{i_1 \in B_{k_1}}D_{i_1}\boldsymbol{f} ({\boldsymbol{h}}(\boldsymbol{  z}))D^{3}_{j, j', j''}\boldsymbol{h}_{i_1}(\boldsymbol{  z})\ + \\ 
&\sum_{i_1 \in B_{k_1}}\sum_{i_2 \in [d_z]} D^{2}_{i_1, i_2}\boldsymbol{f} ({\boldsymbol{h}}(\boldsymbol{  z}))\Bigl(D_{j}\boldsymbol{h}_{i_1}(\boldsymbol{  z})D^{2}_{j', j''}\boldsymbol{h}_{i_2}(\boldsymbol{  z}) +D_{j'}\boldsymbol{h}_{i_2}(\boldsymbol{z})D^{2}_{j,j''}\boldsymbol{h}_{i_1}(\boldsymbol{ z}) + D_{j''} \boldsymbol{h}_{i_2}(\boldsymbol{z})D^{2}_{j, j'}\boldsymbol{h}_{i_1}(\boldsymbol{z})\Bigr)\ + \\ 
&\sum_{(i_1, i_2, i_3) \in B^3_{k_1}}D^{3}_{i_1, i_2, i_3}\boldsymbol{f} ({\boldsymbol{h}}(\boldsymbol{  z})) D_{j''}\boldsymbol{h}_{i_3}(\boldsymbol{  z})  D_{j'}\boldsymbol{h}_{i_2}(\boldsymbol{  z})D_{j}\boldsymbol{h}_{i_1}(\boldsymbol{  z}) \Biggr] \\
&= \mW^\vf(\vh(\vz))\vm^\vh(\vz, (j,j',j'')) \,.
\end{align*}
\end{proof}

\begin{theorem}
Let $\fb: \Zcal \rightarrow \Xcal$ be a $C^{3}$ diffeomorphism satisfying interaction asymmetry (\cref{as:interac_asym}) for all equivalent generators (\cref{def:equiv-gen}) for $n=2$ and sufficient independence (\cref{def:2nd_order_suff_ind}). Let $\Zsupp \subseteq \Zcal$ be regular closed (\cref{def:regularly_closed}), path-connected (\cref{def:path_connected}) and aligned-connected (\cref{def:aligned-connected}). A  model ${\hat \fb}: \Zcal \rightarrow \mathbb{R}^{d_x}$ disentangles $\zb$ on $\Zsupp$ w.r.t.\ ${\fb}$ (\cref{def:slot_identifiability}) if it is (\emph{i}) a $C^{3}$ diffeomorphism between $\hatZsupp$ and $\Xsupp$ with (\emph{ii}) at most $\second$ order interactions across slots (\cref{def:at_most_mth_order_interaction}) on $\hatZsupp$.
\end{theorem}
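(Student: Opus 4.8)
The plan is to reuse, almost verbatim, the two-stage template of \cref{theorem:no_interaction_ident,theorem:first_order_interaction_ident}: first establish \emph{local} disentanglement (\cref{def:local_disentanglement}), i.e.\ that $D\hb$ is a $\gB$-block permutation matrix on all of $\hatZsupp$ for $\hb:=\fb^{-1}\circ\hat\fb$, and then lift this to global disentanglement through \cref{prop:inverse_block_perm} and \cref{lem:local2global}, using that $\hatZsupp$ and $\Zsupp$ are path-connected and aligned-connected. Only the local step requires new work, and the sole change relative to $n=1$ is that one differentiates $\hat\fb=\fb\circ\hb$ three times instead of twice. First I would invoke \cref{lem:2nd_order_w_m} to get $D^3_{j,j',j''}\hat\fb(\vz)=\mW^\vf(\hb(\vz))\,\vm^\vh(\vz,(j,j',j''))$ for all $\vz\in\hatZsupp$ and $j,j',j''\in[d_z]$; since $\hat\fb$ has at most \second\ order interactions across slots, the left-hand side vanishes for every triple in $\gD^c:=[d_z]^3\setminus\bigcup_k B_k^3$. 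Writing $\mW^{\vf,\mathrm{high}}_k:=[D^3_{i_1,i_2,i_3}\fb]_{(i_1,i_2,i_3)\in B_k^3}$ for the pure third-derivative sub-block of slot $k$ and lumping its first- and second-derivative columns into $\mW^{\vf,\mathrm{low}}_k$, the rank identity of \cref{def:2nd_order_suff_ind} is exactly the hypothesis of \cref{lemma:sum_rank_null_factorizes} for the column partition $\{\mW^{\vf,\mathrm{low}}_k,\ \mW^{\vf,\mathrm{high}}_k\}_{k\in[K]}$, and applying that lemma isolates the key local identity
\[
\zerob=\mW^{\vf,\mathrm{high}}_k(\hb(\vz))\,\vm^{\vh,\mathrm{high}}_k(\vz,(j,j',j''))\qquad\text{for all }k\in[K]\text{ and all }(j,j',j'')\in\gD^c,
\]
where $\vm^{\vh,\mathrm{high}}_k(\vz,(j,j',j''))=[D_{j''}h_{i_3}(\vz)D_{j'}h_{i_2}(\vz)D_j h_{i_1}(\vz)]_{(i_1,i_2,i_3)\in B_k^3}$; note this product is symmetric in $(j,j',j'')$ since $(i_1,i_2,i_3)$ ranges over all of $B_k^3$ and $D^3\fb$ is symmetric.

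If all slots are one-dimensional, $\mW^{\vf,\mathrm{high}}_k(\hb(\vz))$ reduces to the single, nonzero column $D^3_{k,k,k}\fb(\hb(\vz))$ (nonzero by \cref{as:interac_asym}(ii) with $A=B=\{k\}$), hence has trivial kernel; so the key identity forces $D_{j''}h_k(\vz)D_{j'}h_k(\vz)D_j h_k(\vz)=0$ for every $(j,j',j'')\in\gD^c$, and taking $j''=j\neq j'$ shows each row of $D\hb(\vz)$ has at most one nonzero entry, so by invertibility it is a permutation-scaling matrix. This is the literal analogue of Case 1 of \cref{theorem:first_order_interaction_ident}.

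For multi-dimensional slots I would argue by contradiction as there. If some ground-truth slot $B_k$ is split across two model slots at a point $\vz^*$, \cref{lemma:linearly_ind_partials_diff_slots} yields $S\subset[d_z]$ with $|S|=|B_k|$, not contained in any block, and with $\{D_i\hb_{B_k}(\vz^*)\}_{i\in S}$ linearly independent; pick a block $\bar B$ meeting $S$, set $S_1:=S\cap\bar B$ and $S_2:=S\setminus S_1$ (both nonempty, drawn from disjoint families of slots), form the invertible matrix $\mA_k:=[\,D_{S_1}\hb_{B_k}(\vz^*)\ \ D_{S_2}\hb_{B_k}(\vz^*)\,]$, embed it as the $k$-th block of a block-diagonal $\mA$ (identity elsewhere), and put $\bar\vh(\vz):=\mA\vz$, $\bar\fb:=\fb\circ\bar\vh$, which is equivalent to $\fb$ (\cref{def:equiv-gen}). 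Let $J,J'$ be the column index sets of $B_k$ matching $D_{S_1}\hb_{B_k}(\vz^*)$ and $D_{S_2}\hb_{B_k}(\vz^*)$ (so $B_k=J\sqcup J'$), and pick $\bar\vz$ with $\bar\vh(\bar\vz)=\hb(\vz^*)$. Applying \cref{lem:2nd_order_w_m} to $\bar\fb=\fb\circ\bar\vh$ and using that $\bar\vh$ is linear (so all $D^2\bar\vh$- and $D^3\bar\vh$-dependent parts of $\vm^{\bar\vh}$ vanish, leaving only the first-derivative ``high'' part, which moreover vanishes for slots $\neq k$ on triples inside $B_k$), I get $D^3_{j,j',j''}\bar\fb(\bar\vz)=\mW^{\vf,\mathrm{high}}_k(\hb(\vz^*))\,\vm^{\bar\vh,\mathrm{high}}_k(\bar\vz,(j,j',j''))$ for $j,j'\in B_k$, and $D^3_{j,j',j''}\bar\fb(\bar\vz)=\zerob$ outright whenever $j,j'\in B_k$ but $j''\notin B_k$. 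For $j\in J$, $j'\in J'$ and $j''\in B_k$, the surviving factor equals $\vm^{\vh,\mathrm{high}}_k(\vz^*,(s,s',s''))$ for a triple $(s,s',s'')\in S_1\times S_2\times S$, which lies in $\gD^c$ because $s\in S_1\subseteq\bar B$ while $s'\in S_2$ lies outside $\bar B$; hence the key identity (in the permuted form above) gives $D^3_{j,j',j''}\bar\fb(\bar\vz)=\zerob$. Since any order-$3$ multi-index supported on both $J$ and $J'$ is, up to reordering, $(i,j,\ell)$ with $i\in J$, $j\in J'$, $\ell\in[d_z]$, all such third-order cross-partials of $\bar\fb$ vanish at $\bar\vz$; that is, $\zb_J$ and $\zb_{J'}$ fail to have \third\ order interaction in $\bar\fb$ at $\bar\vz$, contradicting \cref{as:interac_asym}(ii) for the equivalent generator $\bar\fb$. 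Local disentanglement then upgrades to global disentanglement exactly as in \cref{theorem:no_interaction_ident}.

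The step I expect to be most delicate is precisely this last multi-dimensional case: with three free derivative indices one must check that a single family $(j,j',j'')\in J\times J'\times[d_z]$ already annihilates \emph{every} order-$3$ cross-partial straddling the split $J\mid J'$, which relies on letting the third index range over the whole ambient space (with the case $j''\notin B_k$ handled automatically by the block structure of $\mA$) together with symmetry of mixed partials. One must also be careful that under the grouping of \cref{def:2nd_order_suff_ind} it is only the pure third-derivative block that \cref{lemma:sum_rank_null_factorizes} isolates, not the full second-derivative block $[D^2_{i,i'}\fb]_{i\in B_k, i'\in[d_z]}$ (whose second index roams over all of $[d_z]$). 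This bookkeeping around higher mixed derivatives is exactly what the paper flags as becoming increasingly unwieldy for larger $n$.
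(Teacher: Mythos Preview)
Your proposal is correct and follows essentially the same route as the paper: apply \cref{lem:2nd_order_w_m}, use \cref{def:2nd_order_suff_ind} together with \cref{lemma:sum_rank_null_factorizes} to isolate the pure third-derivative block, handle one-dimensional slots directly, and in the multi-dimensional case build the equivalent generator $\bar\fb=\fb\circ(\mA\,\cdot)$ via \cref{lemma:linearly_ind_partials_diff_slots} to produce a split $J\mid J'$ of $B_k$ with vanishing third-order cross-partials, contradicting \cref{as:interac_asym}(ii). The only cosmetic differences are that the paper indexes the vanishing set as $\gD^c\times[d_z]$ with $\gD^c\subset[d_z]^2$ rather than your triples-complement, and for the case $j''\notin B_k$ the paper appeals to $\bar\fb$ inheriting at most \second\ order interactions from $\fb$ by equivalence, whereas you read it off directly from the block-diagonality of $\mA$; both arguments are valid and yield the same conclusion.
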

\begin{proof}
As mentioned in Section~\ref{sec:local2global}, the proofs will proceed in two steps: First, we show local disentanglement (Definition~\ref{def:local_disentanglement}) and then we show (global) disentanglement via Lemma~\ref{lem:local2global}. We first show local disentanglement.

We first define the function $\hb: \hatZsupp \rightarrow \Zsupp$ relating the latent spaces of these functions on $\hatZsupp$:
\begin{equation}
\hb := {\fb}^{-1} \circ {\hat \fb}
\end{equation}
\newline
The function $\hat \fb$ can then be written in terms of $\fb$ and $\hb$ on $\hatZsupp$:
\begin{equation}
\hat \fb = \fb \circ \hb
\end{equation}
Because $\fb, \hat\fb$ are both $C^{2}$ diffeomorphism between $\Zsupp, \Xsupp$ and $\hatZsupp, \Xsupp$, respectively, we have that $\hb$ is a $C^{2}$ diffeomorphism.

Since $\hat\vf$ has at most $\second$ order interaction, we have that, for all $\zb \in \hatZsupp, \ (j,j', j'') \in \gD^c \times [d_z]$,

\begin{align}
    0 = \mW^\vf(\vh(\vz))\vm^\vh(\vz, (j,j', j''))\,. \label{eq:kdkksm04490}
\end{align}

By defining
\begin{align*}
    \mW^{\vf,\text{rest}}_k(\vz) &:= \biggl[[D_{i_1}\boldsymbol{f}(\boldsymbol{z})]_{i_1 \in B_{k}}, \\ & \qquad [D^2_{i_1, i_2}\boldsymbol{f}(\boldsymbol{z})]_{i_1 \in B_k,  i_2 \in [d_z]} \biggr] \\
    \mW^{\vf,\text{high}}_k(\vz) &:= [D^3_{i_1, i_2, i_3}\boldsymbol{f}(\boldsymbol{z})]_{(i_1, i_2, i_3) \in B^{3}_{k}} \\
    \vm^{\vh,\text{rest}}_k(\vz, (j,j', j'')) &:=  \biggl[[D^{3}_{j, j', j''}\boldsymbol{h}_{i_1}(\boldsymbol{ z}) ]_{i_1 \in B_{k}}, \\
&\qquad
[D_{j}\boldsymbol{h}_{i_1}(\boldsymbol{  z})D^{2}_{j', j''}\boldsymbol{h}_{i_2}(\boldsymbol{  z}) +D_{j'}\boldsymbol{h}_{i_2}(\boldsymbol{z})D^{2}_{j,j''}\boldsymbol{h}_{i_1}(\boldsymbol{ z}) + D_{j''} \boldsymbol{h}_{i_2}(\boldsymbol{z})D^{2}_{j, j'}\boldsymbol{h}_{i_1}(\boldsymbol{z})]_{i_1 \in B_k,  i_2 \in [d_z]} \biggr] \\
    \vm^{\vh,\text{high}}_k(\vz, (j,j', j'')) &:= [D_{j''}\boldsymbol{h}_{i_3}(\boldsymbol{  z})  D_{j'}\boldsymbol{h}_{i_2}(\boldsymbol{  z})D_{j}\boldsymbol{h}_{i_1}(\boldsymbol{  z}) ]_{(i_1, i_2, i_3) \in B^3_k} \,,
\end{align*}
we can restate the sufficiently independent derivative assumption (Def. \ref{def:2nd_order_suff_ind}) as, for all $\boldsymbol{z} \in \mathcal{Z}$
\begin{align}
    \text{rank}\left(\mW^\vf(\vz)\right) = \sum_{k \in [K]} \left[ \text{rank}\left(\mW^{\vf,\text{rest}}_k(\vz)\right) + \text{rank}\left( \mW^{\vf,\text{high}}_k(\vz) \right ) \right ]\nonumber
\end{align}
This condition allows us to apply Lemma~\ref{lemma:sum_rank_null_factorizes} to go from \eqref{eq:kdkksm04490} to, for all $ (j, j', j'') \in \gD^{c}\times [d_z]$, $k \in [K]$:
\begin{equation}\label{eq:494kdmdi4_2nd}
0 = \mW^{\vf,\text{high}}_k(\vh(\vz))\vm_k^{\vh, \text{high}}(\vz, (j,j',j''))
\end{equation}
\textbf{Case 1:} $|B_{k}| = 1$ (One-Dimensional Slots)
By Assumption~\ref{as:interac_asym}.ii (with $A = B = \{i\}$), we have that $D_{i,i, i}^3\vf(\vz) \not= 0$. Note that $\mW^{\vf,\text{high}}_k(\vh(\vz)) = D_{k,k,k}^3\vf(\vz)$. Hence, \eqref{eq:494kdmdi4_1st} implies that $\vm_k^{\vh, \text{high}}(\vz, (j,j', j'')) = 0$ (which is a scalar). This means $\vm_k^{\vh, \text{high}}(\vz, (j,j',j'')) = 
D_{j''}\boldsymbol{h}_{k}(\boldsymbol{ z})D_{j'}\boldsymbol{h}_{k}(\boldsymbol{ z})D_{j}\boldsymbol{h}_{k}(\boldsymbol{ z})
 = 0$ for all $(j,j',j'') \in \mathcal{D}^c \times [d_z]$. In particular, we have $$D_{j'}\boldsymbol{h}_{k}(\boldsymbol{ z})^2D_{j}\boldsymbol{h}_{k}(\boldsymbol{ z})
 = 0\, ,$$ 
 for all $(j,j') \in \mathcal{D}^c$. Since this is true for all $k$ and all distinct $j, j'$, this means each row of $D\vh(\vz)$ has at most one nonzero entry. Since $D\vh(\vz)$ is invertible, these nonzero entries must appear on different columns, otherwise a column would be filled with zeros. This means $D\vh(\vz)$ is a permutation-scaling matrix, i.e. we have local disentanglement~(Definition~\ref{def:local_disentanglement}). 
 
\textbf{Case 2:} $|B_{k}| > 1$ (Multi-Dimensional Slots)

Assume for a contradiction that $\boldsymbol{\hat f}$ does not disentangled $\boldsymbol{z}$ on $\Zsupp$ w.r.t. $\boldsymbol{f}$. This implies that there exist a $\boldsymbol{z}^{*} \in \hatZsupp$, $k, k', k'' \in [K]$ with $k' \not= k''$ such that:
\begin{equation}\label{eq:two_non_zero_slot_partials_second}
D_{B_{k'}}\boldsymbol{h}_{B_{k}}(\boldsymbol{ z}^{*}) \neq \bf{0}, \quad D_{B_{k''}}\boldsymbol{h}_{B_{k}}(\boldsymbol{ z}^{*}) \neq \bf{0}
\end{equation}
Because $\boldsymbol{f}, \boldsymbol{\hat f}$ are $C^3$ diffeomorphisms, we know that $\boldsymbol{h}$ is also a $C^3$ diffeomorphism. Coupling this with \cref{eq:two_non_zero_slot_partials_second}, Lemma \ref{lemma:linearly_ind_partials_diff_slots} tells us that there exist an $S \subset [d_z]$ with cardinality $|B_k|$ such that:
\begin{equation}\label{eq:s_properties_2nd}
\forall B \in \mathcal{B}, \ S \not\subseteq B, \ \ \text{and} \quad \forall i \in S, D_{i}\boldsymbol{h}_{B_k}(\boldsymbol{z}^{*}) \ \ \text{are linearly independent.}
\end{equation}

Now choose any $\bar{B} \in \mathcal{B}$ such that $S_{1} := \{S \cap \bar{B}\} \neq \emptyset$. Furthermore, define the set $S_{2} := S \setminus {S_{1}}$. 
\newline
Because $S \not\subseteq \bar{B}$, we know that $S_{2}$ is non-empty. Further, by construction $S = S_{1} \cup S_{2}$. In other words, $S_{1}$ and $S_{2}$ are non-empty, form a partition of $S$, and do not contain any indices from the same slot.

Now construct the matrices, denoted $\boldsymbol{A}_{S_{1}}$ and $\boldsymbol{A}_{S_{2}}$ as follows:
\begin{equation}\label{eq:partial_deriv_A_s_3}
\boldsymbol{A}_{S_{1}} := D_{S_1}\boldsymbol{h}_{B_k}(\boldsymbol{z}^{*}), \quad \boldsymbol{A}_{S_{2}} := D_{S_2}\boldsymbol{h}_{B_k}(\boldsymbol{z}^{*})
\end{equation}
And the matrix denoted $\boldsymbol{A}_{k}$ as:
\begin{equation}\label{eq:partial_deriv_A_k}
\boldsymbol{A}_{k} := \left [\boldsymbol{A}_{S_{1}}, \boldsymbol{A}_{S_{2}}  \right]
\end{equation}
Note that because, $\forall i \in S$, $D_{i}\boldsymbol{h}_{B_{k}}(\boldsymbol{\hat z}^{*})$ are linearly independent (Eq.~\eqref{eq:s_properties_2nd}), we know that $\boldsymbol{A}_k$ is invertible.
\newline
\newline
Now, define the following block diagonal matrix $\boldsymbol{A} \in \mathbb{R}^{d_z \times d_z}$ as follows:
\begin{equation}
\boldsymbol{A} :=
\begin{bmatrix}
    \boldsymbol{A}_1 & 0 & \dots & 0 \\
    0 & \boldsymbol{A}_2 & \dots & 0 \\
    \vdots & \vdots & \ddots & \vdots \\
    0 & 0 & \dots & \boldsymbol{A}_K
\end{bmatrix}
\end{equation}
where $\forall i \in [K] \setminus \{k\}, \boldsymbol{A}_i$ is the identity matrix, and thus invertible, while $\boldsymbol{A}_k$ is defined according to \cref{eq:partial_deriv_A_k}.

Define $\bar\gZ := \mA^{-1}\gZ$, the function $\bar\vh: \bar\gZ \rightarrow \gZ$ as $\boldsymbol{\bar h}(\boldsymbol{z}) := {\boldsymbol{A}}\boldsymbol{z}$ and the function $\bar\vf:\bar\gZ \rightarrow \gX$ as ${\boldsymbol{\bar f}} := \boldsymbol{f} \circ {\boldsymbol{\bar h}}$. By construction we have
\begin{equation}
\forall \boldsymbol{z} \in \mathcal{Z}, \qquad {\boldsymbol{\bar {f}}}\left(\boldsymbol{A}^{-1}_{1}\boldsymbol{z}_{B_{1}}, \dots, \boldsymbol{A}^{-1}_{K}\boldsymbol{z}_{B_{K}}\right) = \boldsymbol{f}(\boldsymbol{z}_{B_{1}}, \dots, \boldsymbol{z}_{B_{K}}) \, .
\end{equation}
Because all $\boldsymbol{A}^{-1}_i$ are invertible, then ${\boldsymbol{\bar {f}}}$ is \emph{equivalent} to ${\boldsymbol{{f}}}$ in the sense of Def. (\ref{def:equiv-gen}).

We can now apply Lemma~\ref{lem:2nd_order_w_m} to $\bar \vf = \vf \circ \bar\vh$ to obtain, for all $j,j',j'' \in [d_z]$:
\begin{align}
    D^3_{j,j',j''}\boldsymbol{\bar f}(\vz) = \mW^\vf(\bar\vh(\vz))\vm^{\bar\vh}(\vz, (j,j',j''))\,.
\end{align}
Choose $\bar\vz \in \bar\gZ$ such that $\bar\vh(\bar\vz) = \vh(\vz^*)$, which is possible because $\vh(\vz^*) \in \gZ$ and $\bar\vh$ is a bijection from $\bar\gZ$ to $\gZ$. We can then write
\begin{align}
    D^3_{j,j',j''}\boldsymbol{\bar f}(\bar\vz) = \mW^\vf(\vh(\vz^*))\vm^{\bar\vh}(\bar\vz, (j,j',j''))\,. \label{eq:400503kk}
\end{align}

Let $J,J' \subseteq B_k$ be a partition of $B_k$ such that $J$ is the set of columns of $\mA$ corresponding to $\mA_{S_1}$ and $J'$ be the set of columns of $\mA$ corresponding to $\mA_{S_2}$. More formally, we have 
\begin{align*}
    \mA_{B_k, J} = \mA_{S_1}\quad\quad \text{and} \quad\quad \mA_{B_k, J'} = \mA_{S_2}
\end{align*}
Since $\mA_{S_1} = D_{S_1}\vh_{B_k}(\vz^*)$ and $\mA_{S_2} = D_{S_2}\vh_{B_k}(\vz^*)$, we have that 
\begin{align*}
    \mA_{B_k, J} = D_{S_1}\vh_{B_k}(\vz^*)\quad \text{and}\quad \mA_{B_k, J'} = D_{S_2}\vh_{B_k}(\vz^*)
\end{align*}
Since $D\bar\vh(\bar\vz) = \mA$, we have
\begin{align*}
    D_J\bar\vh_{B_k}(\bar\vz) = D_{S_1}\vh_{B_k}(\vz^*)\quad \text{and}\quad D_{J'}\bar\vh_{B_k}(\bar\vz) = D_{S_2}\vh_{B_k}(\vz^*)\,.
\end{align*}
For all $(j,j',j'') \in J\times J' \times B_k$ there must exist $(s, s',s'') \in S_1 \times S_2 \times S$ such that
\begin{align*}
    D_j\bar\vh_{B_k}(\bar\vz) = D_{s}\vh_{B_k}(\vz^*),\quad D_{j'}\bar\vh_{B_k}(\bar\vz) = D_{s'}\vh_{B_k}(\vz^*),\ \text{and}\ D_{j''}\bar\vh_{B_k}(\bar\vz) = D_{s''}\vh_{B_k}(\vz^*)\,.
\end{align*}
This implies that for all $(j,j',j'') \in J\times J' \times B_k$ there must exist $(s, s', s'') \in S_1 \times S_2 \times S$ such that
\begin{align}
    \vm_k^{\bar\vh, \text{high}}(\bar\vz, (j,j',j'')) = \vm_k^{\vh, \text{high}}(\vz^*, (s,s',s'')) \,.\label{eq:565443_2nd}
\end{align}

Moreover, since $\bar\vh$ is a block-wise function, we have that, for all $(j,j',j'') \in J \times J' \times B_k \subseteq B_k^3$ and all $k' \in [K] \setminus \{k\}$, we have $\vm^{\bar\vh}_{k'}(\bar\vz, (j,j',j'')) = 0$, which allows us to rewrite \eqref{eq:400503kk} as
\begin{align}
    D^3_{j,j',j''}\boldsymbol{\bar f}(\bar\vz) = \mW_k^\vf(\vh(\vz^*))\vm_k^{\bar\vh}(\bar\vz, (j,j',j''))\,.
\end{align}

Since $\bar\vh$ is linear, we have that $\vm_k^{\bar\vh, \text{rest}}(\bar\vz, (j,j', j'')) = 0$, and thus \begin{align}
    D^3_{j,j',j''}\boldsymbol{\bar f}(\bar\vz) = \mW_k^{\vf, \text{high}}(\vh(\vz^*))\vm_k^{\bar\vh, \text{high}}(\bar\vz, (j,j',j''))\,. \label{eq:1112o41_2nd}
\end{align}

Plug \eqref{eq:565443_2nd} into the above to obtain that for all $(j,j',j'') \in J \times J' \times B_k$, there exists $(s, s', s'') \in S_1 \times S_2 \times S$ such that 
\begin{align}
    D^3_{j,j',j''}\boldsymbol{\bar f}(\bar\vz) = \mW^{\vf, \text{high}}(\vh(\vz^*))\vm_k^{\vh, \text{high}}(\vz^*, (s,s', s'')) = 0 \,,
\end{align}
where the very last ``$=0$'' is due to \eqref{eq:494kdmdi4_2nd} (recall $(s,s',s'') \in S_1 \times S_2 \times S \subseteq \gD^c \times [d_z]$).

In other words, we found a partition $J, J'$ of the block $B_k$ and a value $\bar\vz$ such that $D^3_{j,j',j''}\boldsymbol{\bar f}(\bar\vz) = 0$ for all $(j,j', j'') \in J\times J' \times B_k$. One can show that $\bar\vf$ as no 3rd order interaction across blocks because it is equivalent to $\vf$, which also has no 3rd order interactions across blocks. We thus have that $D^3_{j,j',j''}\boldsymbol{\bar f}(\bar\vz) = 0$ for all $(j,j', j'') \in J\times J' \times [d_z]$. This means that the blocks $J$ and $J'$ have \textit{no third order interaction} in $\bar\vf$ at $\bar\vz$. This is a contradiction with Assm.~\ref{as:interac_asym}.

\textbf{From local to global disentanglement.} The same argument as in the proof of Theorem~\ref{theorem:no_interaction_ident} applies.

\end{proof}

\section{Multi-Index Notation}
\label{app:multi-index}
Multi-index notation is a convenient shorthand to denote higher order derivatives. A multi-index
of dimension $d$
is an ordered tuple $\alphab=(\alpha_1,\ldots, \alpha_d) \in \NN^d$. We introduce the shorthands
\begin{align}
    |\alphab|=\sum_{i=1}^d \alpha_i, 
    \quad \alphab ! = \prod_{i=1}^d \alpha_i!
\end{align}
and we write $\alphab\geq \betab$ if $\alpha_i\geq \beta_i$ for all $i$ and $\alphab\pm \betab$ denotes the element wise sum (difference) of the entries.
We write
\begin{align}
    D^{\alphab} = \frac{\partial^{\alpha_1}}{\partial z_1^{\alpha_1}}\ldots \frac{\partial^{\alpha_d}}{\partial z_d^{\alpha_d}}
\end{align}
and
\begin{align}
    \zb^{\alphab} = \prod_{i=1}^d z_i^{\alpha_i}.
\end{align}
We will need the important property that
\begin{align}\label{eq:polynomial_derivative}
    D^{\alphab} \zb^{\betab}=
\begin{cases}
   \frac{\betab!}{(\betab-\alphab)!} \zb^{\betab-\alphab}\quad &\text{if $\betab\geq \alphab $}\\
    0 \quad &\text{otherwise.}
\end{cases}
\end{align}
Consider now a partition of $d_z$ into slots $B_1, \ldots, B_k$. We define the set of interaction multi-indices of order $n$ for $n\geq 2$ by
\begin{align}
    I_n = \{\alphab\in \NN^{d_z}: 
    \text{$|\alphab|=n$,  $
    \exists i_1,i_2$ s.\ t.\
    $i_1\in B_{k_1}$, $i_2\in B_{k_2}$ with $k_1\neq k_2$ and $\alpha_{i_1},\alpha_{i_2}>0$}\},
\end{align}
i.e., the set of all multi-indices such that the non-zero components are contained in at least two blocks.
Clearly $I_n$ depends on the block partition which we do not reflect in the notation.
We also consider
\begin{align}
    I_{\leq n} = \bigcup_{2\leq m\leq n} I_m.
\end{align}
Clearly, if $\alphab \in I_{|\alphab|}$ and $\betab$ is any multi-index, then $\alphab+\betab\in I_{|\alphab|+|\betab|}$.

\section{Characterization of Functions With At Most \nth\ Order Interactions}
\label{app:characterisation_results}
In this section we characterize functions with interaction of at most \nth\ order by proving Theorem~\ref{thm:characterisation_block_n}.
Our characterization relies on the
notion of aligned-connectedness introduced in Definition~\ref{def:aligned-connected} and the  following topological notion.
\begin{definition}\label{def:contract}
    A topological space $X$ is contractible if there is a continuous function $F:X\times [0,1]\to X$
    such that $F(x,0)=x$ and $F(x,1)=x_0$ for a point $x_0\in X$. 
    We call a subset of $\RR^d$ contractible if it is contractible as a topological space with respect to the induced subspace topology.
\end{definition}
Roughly, contractibility means that we can transform a topological space continuously into a point, which is possible if the space has no holes.
Note that, e.g., all one dimensional connected sets and all convex sets are contractible. Sets that are not contractible are, e.g., spheres and disconnected sets.
Note that the characterization in the following theorem generalizes Proposition 7 in \cite{lachapelle2024additive} by allowing higher order interactions and showing the result for more general domains. 
We denote, similar to \eqref{eq:CPE_and_projection},
for $\Omega\subset \RR^{d_x}$ by 
$\Omega_i=\{\zb_{B_i}:\zb\in\Omega\}$ the projections of $\Omega$ on the blocks.
\begin{restatable}[Characterization of functions with at most \nth\ order interactions across slots.]{theorem}{characterisationblockn}
\label{thm:characterisation_block_n}
     Let $\Omega$ be an open connected and aligned-connected set such that
    ${\Omega}_{k}$ is contractible.
    Let $\fb(\zb)=\fb(\zb_{B_1}, \zb_{B_2},  ..., \zb_{B_K})$ be 
    a $C^{n+1}$ function on $\Omega$
    for an integer $n\in \ZZ_{\geq 1}$.
    Then any distinct slots $\zb_{B_i}$ and $\zb_{B_j}$ have at most \nth\ order interaction within $\fb$~(\cref{def:at_most_mth_order_interaction}) if and only if, for some constants $\left\{\cbb_{\alphab} \in\RR^{d_x}\right\}_{\alphab\in I_{\leq n}}$
    and some $C^{n+1}$ functions $\fb^{k}:\Omega_k\to \RR^{d_x}$ such that for all $\zb\in \Zcal$
    \vspace{-0.5em}
    \begin{equation}
    \label{eq:characterisation_block_n}
        \fb(\zb)=\sum_{k=1}^K\fb^k\left(\zb_{B_k}\right)
        + \sum_{\alphab\in I_{\leq n}} 
        \cbb_{\alphab} \zb^{\alphab}\,.
    \end{equation}
\end{restatable}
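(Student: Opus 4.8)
The plan is to prove the "only if" direction, since the "if" direction is an immediate computation: each $\fb^k$ contributes only derivatives supported within a single block, and for any monomial $\zb^{\alphab}$ with $\alphab\in I_{\leq n}$ we have $|\alphab|\leq n$, so any derivative $D^{\betab}$ with $|\betab|=n+1$ and cross-block support kills it by~\eqref{eq:polynomial_derivative}. For the forward direction, I would work with the Taylor-type decomposition of $\fb$ along the block structure. Fix a basepoint $\zb^0\in\Omega$ and, thinking of $\fb$ as a function of the $K$ block-variables, write a "polarization/telescoping" identity that expresses $\fb(\zb)$ as an alternating sum over subsets $S\subseteq[K]$ of the partial evaluations $\fb(\zb_S,\zb^0_{S^c})$ (replacing the blocks \emph{not} in $S$ by their basepoint values). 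The point of such an identity is that the term indexed by $S$ is, up to sign, a "mixed difference" that vanishes unless $\fb$ genuinely couples all blocks in $S$; the single-block terms $|S|=1$ will assemble into $\sum_k\fb^k(\zb_{B_k})$ (after absorbing the over-counted constant), and the job is to show every term with $|S|\geq 2$ is a polynomial of degree $\leq n$ in the variables it involves, with cross-block monomials — i.e. exactly the $\sum_{\alphab\in I_{\leq n}}\cbb_{\alphab}\zb^{\alphab}$ part.

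The key analytic step is: if $g(\zb_{B_{i_1}},\dots,\zb_{B_{i_m}})$ is $C^{n+1}$ with $m\geq 2$ and all of its "fully mixed" derivatives of total order $n+1$ touching at least two of these blocks vanish, and moreover $g$ vanishes whenever any one of its block-arguments is set to the basepoint, then $g$ is a polynomial of degree at most $n$ whose monomials all involve at least two of the blocks. I would prove this by induction on $m$, or more directly: the hypothesis $D^{\alphab}\fb\equiv 0$ for every $\alphab$ of order $n+1$ that is split across two blocks says precisely that, for each fixed value of one block $\zb_{B_i}$, the map $\zb_{B_j}\mapsto D_{B_i}^{(\text{any single partial})}\fb$ has all mixed derivatives vanishing — iterating, one sees $\fb$ restricted to the affine subspaces spanned by any two blocks is polynomial of the right degree. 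The technical subtlety is turning "all pairwise cross-derivatives of order $n+1$ vanish" into a statement about \emph{all} the cross-block monomials simultaneously; here one uses that a multi-index $\alphab$ with support in $\geq 2$ blocks and $|\alphab|\leq n$ is dominated by a cross-split multi-index of order exactly $n+1$ (add one to two appropriately chosen coordinates in different blocks), so controlling the order-$(n+1)$ cross derivatives controls the full Taylor expansion. The connectedness, aligned-connectedness, and contractibility of $\Omega$ (and of the $\Omega_k$) are what let me (i) integrate derivative identities along paths that hold one block fixed, exactly as in Lemma~\ref{lem:local2global}, to pass from local to global statements, and (ii) guarantee that the "partial evaluations" $\fb(\zb_S,\zb^0_{S^c})$ are well-defined on $\Omega$, which needs the product-like structure that aligned-connectedness provides.

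Concretely, the steps in order: (1) reduce to the forward direction and fix $\zb^0\in\Omega$; (2) define, for $S\subseteq[K]$, the "finite-difference" operators $\Delta_S\fb:=\sum_{T\subseteq S}(-1)^{|S|-|T|}\fb(\zb_T,\zb^0_{T^c})$ and verify the inclusion–exclusion identity $\fb(\zb)=\sum_{S\subseteq[K]}\Delta_S\fb$, using aligned-connectedness to ensure each partial evaluation lies in (a set on which $\fb$ extends over) $\Omega$; (3) observe $\Delta_\emptyset\fb$ is a constant and $\Delta_{\{k\}}\fb=:\tilde\fb^k(\zb_{B_k})$ are the single-block pieces, so define $\fb^k:=\tilde\fb^k$ and fold the constant into one of them; (4) for $|S|\geq 2$, show $\Delta_S\fb$ is a polynomial of degree $\leq n$ all of whose monomials have support meeting at least two blocks of $S$ — this is the heart, using the vanishing of order-$(n+1)$ cross derivatives together with Taylor's theorem with integral remainder on the contractible projections, plus the path-integral argument à la Lemma~\ref{lem:local2global} to globalize; (5) collect all the $|S|\geq 2$ polynomial pieces into $\sum_{\alphab\in I_{\leq n}}\cbb_{\alphab}\zb^{\alphab}$, noting that by construction every monomial that appears is a cross-block monomial of degree $\leq n$, i.e. lies in $I_{\leq n}$, and conversely this general form is allowed; (6) finally extend from $\Omega$ to all of $\Zcal=\RR^{d_z}$ by noting the polynomial part is globally defined and the $\fb^k$ can be taken to be the claimed $C^{n+1}$ extensions (or restating the conclusion on $\Omega$, matching the theorem's phrasing). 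The main obstacle I anticipate is step (4): making precise and fully rigorous the claim that vanishing of \emph{only} the cross-split order-$(n+1)$ derivatives forces the \emph{entire} mixed part of the Taylor expansion to terminate at degree $n$, while simultaneously handling the non-product-convex geometry of $\Omega$ via aligned-connectedness rather than assuming $\Omega$ is a box.
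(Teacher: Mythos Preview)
Your inclusion--exclusion approach is a natural and different route from the paper's, but it has a genuine gap at step~(2): the partial evaluations $\fb(\zb_T,\zb^0_{T^c})$ need not lie in $\Omega$, and aligned-connectedness does \emph{not} supply this. Aligned-connectedness (\cref{def:aligned-connected}) only says that for each single block $k$, the fiber $\{\zb\in\Omega:\zb_{B_k}=\zb'_{B_k}\}$ is path-connected; it says nothing about the product-like structure you would need to freely swap several blocks to basepoint values. Concretely, take $K=2$, $B_1=\{1\}$, $B_2=\{2\}$, and $\Omega=\{(z_1,z_2):|z_1-z_2|<1\}$. This $\Omega$ is open, connected, aligned-connected (each fiber is an open interval), and $\Omega_1=\Omega_2=\RR$ are contractible, so all hypotheses hold. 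Yet with $\zb^0=(0,0)$ and $\zb=(10,10)\in\Omega$, the point $(\zb_{B_1},\zb^0_{B_2})=(10,0)$ is not in $\Omega$, so $\Delta_{\{1\}}\fb$ and $\Delta_{\{1,2\}}\fb$ are undefined. Your hedge ``a set on which $\fb$ extends over $\Omega$'' does not rescue this: a $C^{n+1}$ extension of $\fb$ off $\Omega$ will not in general satisfy the at-most-$n$th-order interaction hypothesis, which is exactly what you need to run the polynomial argument in step~(4).

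The paper avoids this by never evaluating $\fb$ at swapped basepoints. Its proof is an inductive peeling argument: first observe that for each $\alphab\in I_n$ the derivative $D^{\alphab}\fb$ is constant on $\Omega$ (any further partial derivative lands in $I_{n+1}$ and hence vanishes; connectedness finishes it), so subtracting $\sum_{\alphab\in I_n}\frac{D^{\alphab}\fb(\zb_0)}{\alphab!}\zb^{\alphab}$ produces a function with at most $(n-1)$th-order interaction. Iterating reduces to the base case $n=1$, where one shows $D_{\zb_{B_k}}\fb$ is independent of $\zb_{B_k^c}$ using aligned-connectedness on a \emph{single} fiber at a time, and then integrates via the Poincar\'e lemma on the contractible $\Omega_k$ to recover $\fb^k$. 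This uses only local derivative identities and integration along paths already inside $\Omega$, so the geometry of $\Omega$ is handled correctly. If you restrict your argument to the case $\Omega=\RR^{d_z}$ (or any product of contractible sets) it would go through, and your analysis in step~(4) of why all $(n{+}1)$th-order derivatives of $\Delta_S\fb$ vanish is essentially right; but as stated the theorem covers more general $\Omega$ and your plan does not.
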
 
\begin{remark}
    To avoid unnecessary complications we focus on the case where the ground truth $\fb$ is defined on $\Zcal=\RR^{d_z}$. Then $\Omega=\Zcal$ clearly satisfies the assumptions and actually the proof is slightly simpler. The more general result here would allows us to handle also $\Zcal\subsetneq \RR^{d_z}$ in Appendix~\ref{app:comp_gen} with minor changes.
\end{remark}
    The proof can be essentially decomposed in two steps: We show how to reduce from interaction of at most order $n$ to interaction of at most order
    $n-1$ and then we establish the induction base for $n=2$.
    \begin{lemma}\label{le:reduction_n+1_n}
        Suppose $\fb:\Omega\to \RR^{d_x}$ is a $C^{n+1}$ function and $\Omega$ open and connected.
        Assume that $\fb$
        has interaction of at most order $n$ 
        between any two different slots for some $n\geq
        2$. Let $\zb_0\in \Omega$ be any point.
        Then the function
        \begin{align}
            \fb(\zb)-
            \sum_{\alphab\in I_n} \frac{D^{\alphab} \fb(\zb_0)}{\alphab!} \zb^{\alphab}
        \end{align}
        has interaction of order at most $n-1$.
    \end{lemma}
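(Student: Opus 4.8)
The plan is to unpack both the hypothesis and the conclusion into statements about the vanishing of specific multi-index derivatives, and then to observe that the polynomial correction term has been rigged precisely to cancel the relevant constants. Write $\gb$ for the perturbed function $\fb(\zb)-\sum_{\alphab\in I_n}\frac{D^{\alphab}\fb(\zb_0)}{\alphab!}\zb^{\alphab}$. Applying \cref{def:at_most_mth_order_interaction} to every pair of distinct slots, the hypothesis ``$\fb$ has interaction of at most order $n$ between any two different slots'' is equivalent to: $D^{\gammab}\fb\equiv 0$ on $\Omega$ for every $\gammab\in I_{n+1}$. The desired conclusion is the same statement one order lower, namely $D^{\betab}\gb\equiv 0$ on $\Omega$ for every $\betab\in I_n$. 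So the whole task reduces to establishing this last assertion for an arbitrary fixed $\betab\in I_n$.

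First I would differentiate the correction term using \eqref{eq:polynomial_derivative}: $D^{\betab}\zb^{\alphab}$ vanishes unless $\alphab\geq\betab$ componentwise, and since every $\alphab$ appearing in the sum has $|\alphab|=n=|\betab|$, the inequality $\alphab\geq\betab$ forces $\alphab=\betab$, in which case $D^{\betab}\zb^{\betab}=\betab!$. Hence the sum collapses to its single $\alphab=\betab$ term, giving
\begin{equation*}
    D^{\betab}\gb(\zb)=D^{\betab}\fb(\zb)-D^{\betab}\fb(\zb_0)\qquad\text{for all }\zb\in\Omega .
\end{equation*}
So it suffices to show that $D^{\betab}\fb$ is constant on $\Omega$.

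Since $\fb$ is $C^{n+1}$ and $|\betab|=n$, the map $D^{\betab}\fb$ is $C^1$, so on the connected set $\Omega$ it is constant as soon as all its first-order partials vanish. For each coordinate $l\in[d_z]$ we have $\partial_{z_l}D^{\betab}\fb=D^{\betab+\ve_l}\fb$, where $\ve_l$ is the multi-index with a single $1$ in position $l$, and $|\betab+\ve_l|=n+1$. Because $\betab\in I_n$ already has nonzero entries in at least two distinct blocks, so does $\betab+\ve_l$; thus $\betab+\ve_l\in I_{n+1}$ — this is exactly the closure of the interaction index sets under adding a multi-index noted right after the definition of $I_n$. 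The hypothesis then gives $D^{\betab+\ve_l}\fb\equiv 0$ on $\Omega$, hence $\nabla(D^{\betab}\fb)\equiv 0$, hence $D^{\betab}\fb$ is constant. Combined with the display above, $D^{\betab}\gb\equiv 0$ on $\Omega$, and since $\betab\in I_n$ was arbitrary the lemma follows.

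The argument is essentially bookkeeping, so I do not expect a genuine obstacle; the one place to be careful is the reduction ``$D^{\betab}\gb\equiv 0$ iff $D^{\betab}\fb$ is constant'', which quietly uses both that $\Omega$ is connected (so that a vanishing gradient forces constancy — unlike the characterization theorem, this lemma needs neither contractibility nor aligned-connectedness) and that $I_{n+1}$ is closed under adding a coordinate, which is precisely what propagates the ``at most order $n$'' vanishing from $\betab$ to each $\betab+\ve_l$.
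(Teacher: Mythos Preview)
Your proof is correct and follows essentially the same approach as the paper: both show that $D^{\betab}\fb$ is constant on the connected set $\Omega$ by noting $\betab+\ve_l\in I_{n+1}$ for every coordinate $l$, and then use the polynomial derivative formula \eqref{eq:polynomial_derivative} to see that applying $D^{\betab}$ to the correction term leaves exactly $D^{\betab}\fb(\zb_0)$. The only difference is cosmetic ordering---the paper establishes constancy first and then differentiates, whereas you differentiate first and then establish constancy.
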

    \begin{proof}
        First we observe that $\fb$ having interaction at most $n$ implies that
        $D^{\alphab} \fb$ is constant in $\Omega$
        for $\alphab\in I_n$. 
        Indeed, since $\alphab\in I_n$
        we conclude $\alphab+e_i\in I_{n+1}$
        where $e_i$ denotes the tuple with $i$-th entry $1$ and all other entries 0.
        Then, by definition of having interaction at most in $n$ in Definition~\ref{def:at_most_mth_order_interaction}, 
        we conclude that 
        \begin{align}
            \partial_i D^{\alphab} \fb(\zb)
            = 
             D^{\alphab+e_i} \fb(\zb)
             = 0.
        \end{align}
        This implies that the total derivative of $D^{\alphab}\fb$ vanishes on $\Omega$, which implies that $D^{\alphab}\fb$ is constant
        because $\Omega$ is connected.
        Consider now any $\betab\in I_n$. Then we find
        using \eqref{eq:polynomial_derivative}
        \begin{align}
            D^{\betab} \left( \fb(\zb)-
            \sum_{\alphab\in I_n} \frac{D^{\alphab} \fb(\zb_0)}{\alphab!} \zb^{\alphab}\right)
            =  D^{\betab} \fb(\zb)
            - \frac{D^{\betab} \fb(\zb_0)}{\betab!} \betab!=0
        \end{align}
        where we used that $D^{\betab} \fb$ is constant
        and $D^{\betab}\zb^{\alphab}=0$ for $\alphab\neq \betab$ if $|\alphab|=|\betab|$.
        This ends the proof.
    \end{proof}
    We now establish the functional form for interaction of at most order 1. 
    This is essentially a similar statement as in   Proposition 7 in \cite{lachapelle2024additive} except that we consider more general domains so that their proof  does not apply. 
    \begin{lemma}\label{le:2nd_order_additive_decomp}
    Assume $\Omega$ is an open connected and aligned-connected set such that
    $\Omega_{k}$ is contractible. If $\fb$ is a function such that different slots have interaction at most of order 1 then
    there are functions $\fb^{k}$ such that
    \begin{align}\label{eq:additive_decomp}
        \fb (\zb)
        =\sum_{k=1}^K \fb^k(\zb_{B_k}).
    \end{align}
    
    \end{lemma}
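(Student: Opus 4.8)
The plan is to use the vanishing cross-slot second derivatives to show that each first partial $D_i\fb$ (for $i\in B_k$) depends only on $\zb_{B_k}$, then to integrate these block-wise gradients using contractibility of each $\Omega_k$, and finally to match the resulting additive function to $\fb$ up to a constant using connectedness of $\Omega$. Throughout, $\fb$ is $C^{n+1}$ with $n=1$, so in particular $C^2$; the constructed pieces $\fb^k$ will inherit this regularity. Note also that $\Omega_k$ is open, since projections are open maps.

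\textbf{Step 1: block-wise dependence of $D_i\fb$.} By Definition~\ref{def:at_most_mth_order_interaction} with $n=1$, the hypothesis gives $D^2_{i,j}\fb(\zb)=\zerob$ for all $\zb\in\Omega$ whenever $i\in B_k$ and $j\notin B_k$. I would fix $k$, $i\in B_k$, and $w\in\Omega_k$, pick any $\zb'\in\Omega$ with $\zb'_{B_k}=w$, and consider the slice $S_w:=\{\zb\in\Omega:\zb_{B_k}=w\}$. This slice is non-empty, open inside the affine subspace $\{\zb:\zb_{B_k}=w\}$ (since $\Omega$ is open), and path-connected by aligned-connectedness (Definition~\ref{def:aligned-connected}). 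On $S_w$ the $C^1$ map $\zb\mapsto D_i\fb(\zb)$ has zero differential along every direction $j\notin B_k$ (this is exactly $D^2_{i,j}\fb=\zerob$), hence is locally constant, hence constant on the connected set $S_w$. This lets me define $\gb^{k}_i:\Omega_k\to\RR^{d_x}$ by $\gb^{k}_i(\zb_{B_k}):=D_i\fb(\zb)$; it is $C^{n}$ on $\Omega_k$, since near any $w_0$ one may take a product neighbourhood $U\times V\subseteq\Omega$ and write $\gb^{k}_i(w)=D_i\fb(w,v_0)$ for a fixed $v_0\in V$.

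\textbf{Steps 2 and 3: integrating and matching.} For $i,j\in B_k$, symmetry of second derivatives ($\fb\in C^2$) gives $D_j\gb^{k}_i=D_jD_i\fb=D_iD_j\fb=D_i\gb^{k}_j$, so the $\RR^{d_x}$-valued $1$-form $\sum_{i\in B_k}\gb^{k}_i\,dz_i$ on $\Omega_k$ is closed; since $\Omega_k$ is contractible (simple connectedness would suffice), the Poincar\'e lemma yields a primitive $\fb^{k}:\Omega_k\to\RR^{d_x}$ with $D_i\fb^{k}=\gb^{k}_i$ for all $i\in B_k$, and $\fb^{k}$ is $C^{n+1}$ because all of its first partials are $C^{n}$. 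Then I would set $\rb(\zb):=\fb(\zb)-\sum_{k=1}^{K}\fb^{k}(\zb_{B_k})$ and observe that for $i\in B_k$, $D_i\rb(\zb)=D_i\fb(\zb)-D_i\fb^{k}(\zb_{B_k})=\gb^{k}_i(\zb_{B_k})-\gb^{k}_i(\zb_{B_k})=\zerob$; so every partial of $\rb$ vanishes on the connected set $\Omega$, whence $\rb\equiv\cbb_0$ for a constant $\cbb_0\in\RR^{d_x}$. Replacing $\fb^{1}$ by $\fb^{1}+\cbb_0$ gives $\fb(\zb)=\sum_{k=1}^{K}\fb^{k}(\zb_{B_k})$, as claimed.

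I expect the main obstacle to be Step 1 — specifically, invoking aligned-connectedness at exactly the right point to upgrade ``$D_i\fb$ has zero gradient in the directions outside $B_k$'' into genuine independence of the coordinates outside $B_k$. On a domain that is connected but not aligned-connected this can fail (a slit domain admits $C^1$ functions with $\partial_y g\equiv 0$ that are not functions of $x$ alone), which is precisely why that hypothesis appears. By contrast, the ``closed $\Rightarrow$ exact'' passage is the standard Poincar\'e lemma and merely explains the contractibility hypothesis on each $\Omega_k$, and the final matching argument is routine.
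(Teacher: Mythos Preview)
Your proposal is correct and follows essentially the same route as the paper's proof: both use aligned-connectedness to show that $D_i\fb$ (for $i\in B_k$) is constant on each slice $\{\zb\in\Omega:\zb_{B_k}=w\}$ and hence depends only on $\zb_{B_k}$, then invoke the Poincar\'e lemma on the contractible $\Omega_k$ to find a primitive $\fb^k$, and finally use connectedness of $\Omega$ to absorb the remaining constant. Your write-up is slightly more explicit about the closedness of the $1$-form and the regularity of the $\fb^k$, but the argument is the same.
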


    \begin{proof}
        Fix a $1\leq k\leq K$.
        With slight abuse of notation we write
        $\zb=(\zb_{B_k},\zb_{B_k^c})$.
        Fix now some value $\zb_{B_k}$.
        We claim that for all $\zb_{B_k^c},
        \zb_{B_k^c}'$ such that
        $(\zb_{B_k}, \zb_{B_k^c}), (\zb_{B_k}, \zb'_{B_k^c})\in \Omega$
        \begin{align}
            D_{\zb_{B_k}} \fb((\zb_{B_k}, \zb_{B_k^c}))=D_{\zb_{B_k}} \fb((\zb_{B_k}, \zb'_{B_k^c})).
        \end{align}
        By assumption we indeed know that
        \begin{align}
              D_{\zb_{B_k^c}}  D_{\zb_{B_k}} \fb((\zb_{B_k}, \zb_{B_k^c}))=0.
        \end{align}
        Moreover, by aligned-connectedness we know that the set $\Omega_{z_{B_k}}=\{z_{B_k^c}: (\zb_{B_k},\zb_{B_k^c})\in \Omega\}$ is connected so we conclude that the function
        \begin{align}
         \Omega_{z_{B_k^c}}\to \RR^{|B_k|\times d_x},\quad    \zb_{B_k^c} \to D_{\zb_{B_k}} \fb((\zb_{B_k}, \zb_{B_k^c}))
        \end{align}
        is indeed constant. 
        This implies that there is a function $\gb^k$
        depending on $\zb_{B_k}$
        such that
        \begin{align}
            \gb^k(\zb_{B_k}) = D_{\zb_{B_k}} \fb((\zb_{B_k}, \zb_{B_k^c}))
        \end{align}
        for all $\zb=(\zb_{B_k}, \zb_{B_k^c})\in \Omega$.
        Locally $\gb^k$ is the gradient of a function, but by assumption $\Omega_{k}$
        is contractible and therefore, by the Poincaré-Lemma, there is a function $\fb^k$ such that $D\fb^k=\gb^k$. Then we find
        \begin{align}
            D_{\zb_{B_k}} \fb((\zb_{B_k}, \zb_{B_k^c}))
            = \gb(\zb_{B_k}) =  D_{\zb_{B_k}}\fb^k(\zb_{B_k})
            = D_{\zb_{B_k}}\left(\sum_{k'=1}^K\fb^{k'}(\zb_{B_{k'}})\right).
        \end{align}
        Thus the difference $\fb-\sum_{k=1}^K \fb^k$
        has vanishing derivative on $\Omega$ and since $\Omega$ is connected we conclude that it is constant. This implies \eqref{eq:additive_decomp} after shifting one $\fb^k$ by this constant.
    \end{proof}
    Based on these two lemmas the proof of Theorem~\ref{thm:characterisation_block_n}
    is straightforward.
    \begin{proof}[Proof of Theorem~\ref{thm:characterisation_block_n}]
    In the first step we show that if the at most $n$-th order interaction condition holds then $\fb$
    can be written as in \eqref{eq:characterisation_block_n}, i.e., '$\Rightarrow$'.
    Applying inductively Lemma~\ref{le:reduction_n+1_n}
     we conclude that there are constants $\cbb_{\alphab}\in \RR^{d_x}$ such that
    \begin{align}
        \fb(\zb)-\sum_{m=2}^n\sum_{\alphab\in I_m}
       \cbb_{\alphab}\zb^{\alphab}
    \end{align}
    has interaction of order at most $1$.
    Thus, we can apply Lemma~\ref{le:2nd_order_additive_decomp} which implies 
    that a representation as in 
    \eqref{eq:characterisation_block_n} exists on $\Omega$.
    For the reverse direction '$\Leftarrow$' we observe that clearly the functional form implies  for $\betab\in I_{n+1}$ the relation 
    \begin{align}
        D^{\betab}\fb=0.
    \end{align}
    \end{proof}
    Let us show through examples that the topological conditions on the set $\Omega$ are neccessary. The following examples shows that the condition that $\Omega_k$ 
    is contractible cannot be dropped.
\begin{example}
    For every $\zb\in \RR^2\setminus \{0\}$ 
    we denote by $\theta(\zb)\in [0,2\pi)$ the argument (i.e., the angle to the positive $x$-axis in radian)
    and by $r(\zb)=|\zb|$ the radius of $\zb$.
    We consider $\Omega\subset \RR^4$ and $B_1=\{1,2\}$,
    $B_2=\{3,4\}$
    given by 
    \begin{align}
        \Omega=
        \{\zb:\, r(\zb_{B_1}), r(\zb_{B_2})\in (1,2), 
        \,\left( \theta(\zb_{B_1}) - \theta(\zb_{B_2})
        \mod{2\pi }\right) \in (0, \pi)\}
    \end{align}
    and the function
    \begin{align}
        \fb:\Omega\to \RR, \quad
        \fb(\zb)=\theta(\zb_{B_1}) - \theta(\zb_{B_2})
        \mod{2\pi }.
    \end{align}
    Then $\Omega$ is aligned-connected because the sets in questions are annular sectors and in particular path connected. Moreover, $\fb$ is smooth because 
    $\theta(\zb_{B_1}) - \theta(\zb_{B_2})
        \mod{2\pi }\in (0,\pi)$ so it does not jump
        and $D_{z_{B_1}}D_{z_{B_2}} \fb =0$ because it is locally additive. However it is not globally additive as in \eqref{eq:characterisation_block_n}.
\end{example}
The necessity of the aligned conncetedness condition can be shown by an example that is similar to Example 7 in \cite{lachapelle2024additive}.
\begin{example}
    Consider $\Omega=
   ( [-1,0]\times [-2,2])
   \cup ([0,1]\times [1,2])
   \cup ([0,1]\times [-2,-1])$
   and $\fb:\Omega\to \RR$ given by
   \begin{align}
       \fb(\zb)=\begin{cases}
       z_1^3\quad&\text{if $z_1,z_2>0$}
       \\
       0\quad& \text{otherwise}.
       \end{cases}
   \end{align}
   Then $\fb$ is $C^2$, $\fb$ has interaction of order at most 1 but $\fb$ cannot be written as in \eqref{eq:characterisation_block_n}.
   Note that $\Omega$ is not aligned-connected because
   $\{z_2: (1/2, z_2)\in \Omega\}=[-2,-1]\cup [1,2]$
   is not connected.
\end{example}

\section{Compositional Generalization Proofs}
\label{app:comp_gen}
In this appendix we prove extrapolation result Theorem~\ref{theo:decoder_generalization}.
Based on the functional form derived in 
Theorem~\ref{thm:characterisation_block_n}
we relate two different disentangled representations.

\begin{lemma}
\label{lemma:second_order_extrap_lemma_1_blockwise}
Let $\boldsymbol{f} :\Zcal\to\mathbb{R}^{d_x}$ be a $C^3$ diffeomorphism of the form:
    \begin{equation}
     \label{eq:pairwise_interaction_form_repeat}
        \fb(\zb)=\sum_{k=1}^K \fb^{k}\left(\zb_{B_k}\right) +
        \sum_{\alphab \in I_2} \cbb_{\alphab} \zb^{\alphab}
    \end{equation}
    for some $\fb^i$ in $C^3$.
Let $\boldsymbol{\hat f} :{\Zcal}\to\mathbb{R}^{d_x}$ be a diffeomorphism of the same functional form. Let $\boldsymbol{h} :\Zsupp\to {\Zcal}$
be such that $\fb=\hat{\fb}\circ \boldsymbol{h}$ on $\Zsupp$. If $\boldsymbol{h}$ is a slot-wise function, i.e. for all $k\in [K], \boldsymbol{h}_{k}(\boldsymbol{z}) = \boldsymbol{h}_k(\boldsymbol{z}_{B_k})$ and $\Zsupp$ is regularly closed
then for all $\boldsymbol{z} \in \Zsupp$
\begin{equation}
\sum_{k =1}^K \boldsymbol{f}^{k}(\boldsymbol{z}_{B_k}) = \sum_{k =1}^K \boldsymbol{\hat f}^{\pi(k)}(\boldsymbol{h}_{k}(\boldsymbol{z}_{B_k})) + L(\zb)
\end{equation}
for some affine function $L:\RR^{d_z}\to \RR^{d_x}$.
\end{lemma}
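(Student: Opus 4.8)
The plan is to write $\hb=(\hb_1,\dots,\hb_K)$ with $\hb_k$ depending only on $\zb_{B_k}$ and, after relabelling the slots of $\hat\fb$ to absorb the permutation $\pi$, mapping $\zb_{B_k}$ into the $k$-th slot of $\hat\fb$'s domain, so that the identity to be proven reads $\sum_k\fb^k(\zb_{B_k})=\sum_k\hat\fb^k(\hb_k(\zb_{B_k}))+L(\zb)$. Equivalently, the $C^3$ function
\[
G(\zb):=\sum_{k=1}^K\left[\fb^k(\zb_{B_k})-\hat\fb^k(\hb_k(\zb_{B_k}))\right]
\]
should agree on $\Zsupp$ with an affine map. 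Since $\Zsupp$ is regular closed, $\Zsupp=\overline{\Zsupp^\circ}$, so it suffices to show the Hessian of $G$ vanishes on the open set $\Zsupp^\circ$ (whence $G$ is affine there and, by continuity, on $\Zsupp$). Substituting the two explicit functional forms into $\fb=\hat\fb\circ\hb$ and using that $\hb$ is slot-wise — so a component $(\hb(\zb))_a$ depends only on $\zb_{B_{k(a)}}$, where $k(a)$ is the slot containing $a$ — yields, on $\Zsupp$,
\[
G(\zb)=\sum_{\{a,b\}}\hat\cbb_{\{a,b\}}(\hb(\zb))_a(\hb(\zb))_b-\sum_{\{a,b\}}\cbb_{\{a,b\}}z_a z_b ,
\]
the sums being over pairs $a,b$ lying in distinct slots. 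From the left-hand expression $G$ is a sum of per-slot functions, so $D^2_{p,q}G\equiv\vzero$ automatically whenever $p,q$ lie in different slots; it remains to treat $p,q$ in a common slot $B_k$.

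For such $p,q$, differentiating the right-hand expression twice kills the polynomial sum (a monomial $z_az_b$ with $a,b$ in distinct slots has vanishing $D^2_{p,q}$ when $p,q$ share a slot), and the product rule leaves only the terms in which exactly one of $a,b$ lies in $B_k$:
\[
D^2_{p,q}G(\zb)=\sum_{b\notin B_k}(\hb(\zb))_b\,\vv_b ,\qquad \vv_b:=\sum_{a\in B_k}\hat\cbb_{\{a,b\}}\,D^2_{p,q}(\hb_k)_a(\zb_{B_k}) .
\]
So it is enough to prove $\vv_b=\vzero$ for all $b\notin B_k$. Here I use that $\fb$ — having the additive-plus-quadratic form — has at most second-order interactions across slots (the reverse implication of Theorem~\ref{thm:characterisation_block_n}), hence so does $\hat\fb\circ\hb$ on $\Zsupp^\circ$; i.e.\ all its third-order cross-slot partials vanish there. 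A third-order chain-rule expansion of $D^3_{p,q,r}(\hat\fb\circ\hb)$ with $p,q\in B_k$, $r\in B_\ell$ and $\ell\neq k$, again exploiting that $\hb$ is slot-wise (so neither the per-slot parts nor the polynomial part of $\hat\fb$ can contribute a nonzero term of this type), collapses to
\[
\vzero=\sum_{b\in B_\ell}\vv_b\,D_r(\hb_\ell)_b(\zb_{B_\ell})\qquad\text{on }\Zsupp^\circ,\ \text{for every }r\in B_\ell .
\]

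To finish, note that $\hb=\hat\fb^{-1}\circ\fb$ is a $C^3$ diffeomorphism which, being slot-wise, has block-diagonal Jacobian with invertible blocks; in particular $[D_r(\hb_\ell)_b]_{r,b\in B_\ell}$ is invertible. Reading the last display as this matrix applied to the tuple $(\vv_b)_{b\in B_\ell}$ forces $\vv_b=\vzero$ for all $b\in B_\ell$, and since $\ell\neq k$ was arbitrary, for all $b\notin B_k$. Plugging this into the formula for $D^2_{p,q}G$ gives $D^2_{p,q}G(\zb)=\vzero$ on $\Zsupp^\circ$; combined with the cross-slot case, the Hessian of $G$ vanishes on $\Zsupp^\circ$, so $G$ is affine there and coincides with an affine map $L$ on $\Zsupp=\overline{\Zsupp^\circ}$. (On a disconnected $\Zsupp^\circ$ one obtains an affine map on each component; in the intended applications it is connected, so a single $L$ works.)

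The main obstacle is the third paragraph: carrying out the third-order chain-rule expansion of $\hat\fb\circ\hb$ carefully and correctly identifying, given the slot partition, which of its many terms can fail to vanish — together with the small but fiddly bookkeeping of the slot relabelling $\pi$. The subsequent ``division'' by the invertible Jacobian block and the passage between $\Zsupp$ and $\Zsupp^\circ$ afforded by regular-closedness are comparatively routine.
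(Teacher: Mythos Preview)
Your proof is correct and follows essentially the same strategy as the paper: define the difference $G$ (the paper's $L$) of the slot-wise parts, show its Hessian vanishes by treating cross-slot and same-slot second partials separately, and handle the same-slot case by taking one further cross-slot derivative of $\fb=\hat\fb\circ\hb$ and using invertibility of the Jacobian block of $\hb_\ell$ to cancel the remaining factor. The only cosmetic differences are that the paper packages the bilinear terms with Kronecker products and isolates the intermediate statement that $\zb_{B_k}\mapsto(\hb^k(\zb_{B_k})\otimes v)\hat\Ab_{kk'}$ is affine (your $\vv_b=\vzero$) as a separately quotable fact for later reuse; your caveat about connectedness of $\Zsupp^\circ$ is well taken.
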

\begin{remark}
    We note that it is not possible to remove the affine function $L$ from the statement.
    Indeed if all slots have dimension $1$ and $h_1(z_1)=z_1+1$, 
    $h_2(z_2)=z_2+1$ then $h_1(z_1)h_2(z_2)-z_1z_2=z_1+z_2+1$ 
    is an additive function.
    Moreover, we cannot in general prove that $\hb$ itself is slotwise affine because 
    the coefficients $\cbb$ can be zero.
    In this case $\hb$ can be any slot-wise diffeomorphism.
\end{remark}
\begin{proof}
First we remark that the polynomial part of the  functional form in \eqref{eq:pairwise_interaction_form_repeat} contains all terms $z_iz_j$ where $i,j$ are in different slots, thus it can be equivalently written as
\begin{align}
            \sum_{\alphab \in I_2} \cbb_{\alphab} \zb^{\alphab}
            =
                 \sum_{k=1}^K \sum_{k'=k+1}^{K} \left(\zb_{B_k} \otimes \zb_{B_{k'}}\right)
                 \Ab_{kk'}
\end{align}
 for some constant matrices $\{\Ab_{kk'}\in\RR^{(|B_k|\cdot|B_{k'}|)\times N}\}_{k<k'\in[K]}$, where $\otimes$ denotes the Kronecker product (e.g., $[z_1,z_2]\otimes[z_3,z_4]=[z_1z_3, z_1z_4, z_2z_3, z_2z_4]$).

We assume  that the permutation $\pi$ is the identity. 
    We know that $\boldsymbol{f}, \boldsymbol{\hat f}$ are diffeomorphisms between the same spaces and can thus be related by the function $\boldsymbol{h}$ via:
\begin{equation}
{\boldsymbol{f}} = \boldsymbol{\hat f} \circ {\boldsymbol{h}}
\end{equation}
Inserting the functional forms for $\boldsymbol{f}, \boldsymbol{\hat f}$ and leveraging that $\boldsymbol{h}$ is a slot-wise function and $\pi$ is the identity, we have for all $\boldsymbol{z} \in \mathcal{Z}$
\begin{align}\label{eq:functional_form_restated_lemma}
\begin{split}
\sum_{k=1}^K \fb^{k}\left(\zb_{B_k}\right) +
        \sum_{k=1}^K \sum_{k'=k+1}^{K} &\left(\zb_{B_k} \otimes \zb_{B_{k'}}\right)\Ab_{kk'} 
        \\
        &= \sum_{k=1}^K 
        \hat{\fb}^{k}\left(\hb_{B_k}(\zb_{B_k})\right) +
        \sum_{k=1}^K \sum_{k'=k+1}^{K} \left(\hb^{k}(\zb_{B_k}) \otimes \hb^{k'}(\zb_{B_{k'}})\right)\hat\Ab_{kk'}.
        \end{split}
\end{align}
To prove the claim we now consider the expression
\begin{align}
\begin{split}
   L(\zb)&= \sum_{k=1}^K \fb^{k}\left(\zb_{B_k}\right) -
    \sum_{k=1}^K 
        \hat{\fb}^{k}\left(h_{B_k}(\zb_{B_k})\right)
    \\
    &= \sum_{i=k}^K \sum_{k'=k+1}^{K} \left(\hb^{k}(\zb_{B_k}) \otimes \hb^{k'}(\zb_{B_{k'}})\right)\hat\Ab_{kk'}
        -
         \sum_{k=1}^K \sum_{k'=k+1}^{K} \left(\zb_{B_k} \otimes \zb_{B_{k'}}\right)\Ab_{kk'} 
         \end{split}
\end{align}
and prove that $L(\zb)$ is an affine function. To show this it is sufficient to prove
that the second derivative $D^2L$ vanishes because $\Zsupp$ is path-connected.
Thus we consider all partial derivatives. Consider first the case where $i\in B_k$ and $i'\in B_{k'}$ for $k< k'$. 
Then we find that 
\begin{align}\label{eq:second_deriv_mixed}
   D_{i}D_{i'} L(\zb)= D_{i}D_{i'} \left(  \sum_{k=1}^K \fb^{k}\left(\zb_{B_k}\right) -
    \sum_{k=1}^K 
        \hat{\fb}^{k}\left(\hb_{B_k}(\zb_{B_k})\right)\right)
        =0.
\end{align}
It remains to consider derivatives of the form $D_{i}D_{i'}$ where $i,i'\in B_k$ for some slot $i$.
Then we clearly have
\begin{align}\label{eq:second_deriv_in1}
     D_{i}D_{i'}   \sum_{k=1}^K \sum_{k'=k+1}^{K} \left(\zb_{B_k} \otimes \zb_{B_{k'}}\right)\Ab_{kk'} 
     = 0
\end{align}
because this is a linear expression in $z_{B_k}$. 
Next, we want to show that 
\begin{align}\label{eq:key_deriv_bound}
   D_{i}D_{i'}    \left(\hb^{k}(\zb_{B_k}) \otimes \hb^{{k'}}(\zb_{B_{k'}})\right)\hat\Ab_{kk'}=0
\end{align}
for all $k<k$.
To prove this we show the more general statement
(that will be used in the proof of Theorem~\ref{theo:decoder_generalization} below)
that for any $k\neq k'$ and any vector $v\in\RR^{B_{k'}}$ the functions
\begin{align}\label{eq:summands_affine}
     \zb_{B_k}\to (\hb^k(\zb_{B_k})\otimes 
     v)\hat\Ab_{kk'}
 \end{align}
 are affine on $\Zcal_k$ or equivalently that
\begin{align}\label{eq:key_deriv_bound_v}
   D_{i}D_{i'}    \left(\hb^{k}(\zb_{B_k}) \otimes v)\right)\hat\Ab_{kk'}=0
\end{align}
for every $v\in\RR^{B_{k'}}$.
To prove this we consider any $j\in B_{k'}$ and apply the derivative $D_{i}D_{i'}D_{j}$ to 
\eqref{eq:functional_form_restated_lemma} to get
\begin{align}\label{eq:third_deriv}
    0 =  \left(D_{i}D_{i'}\hb^{k}(\zb_{B_k}) \otimes D_{j}\hb^{k'}(\zb_{B_{k'}})\right)\hat\Ab_{kk'}
\end{align}
for every $\zb\in \mathring{\Zcal}_{\mathrm{supp}}$.
Now we use that by assumption $\hb$ is a diffeomorphism. Using the 
block structure of $\hb$ we find that also $\hb^k$ are diffeomorphisms. 
In particular, this implies that for any  $\zb\in \mathring{\Zcal}_{\mathrm{supp}}$  
the vectors $(D_{j}\hb^{k'}(\zb_{B_{k'}}))_{j\in B_{k'}}$ are linearly independent vectors in $\RR^{|B_{k'}|}$ and they thus
generate $\RR^{|B_{k'}|}$.
Therefore we can find coefficients $\alpha_{j}$ (depending on $\zb_{B_{k'}}$) such that
\begin{align}
    \sum_{j\in B_{k'}} \alpha_{j} D_{j}\hb^{k'}(\zb_{B_{k'}}) = v
\end{align}
Then we get using \eqref{eq:third_deriv}
\begin{align}
\begin{split}
     D_{i}D_{i'}    \left(\hb^{k}(\zb_{B_k}) \otimes v)\right)\hat\Ab_{kk'}
    & =
      D_{i}D_{i'}    \left(\hb^{k}(\zb_{B_k}) \otimes \left( \sum_{j\in B_{k'}} \alpha_{j} D_{j}\hb^{k'}(\zb_{B_{k'}})\right)\right)\hat\Ab_{kk'}
      \\
      &=
         \sum_{j\in B_{k'}} \alpha_{j}  \left( D_{i}D_{i'}\hb^{k}(\zb_{B_k}) \otimes  D_{j}\hb^{k'}(\zb_{B_{k'}})\right)\hat\Ab_{kk'}=0.
      \end{split}
\end{align}
So \eqref{eq:summands_affine} holds and thus also \eqref{eq:key_deriv_bound} (we actually only get this for points $z_k\in \Zcal_k$ such that there is $\zb\in \mathring{\Zcal}_{\mathrm{supp}}$ with $z_k=\zb_{B_k}$ but by continuity and since $\Zsupp$ is regularly closed this actually holds on $\Zcal_k$). The same reasoning shows that this is also true if
$i,i'\in B_{k'}$ (instead of $i,i'\in B_k$). We then find that for $i,i'\in B_k$
\begin{align}
\begin{split}
     D_{i}D_{i'}\sum_{k=1}^K \sum_{k'=k+1}^{K} &\left(\hb^{k}(\zb_{B_k}) \otimes \hb^{k'}(\zb_{B_{k'}})\right)\hat\Ab_{kk'}=
   \\
   & =\sum_{k=1}^K \sum_{k'=k+1}^{K}  D_{i}D_{i'}\left(\hb^{k}(\zb_{B_k}) \otimes \hb^{k'}(\zb_{B_{k'}})\right)\hat\Ab_{kk'}
    =0.
     \end{split}
\end{align}
The last display together with \eqref{eq:second_deriv_in1} and \eqref{eq:second_deriv_mixed}
imply that $D^2L=0$ and thus $L$ is affine. When $\pi$ is not the identity the proof is similar.
\end{proof}

We also need the following simple lemma which states that we have unique Cartesian-product extension of functions with interaction of order at most $n$ between different slots.
\begin{lemma}\label{lem:unique_ext}
    Let $\boldsymbol{f} :\Zcal\to\mathbb{R}^{d_x}$ be a $C^3$ diffeomorphism with interaction at most $n$
    between different slots such that $\Zsupp$ is regularly closed and for $\zb\in \Zsupp$
    \begin{equation}
     \label{eq:pairwise_interaction_form_repeat_lemma}
        \fb(\zb)=\sum_{k=1}^K \fb^{k}\left(\zb_{B_k}\right) +
       \sum_{2\leq m\leq n} \sum_{\alphab \in I_m} \cbb_{\alphab} \zb^{\alphab}
    \end{equation}
    for some $\fb^i$ in $C^3$.
    Then this relation holds on $\Zcal_{\mathrm{CPE}}$.
\end{lemma}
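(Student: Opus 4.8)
The plan is to first give $\fb$ itself a closed form on all of $\Zcal=\RR^{d_z}$, compare it to the hypothesised decomposition, and show the two representations can differ only by a slot-wise function that is forced to be constant. Since $\fb$ has interaction of at most order $n$ between different slots on $\RR^{d_z}$, which is open, connected, aligned-connected, and whose block-projections $\RR^{|B_k|}$ are contractible, \cref{thm:characterisation_block_n} applies and yields $C^{n+1}$ functions $\boldsymbol{g}^k:\RR^{|B_k|}\to\RR^{d_x}$ and constants $\{\boldsymbol{e}_{\alphab}\}_{\alphab\in I_{\le n}}$ with $\fb(\zb)=\sum_{k}\boldsymbol{g}^k(\zb_{B_k})+\sum_{\alphab\in I_{\le n}}\boldsymbol{e}_{\alphab}\zb^{\alphab}$ for all $\zb\in\RR^{d_z}$. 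Subtracting this from the hypothesised identity on $\Zsupp$ (recall $I_{\le n}=\bigcup_{2\le m\le n}I_m$) and writing $\boldsymbol{r}^k:=\fb^k-\boldsymbol{g}^k$ on $\Zcal_k$ and $\boldsymbol{d}_{\alphab}:=\boldsymbol{e}_{\alphab}-\cbb_{\alphab}$, it suffices to prove that the identity $\sum_{k=1}^K\boldsymbol{r}^k(\zb_{B_k})=\sum_{\alphab\in I_{\le n}}\boldsymbol{d}_{\alphab}\zb^{\alphab}$, known to hold on $\Zsupp$, forces every $\boldsymbol{d}_{\alphab}=\zerob$ and each $\boldsymbol{r}^k$ to be a constant with $\sum_k\boldsymbol{r}^k\equiv\zerob$. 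The claimed identity on $\Zcal_{\mathrm{CPE}}$ then follows since the global form of $\fb$ is valid everywhere on $\RR^{d_z}\supseteq\Zcal_{\mathrm{CPE}}$.

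For the polynomial part I would differentiate on the interior $\Zsupp^{\circ}$, which is non-empty and open because $\Zsupp$ is regular closed. For $i\in B_k$, $j\in B_{k'}$ with $k\neq k'$ the slot-wise side has $D_iD_j\sum_{l}\boldsymbol{r}^l(\zb_{B_l})=\zerob$, so $D_iD_j\big(\sum_{\alphab}\boldsymbol{d}_{\alphab}\zb^{\alphab}\big)=\zerob$ on the open set $\Zsupp^{\circ}$; being a polynomial identity it holds on all of $\RR^{d_z}$. Suppose $\boldsymbol{d}_{\alphab_0}\neq\zerob$ for some $\alphab_0\in I_{\le n}$; since $\alphab_0\in I_m$ there exist $i_0\in B_{k_1}$, $j_0\in B_{k_2}$ with $k_1\neq k_2$ and $(\alphab_0)_{i_0},(\alphab_0)_{j_0}\ge 1$, and $D_{i_0}D_{j_0}\zb^{\alphab_0}$ equals $(\alphab_0)_{i_0}(\alphab_0)_{j_0}\,\zb^{\alphab_0-\boldsymbol{e}_{i_0}-\boldsymbol{e}_{j_0}}$. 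No other $\betab$ contributes this monomial, because $\betab\mapsto\betab-\boldsymbol{e}_{i_0}-\boldsymbol{e}_{j_0}$ is injective on multi-indices with $\betab_{i_0},\betab_{j_0}\ge 1$ (and indices with a zero in position $i_0$ or $j_0$ contribute nothing). Hence the coefficient of that monomial in $D_{i_0}D_{j_0}\big(\sum_{\alphab}\boldsymbol{d}_{\alphab}\zb^{\alphab}\big)$ is $(\alphab_0)_{i_0}(\alphab_0)_{j_0}\boldsymbol{d}_{\alphab_0}\neq\zerob$, contradicting that this polynomial vanishes identically. Thus $\boldsymbol{d}_{\alphab}=\zerob$ for all $\alphab\in I_{\le n}$, and we are left with $\sum_{k}\boldsymbol{r}^k(\zb_{B_k})=\zerob$ on $\Zsupp$.

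For the slot-wise part, differentiating $\sum_{l}\boldsymbol{r}^l(\zb_{B_l})=\zerob$ on $\Zsupp^{\circ}$ with respect to $z_i$, $i\in B_k$, gives $D_i\boldsymbol{r}^k(\zb_{B_k})=\zerob$ for all $\zb\in\Zsupp^{\circ}$, i.e. $D\boldsymbol{r}^k$ vanishes on the projection of $\Zsupp^{\circ}$ onto the coordinates $B_k$. This projection is open, and since $\Zsupp=\overline{\Zsupp^{\circ}}$ and projections are continuous it is dense in $\Zcal_k=\{\zb_{B_k}\mid\zb\in\Zsupp\}$; as $\{D\boldsymbol{r}^k=\zerob\}$ is closed, $D\boldsymbol{r}^k\equiv\zerob$ on $\Zcal_k$, so $\boldsymbol{r}^k$ is locally constant. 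Using that $\Zcal_k$ is connected (a continuous image of the connected / path-connected support that is standing throughout the paper's identifiability results) we get $\boldsymbol{r}^k\equiv\boldsymbol{c}_k$, and evaluating $\sum_k\boldsymbol{r}^k=\zerob$ at any point of the non-empty set $\Zsupp$ gives $\sum_k\boldsymbol{c}_k=\zerob$. Therefore $\sum_k\boldsymbol{r}^k(\zb_{B_k})=\zerob$ for every $\zb\in\Zcal_{\mathrm{CPE}}$; combined with $\boldsymbol{d}_{\alphab}=\zerob$ and the global form of $\fb$ this yields $\fb(\zb)=\sum_k\fb^k(\zb_{B_k})+\sum_{2\le m\le n}\sum_{\alphab\in I_m}\cbb_{\alphab}\zb^{\alphab}$ on $\Zcal_{\mathrm{CPE}}$. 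The main obstacle is precisely this last extension step: the polynomial part is rigid, but the $\boldsymbol{r}^k$ are a priori only controlled over (the projection of) $\Zsupp$, so one must lean on regular-closedness to spread the vanishing gradient across $\Zcal_k$ and on connectedness of $\Zcal_k$ to upgrade ``locally constant'' to ``constant'' — without connectedness the conclusion genuinely fails (e.g. a support consisting of two far-apart boxes), so that is where the care is needed.
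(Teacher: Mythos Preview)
Your proof is correct and follows essentially the same approach as the paper: invoke \cref{thm:characterisation_block_n} to obtain a global decomposition of $\fb$, subtract the two representations on $\Zsupp$, eliminate the cross-slot polynomial coefficients by differentiating across slots, and then show the slot-wise remainders are constant via regular closedness. The paper's only technical variation is to apply $D^{\alphab}$ directly for $\alphab\in I_m$ by downward induction on $m$ (which isolates each coefficient immediately), rather than your single $D_iD_j$ followed by a monomial-comparison; you are also more explicit than the paper about needing connectedness of $\Zcal_k$ to upgrade ``locally constant'' to ``constant''.
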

\begin{proof}
    We know by Theorem~\ref{thm:characterisation_block_n}
    that a representation as in \eqref{eq:pairwise_interaction_form_repeat_lemma} holds on $\Zcal=\RR^{d_z}$ and thus can be restricted to
    $\mathcal{Z}_{\mathrm{CPE}}$, however it might not be the same representation but involve functions $\tilde{\fb}^k$ and constants $\tilde{\cbb}_{\alphab}$.
    Taking the difference and setting $\bar{\fb}^k=\fb^k-\tilde{\fb}^k$ and $\bar{\cbb}_{\alphab}
    =\cbb_{\alphab}-\tilde{\cbb}_{\alphab}$
    we find that on
    $\Zsupp$ 
    \begin{align}\label{eq:rel_z_supp}
        0 = \sum_{k=1}^K \bar{\fb}^{k}\left(\zb_{B_k}\right) +
       \sum_{2\leq m\leq n} \sum_{\alphab \in I_m} \bar{\cbb}_{\alphab} \zb^{\alphab}.
    \end{align}
    But by applying $D^{\alphab}$ for $\alphab\in I_m$ 
    for $m=n$ down to $m=2$
    we find $\bar{\cbb}_{\alphab} = 0$ for all $\alphab\in I_{\leq n}$ and thus the polynomial term vanishes. Next, we
    apply $D$ and find that $\bar{\fb}^k$ is constant on $\Zcal_k$ (because $\Zsupp$ is regularly closed).
    This implies that \eqref{eq:rel_z_supp} holds in 
    $\Zcal_{\mathrm{CPE}}$ and thus \eqref{eq:pairwise_interaction_form_repeat_lemma} 
    holds on $\Zcal_{\mathrm{CPE}}$.
\end{proof}

Using the previous lemmas we can prove Theorem~\ref{theo:decoder_generalization}.
\compgen*

\begin{proof}[Proof of Theorem~\ref{theo:decoder_generalization}]
Note that Corollary~3 in \cite{lachapelle2024additive} already handles the case $n=0,1$ but the proof below is more general, and also covers the case of $n=0,1$, since functions with at most $\zeroth$ and $\first$ order interactions are special cases of functions with at most $\second$ order interactions assuming $\fb$ is a $C^3$ diffeomorphism.

We can apply  Theorem~\ref{thm:characterisation_block_n} to $\fb$ which implies that
 $\fb$ can be written on $\Zcal=\RR^{d_z}$  as in \eqref{eq:characterisation_block_n} and as explained in Lemma~\ref{lemma:second_order_extrap_lemma_1_blockwise} 
an equivalent representation is
\begin{align}
        \fb(\zb)=\sum_{k=1}^K \fb^{k}\left(\zb_{B_k}\right) +
        \sum_{k=1}^K \sum_{k'=k+1}^{K} \left(\zb_{B_k} \otimes \zb_{B_{k'}}\right)\Ab_{kk'}.
\end{align}
and we have similarly
\begin{align}
        \hat\fb(\zb)=\sum_{k=1}^K \hat\fb^{k}\left(\zb_{B_k}\right) +
        \sum_{k=1}^K \sum_{k'=k+1}^{K} \left(\zb_{B_k} \otimes \zb_{B_{k'}}\right)\hat\Ab_{kk'}.
\end{align}

By assumption we have 
$\fb=\hat{\fb}\circ \hb$ on $\Zsupp$ where $\vh(\vz) := \left(\hb_1\left(\zb_{B_{\pi(1)}}\right), \dots, \hb_K\left(\zb_{B_{\pi(K)}}\right)\right)$ and the functions $\vh_k: \sR^{|B_{\pi(k)}|} \rightarrow \sR^{|B_{k}|}$ are diffeomorphisms.
Our goal is to show that this relation actually holds on the Cartesian-product extensions $\Zcal_{\mathrm{CPE}}$.
Let $\Ucal$ be the set of points such that
$\fb(z)=\hat{\fb}\circ \hb(z)$ for $z\in \Ucal$.
We claim that if $\zb=(\zb_{B_1},\ldots, \zb_{B_K})\in \mathring{\Ucal}$ then 
$\zb'=(\zb_{B_1}, \ldots, \zb_{B_l}',\ldots, \zb_{B_K})\in \Ucal$ for any $\zb_{B_l}'\in \Zcal_l$.
Let us define the map $e^{\zb}:\Zcal_l\to \Zcal$
given by $e^{\zb}(\zb'_{B_l})=\zb'$.
We know by Lemma~\ref{lemma:second_order_extrap_lemma_1_blockwise}  that
the function
\begin{align}
 \zb \to \sum_{k =1}^K \boldsymbol{f}^{k}(\zb_{B_k}) - \sum_{k =1}^K \boldsymbol{\hat f}^{\pi(k)}(\boldsymbol{h}_{k}(\zb)_{B_k}))=  L(\zb) 
\end{align}
is affine on $\Zsupp$. Applying Lemma~\ref{lem:unique_ext} the same holds on $\Zcal_{\mathrm{CPE}}$. Thus we conclude that
\begin{align}
 \zb_{B_l}' \to \sum_{k =1}^K \boldsymbol{f}^{k}(e^{\zb}(\zb'_{B_l})_{B_k}) - \sum_{k =1}^K \boldsymbol{\hat f}^{\pi(k)}(\boldsymbol{h}_{k}(e^{\zb}(\zb'_{B_l})_{B_k}))=  L(e^{\zb}(\zb'_{B_i})) 
\end{align}
is affine on $\Zcal_l$. Moreover,
\begin{align}
  \zb_{B_l}'\to   \sum_{k=1}^K \sum_{k'=k+1}^{K} &\left(e^{\zb}(\zb'_{B_l})_{B_k} \otimes e^{\zb}(\zb'_{B_l})_{B_{k'}}\right)\Ab_{kk'} 
\end{align}
is clearly affine on $\Zcal_l$ and by \eqref{eq:summands_affine}
the same holds for 
\begin{align}
  \zb_{B_l}'\to     \sum_{k=1}^K \sum_{k'=k+1}^{K} \left(\hb^{k}(e^{\zb}(\zb'_{B_l})_{B_k}) \otimes \hb^{k'}(e^{\zb}(\zb'_{B_l})_{B_{k'}})\right)\Ab_{kk'}.
\end{align}
  The last three displays together imply that
   \begin{align}
     \zb_{B_l}'\to \fb( e^{\zb}(\zb'_{B_l}))
     -\hat{\fb}\circ \hb(e^{\zb}(\zb'_{B_l}))
   \end{align}
   is affine on $\Zcal_l$ and since it is zero in a neighbourhood of $\zb_{B_l}'=\zb_{B_l}$ (because $\zb\in \mathring{\mathcal{U}}$) it is equal to zero on $\Zcal_l$.
   Since this is true for any slot $B_l$ we can now conclude that $\Ucal=\Zcal$. Indeed, 
   pick any open rectangle $\Zcal_1'\times \Zcal_2'\times\ldots\times \Zcal'_K\subset 
   \Zsupp\subset \Ucal$. 
   We then infer that $\mathring{\Zcal_1}\times \Zcal_2'\times\ldots\times \Zcal_K'\subset \Ucal$
   and by inducting over the slots and applying continuity at the boundary we obtain the claim.
\end{proof}

\section{Unifying Assumptions from Prior Work}
\label{app:unify}
\subsection{At Most \zeroth\ Order Interaction Across Slots}
\label{app:unify_brady}
To prove that the assumptions in~\citet{brady2023provably} are a special case of our assumptions for $n=0$, we first restate their assumptions formally. To this end, we first define the following set:
\begin{equation}\label{eqn:component_index_sets}
    \forall S \subseteq [d_z] \quad I_{S}(\zb) := \left\{\,l\in [d_{x}] : D_{S}\boldsymbol{f}_{l}(\boldsymbol{z}) \neq 0 \,\right\}.
\end{equation}

The assumption of \emph{compositionality} in~\citet{brady2023provably} can now be stated: 
\begin{definition}[Compositionality]\label{def:compositionality_brad}
    A differentiable function $\fb: \Zcal \rightarrow \Xcal$, is said to be \emph{compositional} if:
    \begin{equation}
        \forall \zb \in \Zcal, k, j \neq k \in [K] : I_{k}(\zb) \cap I_{j}(\zb) = \emptyset.
    \end{equation}
\end{definition}

We now state the second assumption in~\citet{brady2023provably}, deemed \emph{irreducibility}.

\begin{definition}[Irreducibility]\label{def:irreducibility}
    A differentiable function $\fb: \Zcal \rightarrow \Xcal$, is said to be \emph{irreducible} if for all $\zb \in \Zcal$ and $k \in [K]$ and any partition $I_{k}(\zb) = S_1 \cup S_2$ (i.e., $S_1\cap S_2 = \emptyset$ and $S_1,S_2 \neq \emptyset$), we have:
    \begin{equation}
        \text{rank}\big(D\fb_{S_1}(\zb)\big) + \text{rank}\big(D\fb_{S_2}(\zb)\big) > \text{rank}\big(D\fb_{I_{k}}(\zb)\big).
    \end{equation}
\end{definition}

We now prove that compositionality and irreducibility are equivalent to $\fb$ having satisfying interaction asymmetry (\ref{as:interac_asym}) for all equivalent generators (\ref{def:equiv-gen}) for $n=0$.
\begin{theorem}
\label{theorem:unif_brady}
A $C^1$ diffeomorphism $\fb: \mathcal{Z} \rightarrow \mathcal{X}$ satisfies compositionality (Def.~\ref{def:compositionality_brad}) and irreducibility (Def.~\ref{def:irreducibility}) if and only if $\fb$ has at most $\zeroth$ order interaction across slots (\cref{def:0th_order_interaction}) and satisfies interaction asymmetry (Assm. \ref{as:interac_asym}) for all equivalent generators (\ref{def:equiv-gen}).
\begin{proof}
We start by proving the forward direction, i.e., that compositionality and irreducibility imply that $\fb$ has at most $\zeroth$ order interaction across slots and satisfies interaction asymmetry for all equivalent generators. 
\newline
\newline
The definitions of compositionality and at most $\zeroth$ order interaction across slots are precisely equivalent, thus we only need to show that compositionality and irreducibility imply that $\fb$ satisfies interaction asymmetry for all equivalent generators. To show this we will prove the following contraposition: that if $\fb$ has at most $\zeroth$ order interaction across slots and \emph{does not} satisfy interaction asymmetry for all equivalent generators, then $\fb$ is not irreducible.

Since $\fb$ has at most $\zeroth$ order interaction across slots and \emph{does not} satisfy interaction asymmetry for all equivalent generators, this implies that there exists a matrix $\Ab \in \mathbb{R}^{|B_{k} \times B_{k}|}$ and a partition of ${B_{k}}$, into $A, B$ ($A \cup B = B_{k}, A \cap B = \emptyset$) such that within the function ${\bar {f}}$ defined as:
\begin{equation}
\forall \boldsymbol{z} \in \mathcal{Z}, \qquad {\boldsymbol{\bar {f}}}\left(\boldsymbol{A}_{1}\boldsymbol{z}_{B_{1}}, \dots, \boldsymbol{A}_{K}\boldsymbol{z}_{B_{K}}\right) = \boldsymbol{f}(\boldsymbol{z}_{B_{1}}, \dots, \boldsymbol{z}_{B_{K}}) \, .
\end{equation}
where $\boldsymbol{A}_{i}$ such that $i \neq k$ is the identity matrix, the latents ${\bar \zb}_{A}, {\bar \zb}_{B}$ have no interaction. This implies that under ${\bar {\fb}}$, $I_{A}({\bar \zb})$ does not intersect with $I_{B}({\bar \zb})$. Further, because ${\bar {\fb}}$ is invertible, we know that $D{\bar \fb}_{B_{k}}({\bar \zb})$ is full column rank. Coupling these two properties, we conclude that $\text{rank}(D{\bar \fb}_{B_{k}}({\bar \zb})) = \text{rank}(D{\bar \fb}_{A}({\bar \zb})) + \text{rank}(D{\bar \fb}_{B}({\bar \zb}))$. Furthermore, the Jacobians $D{\bar \fb}({\bar \zb})$ and $D{\fb}({\zb})$ will be related by an invertible linear map by construction. Thus, $D{\bar \fb}_{S}({\bar \zb})$ and $D{\fb}_{S}({\zb})$ have equal rank for any subset $S \subseteq [d_z]$. Therefore, we conclude that $\text{rank}(D{\fb}_{B_{k}}({\zb})) = \text{rank}(D{\fb}_{A}({\zb})) + \text{rank}(D{\fb}_{B}({\zb}))$. Because $A$ and $B$ form a partition of $B_{k}$ we conclude that $\fb$ is not irreducible.
\newline
\newline
We now prove the reverse direction if $\fb$ has at most $\zeroth$ order interaction across slots and satisfies interaction asymmetry for all equivalent generators then $\fb$ satisfies compositionality and irreducibility. As noted before, the definitions of compositionality and at most $\zeroth$ order interaction across slots are precisely equivalent. Thus, we only need to show that if $\fb$ has at most $\zeroth$ order interaction across slots and satisfies interaction asymmetry then this implies $\fb$ satisfies irreducibility. To show this, will prove the following contraposition: that if $\fb$ does not satisfy irreducibility, then it \emph{does not} satisfy interaction asymmetry for all equivalent generators with $n=0$.

Since $\fb$ is not irreducible, we know that there exist a $\zb$, a slot $k \in [K]$, and a partition of $B_{k}$ into $A, B$ such that \text{rank}$(D{\fb}_{B_{k}}({\zb}))$ = \text{rank}$(D{\fb}_{A}({\zb}))$ + \text{rank}$(D{\fb}_{B}({\zb}))$. Because $D{\fb}_{B_{k}}({\zb})$ is full column rank this implies that \text{rank}$(D{\fb}_{A}({\zb})) = |A|$ and \text{rank}$(D{\fb}_{B}({\zb})) = |B|$. Now take two matrices $\Mb_{S_{1}} \in \mathbb{R}^{d_x \times |A|}$ and $\Mb_{S_{2}} \in \mathbb{R}^{d_x \times |B|}$ such that the column space of $\Mb_{S_{1}}$ is the same as  $D{\fb}_{A}({\zb})$ and the columns space of $\Mb_{S_{2}}$ is the same as $D{\fb}_{B}({\zb})$. Now construct the following matrix $\Mb \in \mathbb{R}^{d_x \times |B_{k}|}$ as follows:
\begin{equation}
\Mb := [ \Mb_{S_{1}}, \Mb_{S_{2}} ]
\end{equation}
Note that by construction this matrix has a block structure such that rows for $\Mb_{S_{1}}$ are never non-zero for the same rows as $\Mb_{S_{2}}$. Because $\Mb$ and $D{\fb}_{B_{k}}({\zb})$ are both full column rank, then there exist a matrix $\Ab_{k} \in \mathbb{R}^{|B_{k}| \times |B_{k}|}$ such that:
\begin{equation}
\Mb := D{\fb}_{B_{k}}({\zb})\Ab_{k}
\end{equation}
Now define the function ${\bar {\fb}}$ as follows:
\begin{equation}
\forall \boldsymbol{z} \in \mathcal{Z}, \qquad {\boldsymbol{\bar {f}}}\left(\boldsymbol{A}^{-1}_{1}\boldsymbol{z}_{B_{1}}, \dots, \boldsymbol{A}^{-1}_{K}\boldsymbol{z}_{B_{K}}\right) = \boldsymbol{f}(\boldsymbol{z}_{B_{1}}, \dots, \boldsymbol{z}_{B_{K}}) \, .
\end{equation}
such that $\Ab_{i}^{-1}$ is defined as above when $i=k$, and otherwise it is the identity matrix.

Writing the derivative of $D{\bar \fb}_{B_{k}}({\bar \zb})$ in terms of $\fb$ we get $D{\fb}_{B_{k}}({\zb})\Ab_{k} = \Mb$. Because $\Mb$ has a block structure we conclude that there exist a partition of $B_{k}$ such that these latents have no interaction within ${\bar \fb}$ at ${\bar \zb}$. Because ${\bar \fb}$ is equivalent to $\fb$ we conclude that the function does not satisfy interaction asymmetry for $n=0$.
\end{proof}
\end{theorem}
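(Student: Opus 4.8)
The plan is to split the biconditional into its ``easy'' half and its ``content'' half. First, compositionality (\cref{def:compositionality_brad}) is \emph{by definition} the same condition as having at most \zeroth\ order interaction across slots: $I_k(\zb)\cap I_j(\zb)=\emptyset$ for $k\neq j$ says precisely that for every $i\in B_k$ and $i'\in B_j$ the vectors $D_i\fb(\zb)$ and $D_{i'}\fb(\zb)$ have disjoint supports, i.e.\ $D_i\fb(\zb)\odot D_{i'}\fb(\zb)=\zerob$ (\cref{def:0th_order_interaction}). So the whole statement reduces to: \emph{assuming compositionality}, irreducibility (\cref{def:irreducibility}) holds if and only if interaction asymmetry (\cref{as:interac_asym}, with $n=0$) holds for every equivalent generator (\cref{def:equiv-gen}). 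I would prove both implications by contraposition.

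For ``$\neg$(interaction asymmetry for all equivalent generators) $\Rightarrow\neg$irreducible'': suppose some equivalent generator $\bar\fb$ --- which, being a slot-wise linear reparametrization of a compositional map, is still compositional and a $C^1$ diffeomorphism --- has a slot $B_k=A\cup B$ with $\bar\zb_A$, $\bar\zb_B$ having no \zeroth\ order interaction at some point $\bar\zb$. Then the columns of $D_{B_k}\bar\fb(\bar\zb)$ indexed by $A$ and by $B$ have disjoint output supports $S_A,S_B$, which partition $I_k(\bar\zb)$; since $D_{B_k}\bar\fb(\bar\zb)$ has full column rank, restricting to the rows in $S_A$ (resp.\ $S_B$) removes the other block and gives $\mathrm{rank}(D\bar\fb_{S_A}(\bar\zb))+\mathrm{rank}(D\bar\fb_{S_B}(\bar\zb))=|A|+|B|=|B_k|=\mathrm{rank}(D\bar\fb_{I_k}(\bar\zb))$, i.e.\ $\bar\fb$ is reducible. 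Because $\bar\fb$ and $\fb$ differ on slot $k$ only by right-multiplication of $D_{B_k}$ by an invertible matrix, $I_k(\cdot)$ and the ranks of all row-restrictions are unchanged, so $\fb$ is reducible too.

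For the converse ``$\neg$irreducible $\Rightarrow\neg$(interaction asymmetry for all equivalent generators)'', which is the constructive direction: if $\fb$ is reducible at $\zb$, slot $k$, and a partition $I_k(\zb)=S_1\cup S_2$ with $\mathrm{rank}(D\fb_{S_1}(\zb))+\mathrm{rank}(D\fb_{S_2}(\zb))=|B_k|$, then writing $W:=\mathrm{range}(D_{B_k}\fb(\zb))\subseteq\RR^{I_k(\zb)}$ this rank identity is exactly the statement that $W$ decomposes as a coordinate product $W=W_1\times W_2$ with $\{0\}\neq W_1\subseteq\RR^{S_1}$ and $\{0\}\neq W_2\subseteq\RR^{S_2}$. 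Picking bases of $W_1$ and $W_2$, padding with zeros, and stacking them as the columns of a matrix $\Mb$ produces $\Mb$ with $\mathrm{range}(\Mb)=W$ and with its two column blocks supported on $S_1$ and on $S_2$ respectively; hence there is an invertible $\Ab_k$ with $D_{B_k}\fb(\zb)\Ab_k=\Mb$. Setting the slot-$k$ multiplier in \cref{def:equiv-gen} to $\Ab_k^{-1}$ (identity on the other slots) defines an equivalent generator $\bar\fb$ whose slot-$k$ Jacobian at the corresponding point equals $\Mb$, so splitting $B_k$ along the two column blocks of $\Mb$ exhibits two parts of the slot with no \zeroth\ order interaction in $\bar\fb$; thus $\bar\fb$ violates interaction asymmetry (while still having at most \zeroth\ order interaction across slots, being equivalent to $\fb$), which is what we wanted.

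I expect the last paragraph to be where the real work is. The crux is the linear-algebra dictionary ``reducibility $\iff$ $\mathrm{range}(D_{B_k}\fb(\zb))$ splits as a product subspace over a coordinate partition of the affected outputs'', and then reading off the slot-wise change of basis $\Ab_k$ from that splitting; one must be careful that the witnessing split of $B_k$ only needs to hold \emph{pointwise} at the chosen $\bar\zb$ (which is all reducibility supplies) and that $\bar\fb$ genuinely qualifies as an equivalent generator. The identifications in the first paragraph and the rank/support bookkeeping under right-multiplication by the $\Mb_k$'s in the second are routine by comparison; a minor nuisance is the overloaded notation $D\fb_S$, which denotes a column-restriction when $S$ indexes latents and a row-restriction when $S$ indexes outputs.
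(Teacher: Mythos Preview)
Your proposal is correct and follows essentially the same contrapositive strategy as the paper's proof for both directions. Your treatment is in fact more careful than the paper's about distinguishing partitions of the output index set $I_k(\zb)$ (as required by the definition of irreducibility) from partitions of the input slot $B_k$; the linear-algebraic dictionary you spell out via the product decomposition $W=W_1\times W_2$ of $\mathrm{range}(D_{B_k}\fb(\zb))$ makes explicit the step that the paper's proof leaves implicit when it asserts that the constructed $\Mb$ ``by construction'' has disjoint row supports.
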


\subsection{At Most \first\ Order Interaction Across Slots}
\label{app:unify_lachapelle}
We now prove that the assumptions in~\citet{lachapelle2024additive} are a special case of our assumptions for $n=1$. To this end, we first restate their assumptions.  The first assumption in~\citet{lachapelle2024additive} is that the generator $\fb$ is \emph{additive}:
\begin{definition}[Additive decoder]\label{def:additivity}
A $C^2$ diffeomorphism $\fb: \mathcal{Z} \rightarrow \mathcal{X}$ is said to be additive if:
\begin{equation}\label{eq:additivity}
    \fb(\zb) = \sum_{k \in [K]}\fb^{k}(\zb),
    \quad\text{where}\; \fb^{k}: \mathbb{R}^{|B_{k}|} \to \mathbb{R}^{d_x}
    \text{ for any } k \in [K]\;\text{and}\; \zb \in \Zcal.
\end{equation}
\end{definition}
\begin{definition}[Sufficient Nonlinearity]\label{def:suff_nonlin}
Let $\boldsymbol{f}: \mathcal{Z} \rightarrow \mathcal{X}$ be a $C^2$ diffeomorphism. For all $k \in [K]$, let $ B^{2}_{k \leq} := B^{2}_{k} \cap \{(i_1, i_2) | i_2 \leq i_1\}$. $\boldsymbol{f}$ is said to satisfy \emph{sufficiently nonlinearity} if $\forall \boldsymbol{z} \in \mathcal{Z}$ the following matrix has full column-rank:
\begin{equation}
\begin{aligned}
     \Wb(\zb) := \left [\left [D_{i}\boldsymbol{f}(\boldsymbol{z})\right ]_{i \in B_{k}} \left [D^{2}_{i, i'}\boldsymbol{f}(\boldsymbol{z}) \right ]_{(i, i') \in B^{2}_{k \leq}}\right ]_{k \in [K]}
\end{aligned}
\end{equation}
\end{definition}
We now state our result.
\begin{theorem}
\label{theorem:unif_lachap}
Let $\fb: \mathcal{Z} \rightarrow \mathcal{X}$ be a $C^2$ diffeomorphism. If $\fb$ satisfies additivity (Def.~\ref{def:additivity}) and sufficient nonlinearity (Def.~\ref{def:suff_nonlin}) then  $\fb$ has at most $\first$ order interactions across slots (\cref{def:2nd-order}), satisfies sufficient independence (\cref{def:1st_order_suff_ind}), and satisfies interaction asymmetry (\cref{as:interac_asym}) for all equivalent generators (\cref{def:equiv-gen}) for $n=1$.
\begin{proof}
We note that $\fb$ having at most first order interactions across slots is equivalent to having a block-diagonal Hessian for every observed component. Such functions were proven to be equivalent to additive functions in~\citet{lachapelle2024additive}. Furthermore, sufficient independence is clearly implied by sufficient nonlinearity as if all columns of the matrix $\Wb(\zb)$ are linearly independent, then blocks 
$\left [D_{i}\boldsymbol{f}(\boldsymbol{z})\right ]_{i \in B_{k}} \left [D^{2}_{i, i'}\boldsymbol{f}(\boldsymbol{z}) \right ]_{(i, i') \in B^{2}_{k \leq}}$ will have non-intersecting columns spaces for all $k \in [K]$ and will thus satisfy sufficient independence (Def. \ref{def:1st_order_suff_ind}. Consequently, the only thing we need to show is that sufficient nonlinearity (Def.~\ref{def:suff_nonlin}) implies interaction asymmetry (Assm. \ref{as:interac_asym}) for all equivalent generators (\ref{def:equiv-gen}). 
\newline
\newline
Assume for a contradiction that sufficient nonlinearity (Def.~\ref{def:suff_nonlin}) did not imply interaction asymmetry (Assm. \ref{as:interac_asym}) for all equivalent generators (\ref{def:equiv-gen}) with $n=1$. This would imply that there exists an equivalent generator to $\fb$ denoted ${\bar \fb}$ defined in terms of a slot-wise linear function ${\boldsymbol{h}}$:
\begin{equation}
{\bar \fb} = \fb \circ \hb
\end{equation}
such that ${\bar \fb}$ has at most first order interaction within some slot $B_{k}$. In other words, leveraging Lemma~\ref{lem:1st_order_w_m}, there exist a $(j, j') \in B^{2}_{k}$ and a $\vz \in \mathcal{Z}$ s.t.
\begin{align}
    0 = D^2_{j,j'}{\bar \fb}(\vz) = \mW^\vf(\vh(\vz))\vm^\vh(\vz, (j,j'))\,,
\end{align}
Because $\mW^\vf(\vh(\vz))$ is assumed to be full rank by sufficient nonlinearity (Def.~\ref{def:suff_nonlin}), then in order for this equation to hold $\vm^\vh(\vz, (j,j'))$ must be zero. Note, however, that by construction $\hb$ is defined slot-wise such that $z_j, z_j'$ map to the same slot $\hb_{B_{k}}$. By construction, if two $z_j, z_j'$ affect the same slot $\hb_{B_{k}}$ then $\vm^\vh(\vz, (j,j'))$, cannot be zero. Thus, we obtain a contradiction.
\end{proof}
\end{theorem}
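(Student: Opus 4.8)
The plan is to establish the three conclusions in sequence, noting that the first two are essentially bookkeeping and the third is where the actual work lies. I would first record that \emph{$\fb$ has at most \first\ order interactions across slots}: from additivity $\fb(\zb)=\sum_k\fb^k(\zb_{B_k})$, so for $i\in B_i$, $j\in B_j$ with $i\neq j$ one has $D^2_{i,j}\fb(\zb)=\sum_k D^2_{i,j}\fb^k(\zb)=\zerob$, since no summand depends on variables from two distinct slots; equivalently every $f_l$ has block-diagonal Hessian, which \citet{lachapelle2024additive} showed characterizes additive functions, so this matches \cref{def:2nd-order} directly. Next, for \emph{sufficient independence} (\cref{def:1st_order_suff_ind}) I would observe that the matrix $[D^2_{i,i'}\fb(\zb)]_{(i,i')\in B^2_k}$ there differs from $[D^2_{i,i'}\fb(\zb)]_{(i,i')\in B^2_{k\leq}}$ in \cref{def:suff_nonlin} only by repeated columns (symmetry of second derivatives), hence has the same range and rank, and likewise for the matrix stacked over all $k$. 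Under sufficient nonlinearity all columns of $\Wb(\zb)$ are linearly independent, so the per-slot Jacobian block, the per-slot reduced-Hessian block, and the blocks of different slots have pairwise only-trivially-intersecting column spaces; thus the rank of the full matrix equals the sum of the per-block ranks, which is exactly \cref{def:1st_order_suff_ind}.

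The substantive part is \emph{interaction asymmetry for all equivalent generators} ($n=1$). Fix an equivalent generator $\bar\fb$, i.e.\ $\bar\fb=\fb\circ\hb$ with $\hb(\wb)=(\Mb_1^{-1}\wb_{B_1},\dots,\Mb_K^{-1}\wb_{B_K})$ and each $\Mb_k$ invertible. Condition (i) of \cref{as:interac_asym} holds for $\bar\fb$ because a block-diagonal linear change of variables preserves block-diagonality of Hessians: $\nabla^2\bar f_l(\wb)=\Nb^\top\nabla^2 f_l(\hb(\wb))\Nb$ with $\Nb=\mathrm{diag}(\Mb_1^{-1},\dots,\Mb_K^{-1})$ block-diagonal. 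For condition (ii), I would assume toward a contradiction that some slot $B_k$ of $\bar\fb$ splits as $B_k=A\cup B$ with $A,B$ nonempty and $D^2_{j,j'}\bar\fb(\wb^*)=\zerob$ for all $j\in A$, $j'\in B$ at some point $\wb^*$. Applying \cref{lem:1st_order_w_m} and using that $\hb$ is linear and slot-wise (so $D^2_{j,j'}\hb=\zerob$ and $D_j\hb_i$ is supported on $i\in B_k$ for $j\in B_k$), this reduces to $\sum_{(i_1,i_2)\in B^2_k}D^2_{i_1,i_2}\fb(\hb(\wb^*))\,N_{i_2,j'}N_{i_1,j}=\zerob$, where $N:=\Mb_k^{-1}$. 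Regrouping by unordered index pairs and invoking the linear independence of $\{D^2_{i_1,i_2}\fb(\hb(\wb^*))\}_{(i_1,i_2)\in B^2_{k\leq}}$ (sufficient nonlinearity) forces $N_{i,j'}N_{i,j}=0$ for every $i$ and $N_{i_1,j'}N_{i_2,j}+N_{i_2,j'}N_{i_1,j}=0$ for $i_1\neq i_2$. A short elementary argument then closes it: fixing $j\in A$, $j'\in B$, if $N_{i_1,j}\neq0$ and $N_{i_2,j'}\neq0$ the first family of relations gives $N_{i_1,j'}=N_{i_2,j}=0$, hence $i_1\neq i_2$, and then the second relation reduces to $N_{i_2,j'}N_{i_1,j}=0$, a contradiction; therefore the $j$-th or the $j'$-th column of $N=\Mb_k^{-1}$ vanishes, contradicting invertibility. (The one-dimensional-slot case $A=B=\{i\}$ is covered uniformly, since then already $N_{i,i}^2=0$ contradicts $N_{i,i}\neq0$.)

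The main obstacle I anticipate is exactly this last contradiction: turning ``$D^2_{j,j'}\bar\fb$ vanishes on $A\times B$'' into a constraint on $\Mb_k^{-1}$ and realizing that the linear independence of the Hessian columns of $\fb$ must be used in its full ordered-pair form on $B^2_k$ — not merely along the diagonal — in order to kill the rank-one coefficient matrix $N_{:,j'}N_{:,j}^\top$. The excerpt compresses this to ``$\vm^{\hb}$ must be zero, which cannot happen by construction''; the column argument sketched above is what makes that rigorous. Everything else — the first two conclusions and the verification of condition (i) — is routine.
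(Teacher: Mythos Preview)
Your proposal is correct and follows the same three-step structure as the paper's proof: additivity $\Leftrightarrow$ block-diagonal Hessian, full column rank $\Rightarrow$ the rank decomposition of \cref{def:1st_order_suff_ind}, and a contradiction argument via \cref{lem:1st_order_w_m} for part (ii) of \cref{as:interac_asym}. The only notable difference is that your final step is more careful than the paper's: the paper asserts that full rank of $\mW^{\fb}$ forces $\vm^{\hb}=0$, which is not literally true since $\mW^{\fb}$ contains repeated columns (from $D^2_{i_1,i_2}\fb=D^2_{i_2,i_1}\fb$); you correctly pass to the unordered-pair basis, obtain the symmetrized constraints $N_{i,j'}N_{i,j}=0$ and $N_{i_1,j'}N_{i_2,j}+N_{i_2,j'}N_{i_1,j}=0$, and derive from these that a column of $\Mb_k^{-1}$ must vanish. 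This is precisely what is needed to make the paper's ``cannot be zero by construction'' rigorous.
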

\section{Transformers for Interaction Regularization}
\label{sec:addit_det_trans}
Each layer of a Transformer~\citep{vaswani2017attention} consist of two main components: an MLP sub-layer and an attention mechanism. Notably, in the MLP sub-layer, MLPs are applied separately to each slot or pixel query and their outputs are then concatenated. Further, additional layer normalization operations~\citep{ba2016layer} are typically used in Transformers but are also separately applied to each slot or pixel query. Thus, the only opportunity for interaction between slots in a Transformer occurs through the attention mechanism. Our focus in this work is on the cross-attention mechanism, opposed to the alternative self-attention. As noted in~\cref{section:method}, cross-attention takes the form:
\begin{align}
\label{eq:key_value_query_app}
\Kb&=\Wb^{K}[\hat\zb_{B_{k}}]_{k\in[K]},  &
\Vb&=\Wb^{V}[\hat\zb_{B_{k}}]_{k\in[K]},  &
\Qb&=\Wb^{Q}[\boldsymbol{o}_{d}]_{d\in[d_x]},
\\[0.75em]
\label{eq:cross_attn}
\Ab_{d, k}&=\frac{\exp\left(\Qb_{:,d}^\top\Kb_{:,k}\right)}{\sum_{l \in [K]}\exp\left(\Qb_{:,d}^\top\Kb_{:,l}\right)},  &
\bar\xb_{d}&=\Ab_{d, :}\Vb^{\top}, &
{\hat x_{d}}&=\psi(\bar \xb_{d})\,.
\end{align}

where $\Kb_{:, k}, \Vb_{:, k} \in \mathbb{R}^{d_{q}}$,  $\Wb^{K}, \Wb^{V} \in \mathbb{R}^{d_{q}\times |B_{k}|}$ for query dimension $d_{q}$. Further, $\boldsymbol{o}_{d} \in \mathbb{R}^{d_{o}}$, $\Qb_{:,l} \in \mathbb{R}^{d_{q}}$, $\Wb^{Q} \in \mathbb{R}^{d_{q}\times d_{o}}$, where $d_{o}$ is the dimension of a pixel coordinate vector, and  $\psi:\mathbb{R}^{d_{q}} \to \mathbb{R}$.

\textbf{Additional Details.} In \cref{eq:cross_attn}, we do not include the scaling factor $d_{q}^{-\frac{1}{2}}$ for $\Ab_{d, k}$, that is typically used as it does not affect our arguments below. We do, however, include it in our experiments. Further, when $\xb$ is an RGB image, ${\hat x_{d}}$ will not be a scalar but will instead be a vector in $\mathbb{R}^{3}$ since each pixel has $3$ color channels.  Additionally, in our experiments, multi-head attention is used. In this case, slot keys and values and pixel queries are partitioned into $h$ sub-vectors. \cref{eq:key_value_query_app,eq:cross_attn} are then applied separately to each resulting sub-latent, and the resulting outputs are concatenated. When using multiple layers of cross-attention, as we do in our experiments, $\psi$ is only applied at the last layer and vectors $\boldsymbol{o}_{d}$ for a subsequent layer are defined as the vectors $\bar\xb_{d}$ from the prior layer. \cref{eq:key_value_query_app,eq:cross_attn} are then repeated. We discuss how these additional caveats are dealt with empirically when implementing $\mathcal{L}_{\text{interact}}$ below in~\cref{sec:interac_reg_app}, however, they do not affect our formal argument regarding regularizing interactions in~\cref{sec:cross_att_jacobian}. We discuss how the vectors $\boldsymbol{o}_{d}$ in \cref{eq:key_value_query_app} and function $\psi$ in \cref{eq:cross_attn} are implemented in our experiments in~\cref{sec:app_exp}.

\subsection{Jacobian of Cross-Attention Mechanism}
\label{sec:cross_att_jacobian}
Our goal is to show that if $\Ab_{d, k}$ in equation is $0$, then partial derivative of~\cref{eq:cross_attn,eq:key_value_query_app} w.r.t slot $\hat{\zb}_{B_k}$, i.e, $\frac{\partial \hat{x}_d}{\partial \hat{z}_{B_k}}$ will also be zero. This would then imply that if $\Ab_{d, :}$ is non-zero for at most one slot $k$ for every $d \in [d_x]$, and every $\hat \zb \in \hatZsupp$, then slots do not interact in the sense of \cref{def:0th_order_interaction}, since all such derivative products $\frac{\partial \hat{x}_d}{\partial \hat{\zb}_{B_k}}\frac{\partial \hat{x}_d}{\partial \hat{\zb}_{B_l}}$ for $l \neq k$ are zero. To this end, we are interested in computing the derivative:
\begin{align}
    \frac{\partial \hat{x}_d}{\partial (\hat{\zb}_{B_m})_r}
    = \partial_i \psi(\bar{\xb})
    \frac{\partial (\bar{\xb}_d)_i}{\partial (\hat{\zb}_{B_m})_r}
\end{align}
where we here and from now on use the  convention that we sum over every index that appears only on one side. To evaluate this we decompose the terms
\begin{align}
    (\bar{\xb}_d)_i
    =
    \Ab_{d,k}\Vb_{i, k}
    =\Ab_{d,k}\Wb_{i, j}^V(\hat{\zb}_{B_k})_j. 
\end{align}
We set $\Mb_{d,:}=\mathbf{o}_d^\top (\Wb^Q)^\top \Wb^K$
so that 
\begin{align}
    \Qb_{:,d}^\top\Kb_{:,k}=\Mb_{d,i}(\hat{\zb}_{B_k})_i.
\end{align}
This implies that
\begin{align}
     \frac{\partial}
    {\partial (\hat{\zb}_{B_m})_i} \exp(\Qb_{:,d}^\top\Kb_{:,k})
    = \Mb_{d,i}\delta_{km} \exp(\Qb_{:,d}^\top\Kb_{:,k})
\end{align}
where $\delta$  is the Kronecker-Delta (and here no summation over $k$ or $d$ is done). This implies using the product rule and the chain rule that
\begin{align}
    \frac{\partial \Ab_{d,k}}
    {\partial (\hat{\zb}_{B_m})_i}
    = \Mb_{d, i}\delta_{k,m} \Ab_{d,k}
    - \Mb_{d, i}\Ab_{d,k}\Ab_{d,m}.
\end{align}
Plugging this together we get
\begin{align}
\label{eq:final_express_cross_att_jac}
\begin{split}
   \frac{\partial \hat{x}_d}{\partial (\hat{\zb}_{B_m})_r}
    &= \partial_i \psi(\bar{\xb})
    \frac{\partial (\bar{\xb}_d)_i}{\partial (\hat{\zb}_{B_m})_r}
   \\
   &= \Ab_{d,m}\Wb_{i,r}^V\partial_i \psi(\bar{\xb})
    +\partial_i \psi(\bar{\xb}) \Wb_{i,j}^V(\hat{\zb}_{B_k})_j\frac{\partial \Ab_{d,k}}{\partial( \hat{\zb}_{B_m})_r}
   \\
   &=
    \Ab_{d,m}\Wb_{i,r}^V\partial_i \psi(\bar{\xb})
    +\partial_i \psi(\bar{\xb}) \Wb_{i,j}^V(\hat{\zb}_{B_k})_j(
    \Mb_{d, r} \delta_{k,m}\Ab_{d,k}-
    \Mb_{d,r}\Ab_{d,k}\Ab_{d,m})
    \\
    &=  \Ab_{d,m}\partial_i \psi(\bar{\xb})
    (\Wb_{i,r}^V+\Wb_{i,j}^V (\hat{\zb}_{B_m})_j
    \Mb_{d,r})- \partial_i \psi(\bar{\xb}) \Wb_{i,j}^V
    (\hat{\zb}_{B_k})_j\Mb_{d,r}\Ab_{d,k}\Ab_{d,m}
    \end{split}
\end{align}

From this, we can see that if $\Ab_{d, m} = 0$, then the partial derivative  $\frac{\partial \hat{x}_d}{\partial \hat{\zb}_{B_m}}$, will indeed be zero as $\Ab_{d, m}$ scales both terms in the last line of~\cref{eq:final_express_cross_att_jac}.

\subsection{Interaction Regularizer}
\label{sec:interac_reg_app}
Based on~\cref{sec:cross_att_jacobian}, we propose to regularize the interaction in a Transformer by minimizing the sum of all pairwise products $\Ab_{l, j}\Ab_{l, k}$, where $j\neq k$. More specifically, we minimize the following loss:
\begin{equation}
\label{eq:l_interac_app}
\mathcal{L}_{\text{interact}} := \EE \sum_{l \in [d_x]}\sum_{j \in [K]}
\sum_{k=j+1}^K
\Ab_{l, j}(\hat\zb)\Ab_{l, k}(\hat\zb)
\end{equation}
where $\Ab_{l, k}(\hat\zb)$ is used to indicate the input dependence of attention weights on latents $\hat\zb$. $\mathcal{L}_{\text{interact}}$ is a non-negative quantity which will be zero if and only if a matrix has at most one non-zero for each row~\citep{brady2023provably}.

\begin{wrapfigure}[9]{r}{0.5\textwidth}
   \vspace{-0.35cm}
  \begin{center}
    \includegraphics[width=\linewidth]{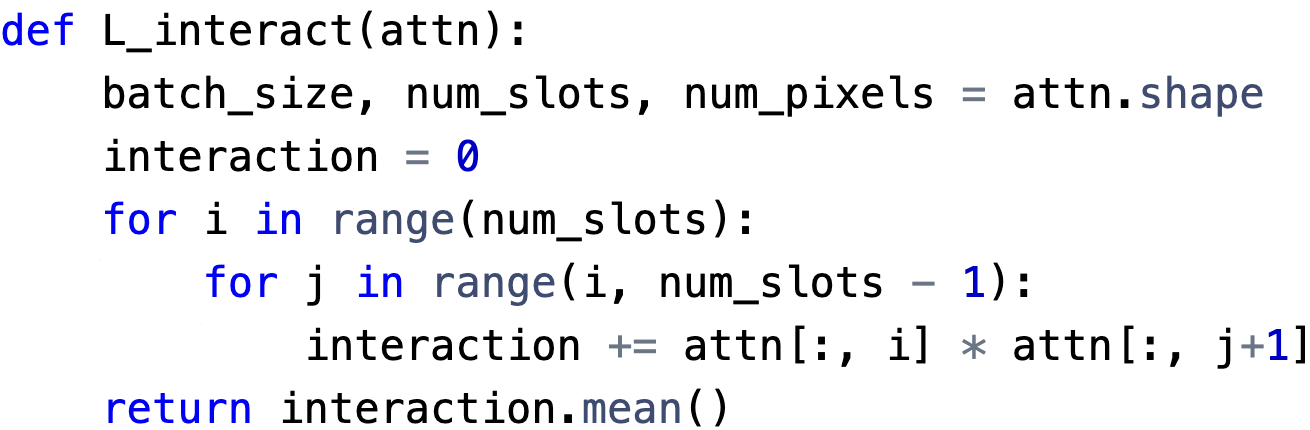}
  \end{center}
  \vspace{-0.3cm}
  \caption{PyTorch code to compute $\mathcal{L}_{\text{interact}}$.}
   \label{fig:code}
\end{wrapfigure}

Code to compute $\mathcal{L}_{\text{interact}}$ for a batch of attention matrices can be seen in~\cref{fig:code}. We note that when using multiple attention heads, we first sum the attention matrices over all heads to ensure consistent pixel assignments across different heads. When using multiple layers, we also sum the attention matrices over each layer, for the same reason. $\mathcal{L}_{\text{interact}}$ is then computed on the resulting attention matrix.

\textbf{Regularizing Higher Order Interactions.}
We note that while we show formally that $\mathcal{L}_{\text{interact}}$ directly regularizes for $\first$ order interactions, we do not explicitly address regularizing for higher order interactions, i.e., for $n \geq 2$. We conjecture there is a relationship between regularizing $\mathcal{L}_{\text{interact}}$ and higher order interactions but that it is less direct than the $\first$ order case. We leave it to future work to explore these connections further, as well as alternative, computationally efficient regularizers which can more directly penalize higher order interactions.

\textbf{Computational Efficiency.}
We note that regularizing with $\mathcal{L}_{\text{interact}}$ adds minimal additional computational overhead since attention weights are already computed at each forward pass through the model, and, moreover can be easily optimized using gradient descent. This is in contrast to~\citet{brady2023provably} which required computing the Jacobian of the decoder $\hat \fb$ at each forward pass and then optimizing it using gradient descent. This results in second-order optimization which is computationally intractable for high-dimensional data such as images~\citep{brady2023provably}.

\section{Extended Related Work}\label{sec:app_rel_work}
\subsection{Theory}\label{sec:app_rel_work_theory}
\textbf{Relationship Between \cref{principle:interaction_asymmetry} and Other Principles.} The interaction asymmetry~\cref{principle:interaction_asymmetry}, ``parts of the same concept have more complex interactions, than parts of different concepts'', is intuitively similar to several prior principles explored for learning concepts. For example, the prior works of~\citet{schmidhuber1990towards,reynolds2007computational,Zacks2011PredictionEA,baldwin2008segmenting} on disentangling events/sub-task (e.g., ``making coffee'', ``driving to work''), of~\citet{greff2015binding,hyvarinen2006learning} on disentangling objects in an image, and of~\citet{schmidhuber1992learning} are all essentially based on the principle that parts of same concept are \emph{more mutually predictable} than parts of different concepts. Similarly,~\citet{hochreiter1999feature,jiang2022learning} implicitly use the idea that parts of same concept are \emph{more compressible} than different concepts. Research on complex networks commonly uses the idea that nodes from the same ``community'' interact more strongly than nodes from different communities~\citep{fortunato2016community}, which also resembles ideas from clustering that points from the same cluster have higher mutual information than points from different clusters~\citep{kraskov2005hierarchical}. This network-based framework was applied by~\citet{schapiro2013neural} as a model for grouping temporal events. Furthermore, in Gestalt Psychology, the law of common fate~\citep{Koffka1936PrinciplesOG,tangemann2021unsupervised} states that parts of the same object interact in the sense that they move together. Lastly,~\cite{greff2020binding} propose that objects do not interact much with their surroundings but have a strong internal structure. While these different ideas are intuitively similar to that of interaction asymmetry, they take on different formalizations. Moreover, these principles are generally used as high-level heuristics for designing a learning algorithm, and their theoretical implications for disentanglement and compositional generalization have, to the best of our knowledge, not been explored.

\textbf{Polynomial Decoders.}
As noted in \cref{sec:comp_gen_results}, \cref{as:interac_asym} implies that the cross-partial derivatives of the generator $\fb$ w.r.t.\ components from different slots will be finite-degree polynomials. This partially resembles the polynomial constraints imposed on $\fb$ by~\citet{ahuja2023interventional} for disentanglement. Importantly, however, \citet{ahuja2023interventional} assume that \emph{all} cross-partial derivatives of $\fb$ are polynomials such that the entire function $\fb$ is a finite-degree polynomial. In contrast, \cref{as:interac_asym} constrains the form of cross-partial derivatives \emph{across} slots to be polynomial, but does \emph{not} constrain the form of cross-partial derivatives \emph{within} the same slot. In other words, \cref{as:interac_asym} only constrains the interactions across slots, while \citet{ahuja2023interventional} constrains all possible interactions. This is an important distinction since the former gives rise to much more flexible generators than the latter (see \cref{eq:characterisation_block_n_main}).

\subsection{Method and Experiments}
\label{sec:app_rel_work_method}

\textbf{VAE Losses in Object-Centric Models.}
Prior work of~\citet{wang2023slot} also applies a VAE loss to an unsupervised object-centric learning setting. However, while we minimize $\mathcal{L}_{\text{KL}}$ directly on the inferred slots in $\hat \zb$ given by our Transformer encoder, \citet{wang2023slot} minimize  $\mathcal{L}_{\text{KL}}$ on an intermediate representation which is then further processed to yield $\hat \zb$. Furthermore, the focus of~\citet{wang2023slot} is on scene generation and not penalizing the capacity of $\hat \zb$. Additionally, \citet{kori2024identifiable} explore a loss for object-centric learning resembling a VAE loss, though they aim to enforce a certain probabilistic structure on $\hat \zb$ implied by their theoretical disentanglement result, as opposed to penalizing latent capacity.

\textbf{Inductive Bias Through Explicit Supervision.}
Recently, many works have shown remarkable empirical success in disentangling~\citep{kirillov2023segment,ravi2024sam} and composing~\citep{ramesh2021zero,ramesh2022hierarchical,saharia2022photorealistic,ruiz2023dreambooth,brooks2023instructpix2pix} visual concepts in images on web-scale data. These works achieve this through explicit supervision via segmentation masks or natural language descriptions of each concept, as opposed to constraints on the generative process in~\cref{eqn:generative_process}. Notably, however, many species in human's evolutionary lineage disentangle and compose concepts in sensory data \emph{without} using explicit supervision like natural language~\citep{Tolman1948CognitiveMI,Behrens2018WhatIA,lecun2022path, summerfield2022natural}. This suggest the existence of a self-supervised coding mechanism for disentanglement and compositional generalization that is still lacking in current machine learning models. The present work aims to make theoretical and empirical progress towards such a mechanism.

\textbf{Relation Between a Transformer Regularized with $\mathcal{L}_{\text{interact}}$ and Prior Works.} \citet{goyal2019recurrent} proposed recurrent independent mechanisms, a Transformer-style architecture aimed at enforcing a ``modular'' structure. Contrary to our work, \citet{goyal2019recurrent} do not regularize for modularity. Instead, they posit that it may emerge from ``competition'' induced by an attention mechanism, which resembles the motivation for the masking schemes used in earlier object-centric decoders~\citep{greff2017neural,von2020towards,engelcke2019genesis,burgess2019monet,greff2019multi}. Similarly, \citet{lamb2021transformers} propose an alternative Transformer architecture aimed at enforcing modularity, which also tries to enforce competition using a mechanism similar to \citet{goyal2019recurrent}. More recently, \citet{vani2024sparo} proposed a Transformer component aimed at yielding disentanglement by processing a Transformer embedding into different slots using separate attention heads for each slot. While these works are similar to ours in that they aim to learn disentangled representations of concepts using a Transformer-style architecture, they are based on architectural changes to a Transformer, whereas we use a standard cross-attention Transformer decoder and regularize it explicitly towards having a modular structure using $\mathcal{L}_{\text{interact}}$.

\textbf{Alternative Object Disentanglement Metrics.} Metrics for object disentanglement, which rely on the latent representation, as opposed to the decoder, have also been proposed in prior works~\citep{locatello2020object,dittadi2021generalization,brady2023provably,lachapelle2024additive}. For example, \citet{brady2023provably} propose the slot identifiability score (SIS) in which the R2 score is computed from the predictions of a model fit between each inferred slot and the best matching ground-truth slot. A second predictor is then fit between the second-best matching slot to determine if inferred slots contain information about more than one ground-truth slot. We found this metric to yield inconsistent results on CLEVR6, which we hypothesize was due to instability in resolving the permutations when matching inferred and ground-truth slots. This motivated us to focus on alternative disentanglement metrics which are computed using only the decoder, rather than the latent representation. Such metrics also align with our formal disentanglement definition~(\cref{def:slot_identifiability}) which is formulated in terms of the decoder, in contrast to the definition of~\citet{brady2023provably}, which is formulated purely in terms of the slot representations.

\clearpage
\section{Extended Discussion}\label{sec:app_discussion}
\subsection{Theory}
\label{sec:app_discuss_theory}

\textbf{Defining Interactions via the Latent Distribution.}
\looseness-1 In~\cref{sec:principle}, we focused on formalizing interactions between latents via assumptions on the generator $\fb$. 
However, one could also consider formalizing such interactions via assumptions on the latent distribution or, more specifically, on $\log{p_{\zb}}$. 
For example, analogous to~\cref{def:2nd-order}, we could say that slots have at most \first\ order interaction within $\log{p_{\zb}}$ if $D^2_{i,j}\log{p_{\zb}}(\zb) = 0$ for any components $i, j$ from different slots. This would imply that $\log{p_{\zb}}$ is an additive function w.r.t.\ different slots, which is equivalent to stating that slots are statistically independent. Thus, interaction asymmetry for $n=1$, formalized in terms of $\log{p_{\zb}}$, would mean that different slots are statistically independent, while components within the same slot are \emph{uniformly} statistically dependent, i.e., for all $\zb \in \Zcal$, $D^2_{i,j}\log{p_{\zb}}(\zb) \neq 0$ if $i$ and $j$ are components from the same slot. In the case of one-dimensional slots, such assumptions resemble independent component analysis~\citep[ICA;][]{hyvarinen2001}
, while for multi-dimensional slots, they resemble independent subspace analysis~\citep[ISA;][]{cardoso1998,hyvarinen2000emergence,Theis2006TowardsAG}, in which statistical independence is assumed across but not within slots. Higher order interactions (i.e., $n > 1$) can be analogously defined on $\log{p_{\zb}}$ which would capture increasingly more complex statistical dependencies between latents. However, formalizing interaction asymmetry at the level of $\log{p_{\zb}}$ alone will, in general, not be sufficient for disentanglement or compositional generalization, since, as explained in~\cref{sec:background}, this also requires assumptions on the generator~$\fb$. However, if 
$\fb$ is also restricted, such assumptions could potentially be useful. We leave it for future work to investigate this further.

\textbf{Requirements on the Observed Dimension.}
We note that satisfying our assumptions on the generator $\fb$ in~\cref{sec:principle} and~\cref{sec:theory} requires that the observed dimension $d_x$ is greater than the latent dimension $d_z$. Moreover, the required $d_x$ will scale as a function of the number of latent slots $K$, the slot dimensions $|B_k|$, and the order of interaction across slots $(n+1)$. For example, for functions with at most $\zeroth$ order interaction across slots, satisfying irreducibility~(\cref{def:irreducibility}), and thus interaction asymmetry~(\cref{as:interac_asym}), requires that $d_x \geq \sum_{k \in [K]}|B_{k}| + 1$. For functions with at most $\first$ order interactions across slots, ensuring that the rank condition in sufficient independence~(\cref{def:1st_order_suff_ind}) is met requires that $d_x \geq \sum_{k \in [K]}\frac{|B_{k}|(|B_{k}|+1)}{2} + d_{z}$. For functions with at most $\second$ order interactions, satisfying this condition~(\cref{def:2nd_order_suff_ind}) requires $d_x \geq \sum_{k \in [K]}\frac{|B_{k}|(|B_{k}|+1)(|B_{k}|+2)}{6} + \frac{d_{z}(d_{z}+1)}{2} + d_{z}$. We note that we are interested in modelling high-dimensional sensory data, such as images, in which the observed dimension $d_x$ will be much greater than the latent dimension $d_z$. Thus, for practical cases of interest, we expect these requirements on $d_x$ to be met.

\textbf{Concepts Potentially not Captured by Interaction Asymmetry.}
For certain concepts, it is not obvious if interaction asymmetry will always hold. For example, consider object attributes such as the $x$-$y$-position of an object, which can be modelled by one-dimensional slots. For such concepts, the interaction within a slot, i.e., the interaction of each latent component w.r.t.\ itself, should, intuitively, be a simple function. It is thus not obvious if the interaction within each slot will necessarily be more complex than interactions across slots, such that $\fb$ may not satisfy interaction asymmetry~(\cref{as:interac_asym}). Additionally, it is not immediately clear how interaction asymmetry can be applied to more abstract concepts which are not directly grounded in sensory data such as the concept of ``democracy'' or the concept of a ``function'' in mathematics.

\subsection{Method and Experiments}
\label{sec:app_discuss_method}

\textbf{Applying our Method to Other Types of Concepts.}
One important direction for future work is to apply our method to data consisting of different types of concepts, such as object-attributes or temporal events. For object attributes, our same empirical framework can be applied, but with the additional caveat that Transformers are permutation invariant, while object attributes do not possess the same permutation invariance as objects. To this end, methods such as adding a positional encoding to each slot could be used to address this~\citep{singh2023neural}. Additionally, the problem of disentangling temporal events in image sequences can also be modelled naturally using a slot-based framework~\citep{kipf2019compile,gopalakrishnan2023unsupervised}. In this case, the ``tokens'' that a slot encoder (e.g., a Transformer or Slot Attention) operates on are not pixels processed by a CNN, as in our current model. Instead, they would correspond to individual images in a sequence which are each mapped into representation tokens. These tokens can then be mapped into slots using a slot encoder, and then decoded back to output space using the same Transformer decoder in our current model. In this case, however, the queries for the decoder would not correspond to individual pixels but instead to images in the temporal sequence.

\textbf{Limitations of $\mathcal{L}_{\text{disent}}$.}
\looseness-1 
One potential issue with $\mathcal{L}_{\text{rec}}$ is that for real-world data, reconstructing every pixel in an image exactly, may not be necessary and could lead to overly prioritizing tasks irrelevant information in $\hat \zb$ such as the background~\citep{seitzer2023bridging}. It would thus be interesting to see if our theory and method could be extended to a self-supervised setting, as explored by~\citet{seitzer2023bridging}, in which exact invertibility is not strictly necessary. Regarding $\mathcal{L}_{\text{KL}}$,  in addition to a model having inferred latent dimensionality $d_{\hat z}$ equal to the ground-truth dimension $d_{z}$, our theory also requires that the inferred slot dimensions equal the ground-truth slot dimensions. While $\mathcal{L}_{\text{KL}}$ explicitly regularizes for the former, it does not directly regularize for the latter. More specifically, $\mathcal{L}_{\text{KL}}$ could, in principle, penalize latent capacity by putting information from all, e.g., objects, in one slot (assuming the slot size is large enough), as opposed to distributing this information over components from different slots. Despite this, we found that this failure mode did not occur in our experiments. Another potential issue with $\mathcal{L}_{\text{KL}}$ is that it aims to enforce statistically independent latents which could lead to suboptimal solutions if the ground-truth latents exhibit strong statistical dependencies. Lastly, regarding $\mathcal{L}_{\text{interact}}$, a shortcoming of this regularizer is that, while it directly regularizes $\first$ order interactions~(\cref{sec:cross_att_jacobian}), its connection to regularizing higher-order interactions is not as direct. Future work should thus aim to investigate this point further, both theoretically and empirically.

\subsection{Enforcing Theoretical Criteria Out-of-Domain}
\label{sec:ood_criteria}
As noted in \cref{section:method}, enforcing (i) invertibility and (ii) at most \nth\ order interactions across slots on $\hat \fb$, out-of-domain, i.e., globally on all of $\Zcal$, poses distinct practical challenges. We now discuss this in detail. To this end, we first discuss enforcing (ii) globally on $\Zcal$. 

\textbf{Restricting Interactions Globally.} The easiest way to enforce that $\hat \fb$ has at most \nth\ order interactions across slots on $\Zcal$ is to directly parameterize $\hat \fb$ to match the form of such functions for some~$n$ (\cref{eq:characterisation_block_n_main}). This is, for example, how at most $\first$ order interactions were enforced by~\citet{lachapelle2024additive}, i.e., by defining $\hat \fb$ to be an additive function (\cref{def:additivity}) on all of $\Zcal$. We found for higher order interactions, parameterizing $\hat \fb$ directly to match the form of~\cref{eq:characterisation_block_n_main} leads to training difficulties on toy data. Moreover, even if we could easily train such a model, this explicit form would pose an overly restrictive inductive bias when scaling to more realistic data. This motivated us to consider how to \textit{regularize} for (ii) rather than enforce it explicitly. The issue with this approach is that we only regularize the derivatives of $\hat \fb$ in-domain on $\hatZsupp$. Yet, enforcing structure on the derivatives of $\hat \fb$ on $\hatZsupp$ does not imply that same structure will be enforced on all of $\Zcal$. As noted in \cref{sec:comp_gen_results}, however, by knowing the behavior of the derivatives of $\hat \fb$ on $\hatZsupp$, we can infer their behavior everywhere on $\Zcal$. Thus, in principle, it should be possible to propagate the correct derivative structure learned by $\hat \fb$ locally on $\hatZsupp$, to all of $\Zcal$. Practically, however, it is not obvious how this can be done in an effective manner. Thus, properly addressing this challenge would require further methodological and empirical contributions, which are not within the scope of the present work.

\textbf{Enforcing Invertibility Globally.}
Additionally, even if $\hat \fb$ satisfies (ii) globally, we still must enforce (i) invertibility, globally. As noted in \cref{section:method}, it is not feasible to define $\hat \fb$ such that it is an invertible function from $\Zcal$ to $\Xcal$ by construction. This necessitated parameterizing the inverse of $\hat \fb$ with \emph{an encoder} $\hat \gb$ which was trained to invert the \emph{decoder} $\hat \fb$ via a reconstruction loss. Assuming that a decoder $\hat \fb$ satisfies (ii) globally, and is invertible on $\hatZsupp$, it is possible to show that $\hat \fb$ will be invertible on all of $\Zcal$ and thus generalize compositionally. The issue, however, is that our encoder~$\hat \gb$ is only trained to invert $\hat \fb$ on $\hatZsupp$ but not on unseen data from the rest of $\Zcal$. Consequently, even if $\hat \fb$ generalizes compositionally, an encoder $\hat \gb$ will not necessarily invert $\hat \fb$ out-of-domain, and can thus yield arbitrary representations $\hat \zb$ on such data. This ``encoder-decoder inconsistency'' was pointed out by~\citet{wiedemer2024provable}, who studied compositional generalization for decoders with at most \zeroth\ and \first\ order interactions. They proposed a loss which addresses this problem by first generating out-of-domain samples using $\hat \fb$, and then training the encoder $\hat \gb$ to invert $\hat \fb$ on this ``imagined'' data. This loss was shown to be ineffective for images consisting of more than 2 objects, however~\citep{wiedemer2024provable}. Consequently, scaling this loss, or exploring alternative losses for encoder-decoder consistency, remain open research questions that require a deeper investigation to be properly addressed.

For these reasons, the empirical aspects of this work focus on enforcing (i) and (ii) in-domain to achieve disentanglement on 
$\Zsupp$ (\cref{theorem:ident_result}). As highlighted above, however, our theory elucidates the core problems that need to be solved empirically to also achieve compositional generalization, thus giving a clear direction for future work.

\section{Experimental Details}\label{sec:app_exp}
\subsection{Data, Model, and Training Details}
\textbf{Data.}
The Sprites dataset used in~\cref{sec:experiments} was generated using the Spriteworld renderer~\citep{spriteworld19} and consists of 100,000 images of size $64\times64\times 3$ each with between 2 and 4 objects. The CLEVR6 dataset~\citep{johnson2017clevr,locatello2020object} consists of 53,483 images of size $128\times128\times 3$ each with between 2 and 6 objects. For Sprites, we use 5,000 images for validation, 5,000 for testing, and the rest for training, while for CLEVR6, we use 2,000 images for validation and 2,000 for testing.

\textbf{Encoders.}
All models use encoders which first process images using the same CNN of~\citet{locatello2020object}. When using a Transformer encoder, these CNN features are fed to a 5-layer Transformer which uses both self- and cross-attention with 4 attention heads. When using a Slot Attention encoder, we use $3$ Slot Attention iterations, and use the improved implicit differentiation proposed by~\citet{chang2022object}. Both the Transformer and Slot Attention encoders use learned query vectors, as opposed to randomly sampling queries. On Sprites, all models use $5$ slots, each with $32$ dimensions, while on CLEVR6, all models use $7$ slots, each with $64$ dimensions. When using a VAE loss, this slot dimension doubles since we must model the mean and variance of each latent dimension.

\textbf{Decoders.}
When using a Spatial Broadcast decoder~\citep{watters2019spatial}, we use the same architecture as~\citet{locatello2020object} across all experiments, using a channel dimension of $32$ for both datasets. When using a Transformer decoder, we first upscale slots to 516 dimensions by processing them separately using a 2-layer MLP, with a hidden dimension of 2064. We then apply a 2-layer cross-attention Transformer to these features which uses 12 attention heads. To obtain the vectors $\boldsymbol{o}_{l}$ in \cref{eq:key_value_query_main}, we apply a 2D positional encoding to each pixel coordinate. This vector is then mapped by a 2-layer MLP with a hidden dimension of 360 to yield $\boldsymbol{o}_{l}$, which has dimension 180. The function $\psi$ in \cref{eq:cross_attn_main} is implemented by a 3-layer MLP with a hidden dimension of 180, which outputs a 3-dimensional pixel ${\hat \xb}_{l}$ for each pixel $l$. This architecture does not rely on auto-regressive masking as in the work of~\citet{singh2021illiterate}.

\textbf{Training Details.}
We train all models on Spriteworld across 3 random seeds using batches of 64 for 500,000 iterations. For CLEVR6, we use batches of 32 and train for 400,000 iterations. In all cases, we use the Adam optimizer~\citep{kingma2015adam} with a learning rate of $5 \times 10^{-4}$ which we warm up for the first 30,000 training iterations and then decay by a factor of 10 throughout training. When training with $\beta\mathcal{L}_{\text{KL}}$ and $\alpha\mathcal{L}_{\text{interact}}$, we use hyperparameter weights of 0.05, which we found to work well across both datasets. We found much larger values could lead to training instability and, in some cases, insufficient optimization of $\mathcal{L}_{\text{rec}}$, while smaller values often led to insufficient optimization of the regularizers. We warm-up the value of $\alpha$ for the first 30,000 training iterations. Additionally, when training with $\alpha$ or $\beta$, we drop the value of the learning rate after 30,000 training iterations to $1 \times 10^{-4}$, which improved training stability. Lastly, on Sprites, we weight $\mathcal{L}_{\text{rec}}$ by a factor of 5, when training with $\alpha$ or $\beta$. 
\subsection{Metrics and Evaluation}
\label{section:app_metrics_eval}
\textbf{Computing ARI with Attention Scores.}
To compute the Adjusted Rand Index (ARI), each pixel must first be assigned to a unique model slot. To this end, prior works typically choose the slot with the largest attention score from either Slot Attention or the alpha mask of a Spatial Broadcast decoder~\citep{locatello2020object,seitzer2023bridging}. This approach can be problematic since the attention scores used are model-dependent, making a direct comparison of ARI across models challenging. Further, the relationship between attention scores and the pixels encoded in a model slot is somewhat indirect. As noted in~\cref{sec:experiments}, we consider an alternative and compute the ARI using the Jacobian of a decoder (J-ARI). Specifically, we assign a pixel $l$ to the slot with the largest $L_{1}$~norm for the slot-wise Jacobian~$D_{B_{k}}\hat f_{l}(\hat\zb)$. This can be done for any autoencoder and provides a more principled metric for object disentanglement since a decoder's Jacobian directly describes the pixels each slot encodes (assuming $\hat\fb$ and $\hat\gb$ invert each other).

\textbf{Evaluation.}
We select models for testing which had the highest average values for J-ARI and JIS (each of which take values from 0 to 1) on the validation set. These models were then evaluated on the test set, yielding the scores reported in~\cref{tab:your_table_label}.

\subsection{Additional Figures}\label{section:addit_figs}
In this subsection, we include 3 additional experimental figures. In~\cref{fig:vae_l_interac}, we compare the value of $\mathcal{L}_{\text{interact}}$ throughout training for a model with a Transformer encoder and decoder, trained using a VAE loss and a standard autoencoder loss on both Sprites and CLEVR6. We plot values over 3 random seeds; the shaded regions in the plots indicate one standard deviation from the mean. We find on Sprites (\textbf{A}) and CLEVR6 (\textbf{B}) that the VAE loss achieves much lower $\mathcal{L}_{\text{interact}}$ than the unregularized model. This provides a possible explanation for the solid object disentanglement often achieved by the VAE loss in~\cref{tab:your_table_label}.

In~\cref{fig:all_jacobians}, we compare slot-wise Jacobians for our model versus baseline models across both Sprites (\textbf{A}) and CLEVR6 (\textbf{B}). To create these plots, we normalize the partial derivatives across slots such that they only take values between $0$ and $1$. The colors associated with partial derivative values can be interpreted using the color bar at the bottom of (\textbf{A}). We only compute partial derivatives on the foreground pixels and set the derivatives of background pixels w.r.t\ each slot to $0$. We see that when regularizing interactions via our model, slots rarely affect the same pixels (i.e., interact) unnecessarily, while for unregularized models, multiple slots often affect the same pixels even when no interactions should occur, e.g., for images in Sprites (\textbf{A}).

In~\cref{fig:all_attn}, we compare decoder attention maps w.r.t.\ each slot for our model versus baseline models from~\cref{sec:experiments}, which also use a Transformer decoder. These maps, which indicate the slots each pixel attends to, are plotted for both Sprites (\textbf{A}) and CLEVR6 (\textbf{B}). We compute these values by taking the mean attention weight over decoder layers. Similar to~\cref{fig:all_jacobians}, we see that, in our model, pixels rarely unnecessarily attend to multiple slots. On the other hand, for unregularized models, pixels often attend to multiple slots in cases where no interactions between slots should occur.

\newpage 
\begin{figure}[t]
    \centering
    \includegraphics[width=\textwidth]{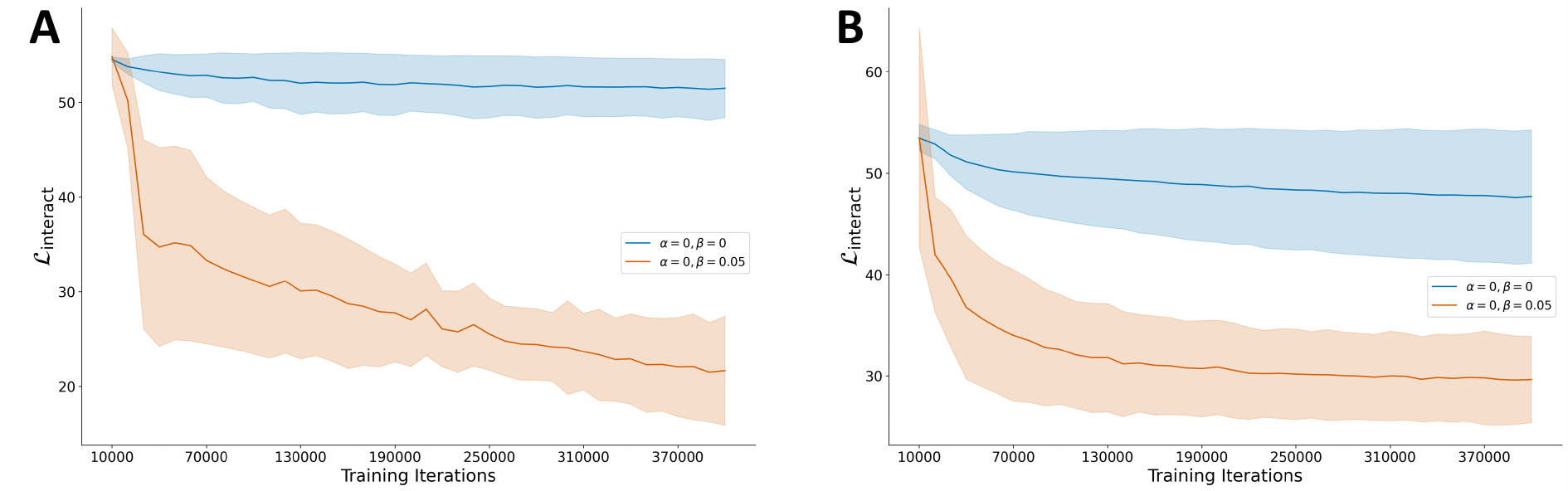}
    \caption{\looseness-1\textbf{Analysis of $\mathcal{L}_{\text{interact}}$ when using a VAE loss.} We plot $\mathcal{L}_{\text{interact}}$ for the first 400,000 training iterations for a Transformer autoencoder trained without regularization ($\alpha\!=\!0, \beta\!=\!0$) and with a VAE loss which does not explicitly optimize $\mathcal{L}_{\text{interact}}$ ($\alpha\!=\!0, \beta\!=\!0.05$).}
    \label{fig:vae_l_interac}
\end{figure}

\newpage

\begin{figure}[hbt!]
    \centering
    \includegraphics[width=\textwidth]{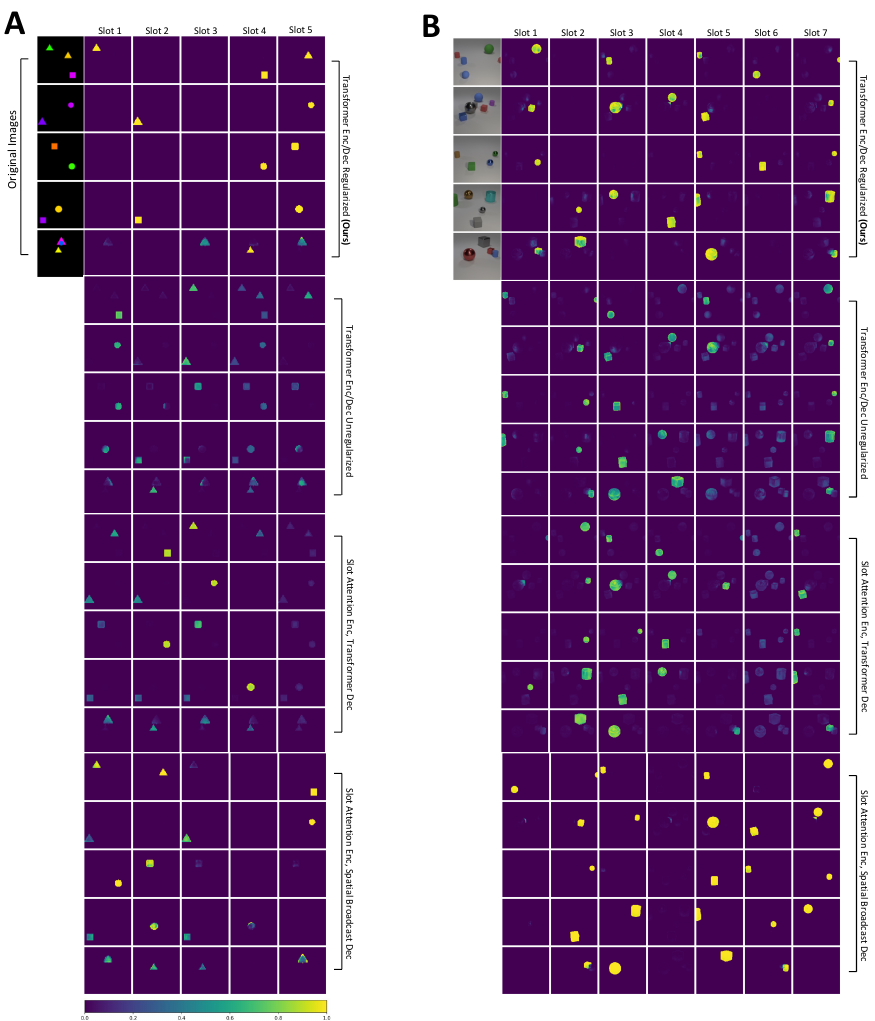}
    \caption{\textbf{Normalized Slot-wise Jacobians.} We plot the Jacobians w.r.t.\ each slot (columns) for 5 random test images (rows) from (\textbf{A}) Sprites and (\textbf{B}) CLEVR6  for our regularized Transformer model and the baseline models used in our experiments in~\cref{sec:experiments}.}
    \label{fig:all_jacobians}
\end{figure}

\newpage

\begin{figure}[hbt!]
    \centering
    \includegraphics[width=\textwidth]{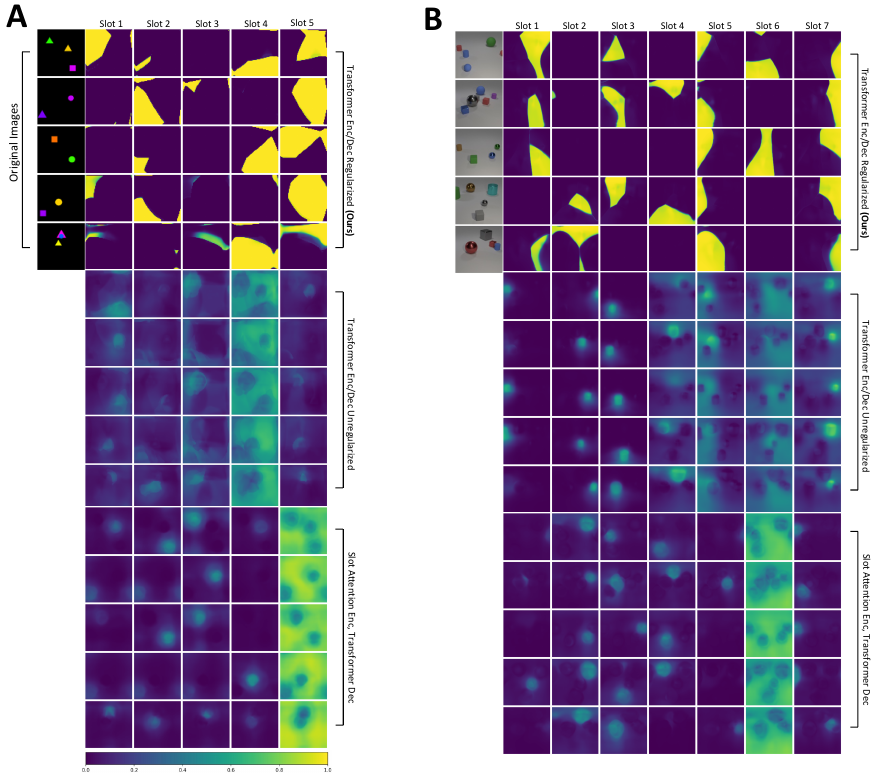}
    \caption{\textbf{Slot-wise Transformer Decoder Attention Maps.} We plot decoder attention maps w.r.t.\ each slot (columns) for 5 random test images (rows) from (\textbf{A}) Sprites  and (\textbf{B}) CLEVR6 for our regularized Transformer decoder and the baseline models in~\cref{sec:experiments} which also use a Transformer decoder.}
    \label{fig:all_attn}
\end{figure}
\end{document}